\documentclass[11pt,letterpaper]{article}

\usepackage[top=1in,bottom=1in,left=1in,right=1in]{geometry}

\usepackage{hyperref}       % hyperlinks
\usepackage{url}            % simple URL typesetting
\usepackage{nicefrac}       % compact symbols for 1/2, etc.
\usepackage{microtype}      % microtypography
\usepackage{xcolor}         % colors
\usepackage{pdfpages}
\usepackage[nottoc]{tocbibind}
\usepackage{comment}
\usepackage{url}
\usepackage{amsthm}
\usepackage{float}
\usepackage{booktabs}
\usepackage{bm}
\usepackage{float}
\usepackage{algorithm2e} 
\usepackage{enumitem}

\usepackage{microtype}
\usepackage{multirow}
\usepackage{rotating}
\usepackage{booktabs} % for professional tables
\usepackage{natbib}
\usepackage{amsmath,amsfonts,amsthm,bm,amssymb,mathtools}
\usepackage{enumitem}
\usepackage{nicefrac}   
\usepackage{tikz}
\usepackage{subfigure}
\usepackage{graphicx}
\usepackage{caption}
\usepackage{url}                  
\usepackage{nicefrac}     
\usepackage{microtype}          
\usepackage{listings}
\usepackage{algorithm2e} 
\usepackage{comment}
\usepackage{arydshln}
\usepackage{comment}
\usepackage{hyperref}
\usepackage{dsfont}
\usepackage{float}

\newtheorem*{rep@theorem}{\rep@title}
\newcommand{\newreptheorem}[2]{%
	\newenvironment{rep#1}[1]{%
		\def\rep@title{#2 \ref{##1}}%
		\begin{rep@theorem}}%
		{\end{rep@theorem}}}
\makeatother
\newtheorem{theorem}{Theorem}
\newreptheorem{theorem}{Theorem}
\newtheorem{corollary}[theorem]{Corollary}
\newtheorem{lemma}[theorem]{Lemma}
\newreptheorem{lemma}{Lemma}
\newtheorem{definition}{Definition}

 % real numbers
 % probability
\def\E{{\mathbb{E}}} % expectation 

\usepackage[textsize=tiny]{todonotes}

\DeclareMathOperator*{\argsup}{arg\,sup}
\DeclareMathOperator*{\argmin}{arg\,min}
\DeclareMathOperator*{\sign}{sign}

\newcommand{\lmod}{\left\lvert \left\lvert}
\newcommand{\rmod}{\right\rvert \right\rvert}

\newcommand{\sqbr}[1]{\left[ #1 \right]} % square bracket
\newcommand{\crbr}[1]{\left\{#1\right\}} % curly bracket 
\newcommand{\nrbr}[1]{\left( #1 \right)} % normal bracket 
\newcommand{\norm}[1]{\left\| #1 \right\|}

\newcommand{\ip}[2]{\langle #1, #2 \rangle}

\title{On Exact Solutions of the Inner Optimization Problem of Adversarial Robustness}

\author{
	Deepak Maurya \\ 
	Department of Computer Science\\
	Purdue University \\ 
	West Lafayette, Indiana, USA \\ 
	\texttt{dmaurya@purdue.edu} \\
        \and 
        Adarsh Barik \\ 
	Institute of Data Science\\
	National University of Singapore \\ 
	Singapore \\ 
	\texttt{abarik@nus.edu.sg} \\
	\and
	Jean Honorio \\ 
	Department of Computer Science\\
	Purdue University \\ 
	West Lafayette, Indiana, USA \\ 
	\texttt{jhonorio@purdue.edu} \\
}
\date{}
\begin{document}
\maketitle

\begin{abstract}
In this work, we propose a robust framework that employs adversarially robust training to safeguard the ML models against perturbed testing data. Our contributions can be seen from both computational and statistical perspectives. 
  Firstly, from a \textit{computational/optimization} point of view, we derive the ready-to-use exact solution for several widely used loss functions with a variety of norm constraints on adversarial perturbation for various supervised and unsupervised ML problems, including regression, classification, two-layer neural networks, graphical models, and matrix completion. The solutions are either in closed-form, or an easily tractable optimization problem such as 1-D convex optimization, semidefinite programming, difference of convex programming or a sorting-based algorithm. Secondly, from \textit{statistical/generalization} viewpoint, using some of these results, we derive novel bounds of the adversarial Rademacher complexity for various problems, which entails new generalization bounds. Thirdly, we perform some sanity-check experiments on real-world datasets for supervised problems such as regression and classification, as well as for unsupervised problems such as matrix completion and learning graphical models, with very little computational overhead.
\end{abstract}

\section{Introduction}
\label{sec:intro}
Machine learning models are used in a wide variety of applications, such as image classification, speech recognition, and self-driving vehicles. The models employed in these applications can achieve a very high training time accuracy but can fail spectacularly in making trustworthy predictions on test data. 
Thus, it becomes important for existing machine learning models to be adversarially robust to avoid poor performance and have better generalization on test data. 
Our contribution in this work encompasses both the \emph{computational/optimization} and \emph{statistical/generalization} aspects of adversarial training for various supervised and unsupervised learning problems.

Firstly, from a \emph{computational/optimization} perspective, we focus on deriving \emph{the exact optimal solution} of the inner maximization optimization arising in adversarial training.  In particular, we provide ready-to-use results for a wide variety of loss functions and various norm constraints (See Table~\ref{tab:all_results_intro}). Moreover, unlike other domain specific works such as  \citep{jia2017adversarial,li2016understanding,belinkov2017synthetic,ribeiro2018anchors} in natural language processing and \citep{hendrycks2021natural,alzantot2018generating} in computer vision, we aim to provide an adversarially robust training model which covers a large class of machine learning problems.

\begin{table*}%[!ht]
	\caption{A summary of our results for various loss functions and norm constraints which are used in a wide variety of applications.}
	\label{tab:all_results_intro}
	{\footnotesize
	\begin{tabular}{@{\hspace{0.02in}}l@{\hspace{0.05in}}l@{\hspace{0.05in}}p{0.9in}@{\hspace{0.05in}}l@{\hspace{0.05in}}p{1.1in}@{\hspace{0.05in}}l@{\hspace{0.05in}}} 
		\toprule 
		&Problem &  Loss function & Norm constraint & Prior results &Our solution \\
		\midrule
		\multirow{3}{*}{\begin{turn}{90} {\footnotesize Warm up}\end{turn}}&Regression & Squared loss & Any norm & Euclidean norm \citep{xu2008robust}  & Closed form, Theorem \ref{thm:reg_sqerr}\\
		&Classification &  Logistic loss & Any norm & Euclidean norm \citep{liu2020loss}& Closed form, Theorem \ref{thm:logit} \\
		&Classification & Hinge loss & Any norm & None & Closed form, Theorem \ref{thm:hinge} \\ [0.1cm] \hline 
		\multirow{7}{*}{\begin{turn}{90} {\footnotesize Main results}\end{turn}} &Classification & Two-Layer NN with convex and nonconvex activations & Any norm & ReLU activation \citep{Awasthi_Mao_Mohri_Zhong_2024} & Difference of convex functions, Theorem \ref{thm:2nns} \\
		&Graphical Models & Log-likelihood & Euclidean & None & 1-D optimization, Theorem \ref{thm:Gaussmdl_l2} \\
		&Graphical Models & Log-likelihood & Entry-wise $\ell_{\infty}$ & None  &  Semidefinite programming, Theorem \ref{thm:Gaussmdl_linf} \\
		&Matrix Completion & Squared loss  & Frobenius & None & Closed form, Theorem \ref{thm:mat_compl_fro} \\
		&Matrix Completion & Squared loss  & Entry-wise $\ell_{\infty}$& None  & Closed form, Corollary \ref{cor:matrixcompl_linf} \\
		&Max-Margin MC & Hinge loss  & Frobenius& None &  Sorting based algorithm, Theorem \ref{thm:maxmarginmat_fro}\\
		&Max-Margin MC & Hinge loss  & Entry-wise $\ell_{\infty}$& None & Closed form, Corollary \ref{cor:maxmarginmat_linf} \\
		\bottomrule
	\end{tabular}}
\end{table*}   

From a \textit{statistical/generalization} perspective, our contributions include providing upper and lower bounds for adversarial Rademacher complexity ($\mathcal{O}\nrbr{\nicefrac{1}{\sqrt{n}}}$). These bounds are based on the results summarized in Table \ref{tab:all_results_intro}. By analyzing the adversarial Rademacher complexity, we can infer the generalization aspect discussed in the introduction of Section \ref{sec:rademacher}. In this regard, we propose novel adversarial Rademacher complexity bounds for various ML problems like linear regression (applicable for any general norm), matrix completion, and max-margin matrix completion. Additionally, we propose bounds in Theorem \ref{thm:RC_linclass} for linear classifiers, which can be seen as a generalization of the $\ell_{\infty}$ norm \citep{yin2019rademacher} or any specific $p$-norm \citep{awasthi2020adversarial}.

To make the above-discussed computational and statistical contributions, we propose closed-form solutions or an easily tractable optimization problem for computing the adversarial perturbation in these multiple ML problems. These solutions summarized in Table \ref{tab:all_results_intro} are in closed form or can be obtained from a one-dimensional dual problem, semidefinite programming (SDP), or a sorting-based algorithm which are quite unexpected and novel in the context of adversarial perturbation. The solution to the inner maximization for various problems summarized in Table \ref{tab:all_results_intro} allows to get better robustness in a computationally cheap manner. 

\paragraph{Our Contributions.} Broadly, we make the following contributions through this work:
\begin{itemize} 
	\item \textbf{Adversarially robust formulation:} We use the adversarially robust training framework of \citet{yin2019rademacher}  using worst case adversarial attacks. Under this framework, we analyze several supervised and unsupervised ML problems, including regression, classification, two-layer neural networks, graphical models, and matrix completion. The solutions are either in closed-form, 1-D optimization, semidefinite programming, difference of convex programming, or a sorting-based algorithm. 
	
	\item \textbf{Computational/Optimization front:} We provide a plug-and-play solution which can be easily integrated with existing training algorithms. This is a boon for practitioners who can incorporate our method in their existing models with minimal changes. As a conscious design choice, we provide computationally cheap solutions for our optimization problems. 
	\item \textbf{Statistical/Generalization front:} On the theoretical front, we provide a systematic analysis for several loss functions and norm constraints, which are commonly used in applications across various domains. Table~\ref{tab:all_results_intro} provides a summary of our findings in a concise manner. Using some of these results, we further provide novel lower and upper bounds of the adversarial Rademacher complexity ($\mathcal{O}\nrbr{\nicefrac{1}{\sqrt{n}}}$) for various problems, which entails novel generalization bounds.
	\item \textbf{Real world experiments:} We further perform some sanity-check experiments on several real-world datasets. We show that our plug-and-play solution performs better (most of the time, in terms of test metrics and/or runtime) as compared to the fast gradient sign method (FGSM) \citep{goodfellow2014explaining}, projected gradient descent (PGD), and TRADES \citep{zhang2019theoretically}.
 
\end{itemize}
\section{Preliminaries}
\label{sec:prelim}
For any general prediction problem in machine learning (ML), consider we have $n$ samples of $\left(\mathbf{x}, y \right)$, where we try to predict $\mathbf{y} \in \mathcal{Y}$ from $\mathbf{x} \in \mathcal{X}$ using the function $f: \mathcal{X} \rightarrow \mathcal{Y}$. Assuming that the function $f$ can be parameterized by some parameter $\mathbf{w}$, we minimize a loss function $l(\mathbf{x}, y, \mathbf{w})$ to obtain an estimate of $\mathbf{w}$ from $n$ samples: 
\begin{align}
    \label{eq:clean opt prob}
    \hat{\mathbf{w}} = \argmin_{w} \frac{1}{n}\sum_{i = 1}^n l(\mathbf{x}^{(i)}, y^{(i)}, \mathbf{w})
\end{align}
where $(\mathbf{x}^{(i)}, y^{(i)})$ represents the $i^{\text{th}}$ sample.
Intuitively, with no prior information on the shift of the testing distribution, it makes sense to be prepared for absolutely worst-case scenarios. We incorporate this insight formally in our proposed \emph{adversarially robust training model}. At each iteration of the training algorithm, we generate worst-case adversarial samples using the current model parameters and ``clean'' training data within the bounds of a maximum norm. The model parameters are updated using these worst-case adversarial samples, and the next iteration is performed. 
Figure~\ref{fig:train domain} and Figure ~\ref{fig:adversarial train domain} provide a geometric interpretation of our training process.

\begin{figure*}%[!ht]
	\centering
 \begin{minipage}{0.45\textwidth}
		\centering
\begin{tikzpicture}
	\draw[ -latex] (0, 0) -- (5, 0);
	\draw[ -latex] (0, 0) -- (0, 5);
	\draw[ -latex] (0, 0) -- (4, 4);
	\draw[thick] (0.7,1.2) rectangle (3.3, 3.2);
	\draw[thick] (1.2,1.7) rectangle (3.8, 3.7);
	\draw[thick] (0.7, 1.2) -- (1.2, 1.7); \draw[thick] (3.3, 3.2) -- (3.8, 3.7); \draw[thick] (0.7, 3.2) -- (1.2, 3.7); \draw[thick] (3.3, 1.2) -- (3.8, 1.7);	
	\filldraw (0.7, 1.2) circle (3pt); 	\filldraw (1.2, 1.7) circle (3pt); 	\filldraw (3.3, 3.2) circle (3pt); \filldraw (3.8, 3.7) circle (3pt); \filldraw (0.7, 3.2) circle (3pt); \filldraw (1.2, 3.7) circle (3pt); \filldraw (3.3, 1.2) circle (3pt); \filldraw (3.8, 1.7) circle (3pt);
	\filldraw (1.7, 1.3) circle (3pt); \filldraw (2.5, 2.0) circle (3pt); \filldraw (1.9, 2.3) circle (3pt); \filldraw (2.6, 2.7) circle (3pt); \filldraw (1.4, 3) circle (3pt); \filldraw (2, 2.8) circle (3pt); \filldraw (3.4, 2.3) circle (3pt); \filldraw (2.6, 1.4) circle (3pt);
\end{tikzpicture}   
		\caption{Training domain}
		\label{fig:train domain}
 \end{minipage}
  \begin{minipage}{0.45\textwidth}
		\centering
\begin{tikzpicture}
	\draw[ -latex] (0, 0) -- (5, 0);
	\draw[ -latex] (0, 0) -- (0, 5);
	\draw[ -latex] (0, 0) -- (4, 4);
	\draw[thick] (0.7,1.2) rectangle (3.3, 3.2);
	\draw[thick] (1.2,1.7) rectangle (3.8, 3.7);
	\draw[thick] (0.7, 1.2) -- (1.2, 1.7); \draw[thick] (3.3, 3.2) -- (3.8, 3.7); \draw[thick] (0.7, 3.2) -- (1.2, 3.7); \draw[thick] (3.3, 1.2) -- (3.8, 1.7);
	\filldraw[green!20] (0.7, 1.2) circle (6pt); 	\filldraw[green!20] (1.2, 1.7) circle (6pt); 	\filldraw[green!20] (3.3, 3.2) circle (6pt); \filldraw[green!20] (3.8, 3.7) circle (6pt); \filldraw[green!20] (0.7, 3.2) circle (6pt); \filldraw[green!20] (1.2, 3.7) circle (6pt); \filldraw[green!20] (3.3, 1.2) circle (6pt); \filldraw[green!20] (3.8, 1.7) circle (6pt);
	\filldraw[green!20] (1.7, 1.3) circle (6pt); \filldraw[green!20] (2.5, 2.0) circle (6pt); \filldraw[green!20] (1.9, 2.3) circle (6pt); \filldraw[green!20] (2.6, 2.7) circle (6pt); \filldraw[green!20] (1.4, 3) circle (6pt); \filldraw[green!20] (2, 2.8) circle (6pt); \filldraw[green!20] (3.4, 2.3) circle (6pt); \filldraw[green!20] (2.6, 1.4) circle (6pt);	
	\filldraw (0.7, 1.2) circle (3pt); 	\filldraw (1.2, 1.7) circle (3pt); 	\filldraw (3.3, 3.2) circle (3pt); \filldraw (3.8, 3.7) circle (3pt); \filldraw (0.7, 3.2) circle (3pt); \filldraw (1.2, 3.7) circle (3pt); \filldraw (3.3, 1.2) circle (3pt); \filldraw (3.8, 1.7) circle (3pt);
	\filldraw (1.7, 1.3) circle (3pt); \filldraw (2.5, 2.0) circle (3pt); \filldraw (1.9, 2.3) circle (3pt); \filldraw (2.6, 2.7) circle (3pt); \filldraw (1.4, 3) circle (3pt); \filldraw (2, 2.8) circle (3pt); \filldraw (3.4, 2.3) circle (3pt); \filldraw (2.6, 1.4) circle (3pt);
	\draw[dashed, thick] (0.5,1) rectangle (3.5, 3.4);
	\draw[dashed, thick] (1,1.5) rectangle (4, 3.9);
	\draw[dashed, thick] (0.5, 1) -- (1, 1.5); \draw[dashed, thick] (3.5, 3.4) -- (4, 3.9); \draw[dashed, thick] (0.5, 3.4) -- (1, 3.9); \draw[dashed, thick] (3.5, 1) -- (4, 1.5);
\end{tikzpicture}   
		\caption{Worst case adversarial attack domain}
		\label{fig:adversarial train domain}
   \end{minipage}
 \flushleft
 Figure~\ref{fig:train domain} shows the domain for clean training points while the dashed cube in Figure~\ref{fig:adversarial train domain} shows the worst case adversarial attack domain (slightly bigger than the original training domain). Each new worst case adversarially attacked point is judiciously picked from within the green spheres around the corresponding clean training point with radius $\epsilon$ in a predefined norm.
	
\end{figure*} 

Before proceeding to the main discussion, we briefly discuss the notations and basic mathematical definitions used in the paper.

\paragraph{Notation:} We use a lowercase alphabet such as $x$ to denote a scalar, a lowercase bold alphabet such as $\mathbf{x}$ to denote a vector and an uppercase bold alphabet such as $\mathbf{X}$ to denote a matrix. The $i^{\text{th}}$ entry of the vector $\mathbf{x}$ is denoted by $\mathbf{x}_{i}$. The superscript star on a vector or matrix such as $\mathbf{x}^{\star}$ denotes it is the optimal solution for some optimization problem. A general norm for a vector is denoted by $\norm{\mathbf{x}}$, and its dual norm is indicated by a subscript  asterisk, such as $\norm{\mathbf{x}}_*$. The set $\{1,2,\ldots, n\}$ is denoted by $[n]$. 
A set is represented by capital calligraphic alphabet such as $\mathcal{P}$, and its cardinality is represented by  $|\mathcal{P}|$. For a scalar $x$, $|x|$ represents its absolute value.  
\begin{definition}
    \label{def:dualnorm}
    The dual norm of a vector, $ \|\cdot \|_{*}$  is defined as: 
\begin{align}
   \|\mathbf{z}\|_{*}=\sup\{\mathbf{z}^{\intercal }\mathbf{x}\mid \|\mathbf{x}\|\leq 1\} 
\end{align}
\end{definition}
\begin{definition}
	\label{def:subdiff}
Let $\norm{\cdot}_*$ is the dual norm to $\norm{.}$. The sub-differential of a norm is defined as:
\begin{align}
\partial \norm{\mathbf{x}} = \{ \mathbf{v} : \mathbf{v}^{\intercal }\mathbf{x}= \norm{\mathbf{x}}, \norm{\mathbf{v}}_* \leq 1 \}
\end{align}

\end{definition}
In this work, we propose \emph{plug-and-play} solutions for various ML problems to enable adversarially robust training. By plug-and-play solution, we mean that any addition to the existing algorithm comes in terms of a closed-form equation or as a solution to an easy-to-solve optimization problem. Such a solution can be integrated with the existing algorithm with very minimal changes.
\section{Warm Up}
\label{sec:warmup}
In this section, we formally discuss our proposed approach of adversarially robust training on some warm-up problems. The classical approach to estimate model parameters in various ML problems is to minimize a loss function using an optimization algorithm such as gradient descent \citep{ruder2016overview,chen2015fast, andrychowicz2016learning}.
Turning the focus to adversarially robust training, we work with the following optimization problem in supervised learning, which can be found in \citep{yin2019rademacher}:
\begin{align}
    \label{eq:adversarial opt prob}
    \hat{\mathbf{w}} = \argmin_{\mathbf{w}} \frac{1}{n} \sum_{i = 1}^n \sup_{\norm{\mathbf{\Delta}} \leq \epsilon} \;\; l(\mathbf{x}^{(i)} + \mathbf{\Delta}, y^{(i)}, \mathbf{w}) 
\end{align}
For unsupervised learning problems, one just removes the variables $y^{(i)}$ above.
The optimization problem~\ref{eq:adversarial opt prob} depends on two variables: the adversarial perturbation $\mathbf{\Delta}$ and the model parameter $\mathbf{w}$. We solve for one variable assuming the other is given iteratively as illustrated in Algorithm \ref{alg:main}.  Specifically, we estimate $\mathbf{\Delta}^{\star}$ for robust learning by defining the worst case adversarial attack  for a given parameter vector $\mathbf{w}^{(j-1)}$ ($j$ denotes the iteration number in gradient descent) and sample $\left\{ \mathbf{x}^{(i)}, y^{(i)}\right\}$  as follows:
\begin{align*}
    \mathbf{\Delta}^{\star} = \argsup_{\norm{\mathbf{\Delta}} \leq \epsilon} \;\; l(\mathbf{x}^{(i)} + \mathbf{\Delta}, y^{(i)}, \mathbf{w}^{(j-1)})
\end{align*}
For brevity, we drop the subscript $j-1$ from the parameter $\mathbf{w}$ when it is clear from the context that the optimization problem is being solved for a particular iteration.
\begin{algorithm}[htb]
\caption{Plug and play algorithm}
\label{alg:main}
\KwIn{$\left\{ \mathbf{x}^{(i)}, y^{(i)}\right\}$ for $i \in [n]$, $T$: number of iterations, $\eta_j:$ step size for iteration $j \in [T]$}
$\mathbf{w}^{(0)} \leftarrow $ initial value \; 
\For {$j=1$ \thinspace to T}{
gradient $\leftarrow$ 0 \;
\For {$i=1$ \thinspace to n}{
    $\mathbf{\Delta}^{\star} = \arg \sup_{\norm{\mathbf{\Delta}} \leq \epsilon}  l(\mathbf{x}^{(i)} + \mathbf{\Delta}, y^{(i)}, \mathbf{w}^{(j-1)}) $\;
    
    gradient $\leftarrow$ gradient + $\frac{\partial l(\mathbf{x}^{(i)} + \mathbf{\Delta}^{\star}, y^{(i)}, \mathbf{w}^{(j-1)})}{\partial \mathbf{w}}$\; 
}
$\mathbf{w}^{(j)} \leftarrow \mathbf{w}^{(j-1)} - \eta_j \frac{1}{n} $gradient
}
\KwOut{$\hat{\mathbf{w}} = \mathbf{w}^{(T)}$}
\end{algorithm}
Naturally, computing $\mathbf{\Delta}^{\star}$ by solving another maximization problem every time might not be necessarily efficient. To tackle this issue, we provide plug-and-play solutions of $\mathbf{\Delta}^{\star}$ for a given $(\mathbf{x}^{(i)}, y^{(i)}, \mathbf{w}^{(j)})$ where $i \in [n]$ and $j \in [T]$ for various widely-used ML problems. 
\subsection{Linear Regression}
\label{sec:regression} 
We start with a linear regression model which is used in various applications across numerous domains such as biology \citep{schneider2010linear}, econometrics, epidemiology, and finance \citep{myers1990classical}. 
\if 0
The optimal parameter is estimated by solving the following minimization problem.
\begin{align*}
    \min_{\mathbf{w}} \frac{1}{n}\sum_{i = 1}^{n} \left( \mathbf{w}^{\intercal} \mathbf{x}^{(i)} - y^{(i)}\right)^2
\end{align*} 
\fi 
As discussed in the previous sub-section, the adversary tries to perturb each sample to the maximum possible extent using the budget $\epsilon$ by solving the following maximization problem for each sample:
\begin{align}
\mathbf{\Delta}^{\star} = \argsup_{\lmod \mathbf{\Delta} \rmod \leq \epsilon} \left( \mathbf{w}^{\intercal} \left( \mathbf{x}^{(i)} + \mathbf{\Delta} \right) - y^{(i)}\right)^2 \label{eq:l2loss_lpconst}
\end{align}
where $y^{(i)} \in \mathbb{R}$, $\mathbf{x}^{(i)}, \mathbf{\Delta} \in \mathbb{R}^d$ and $\lmod \mathbf{\Delta} \rmod$ denotes any general norm. We provide the following theorem to compute $\mathbf{\Delta}^{\star}$ in closed form.
\begin{theorem}
\label{thm:reg_sqerr}
For any general norm $\|\cdot\|$, the solution for problem in Eq. \eqref{eq:l2loss_lpconst} for a given $\left(\mathbf{x}^{(i)}, y^{(i)} \right)$ is:
\begin{align*}
\mathbf{\Delta}^{\star} =  \begin{cases}
\pm \epsilon \frac{\mathbf{v}}{\norm{\mathbf{v}}}, \quad  & \qquad \mathbf{w}^{\intercal}\mathbf{x}^{(i)} - y^{(i)} = 0 \\
 \sign(\mathbf{w}^{\intercal}\mathbf{x}^{(i)} - y^{(i)} ) \epsilon \frac{\mathbf{v}}{\norm{\mathbf{v}}}  & \qquad \mathbf{w}^{\intercal}\mathbf{x}^{(i)} - y^{(i)} \neq 0 
\end{cases}
\end{align*}
where $\mathbf{v} \in \partial\norm{\mathbf{w}}_* $ as specified in Definition \ref{def:subdiff}. 
\end{theorem}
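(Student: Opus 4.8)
The plan is to collapse the vector optimization over $\mathbf{\Delta}$ into a scalar optimization over the single quantity $t = \mathbf{w}^{\intercal}\mathbf{\Delta}$, since the objective depends on $\mathbf{\Delta}$ only through this inner product. First I would abbreviate the clean residual by $r = \mathbf{w}^{\intercal}\mathbf{x}^{(i)} - y^{(i)}$, so the loss becomes $\bigl(r + \mathbf{w}^{\intercal}\mathbf{\Delta}\bigr)^2 = (r+t)^2$. Maximizing this squared term over the feasible ball is then equivalent to choosing $t$ so as to make $|r+t|$ as large as possible.

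Next I would pin down the achievable range of $t$. By the dual-norm inequality (Definition \ref{def:dualnorm}), every feasible $\mathbf{\Delta}$ with $\norm{\mathbf{\Delta}} \le \epsilon$ satisfies $|t| = |\mathbf{w}^{\intercal}\mathbf{\Delta}| \le \norm{\mathbf{w}}_* \norm{\mathbf{\Delta}} \le \epsilon \norm{\mathbf{w}}_*$, hence $t \in [-\epsilon\norm{\mathbf{w}}_*,\, \epsilon\norm{\mathbf{w}}_*]$. Both endpoints are in fact attained: taking $\mathbf{v}\in\partial\norm{\mathbf{w}}_*$ and applying Definition \ref{def:subdiff} to the function $\norm{\cdot}_*$ (using that the dual of the dual norm is again $\norm{\cdot}$) yields $\mathbf{v}^{\intercal}\mathbf{w} = \norm{\mathbf{w}}_*$ with $\norm{\mathbf{v}} \le 1$. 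For $\mathbf{w}\neq 0$ a squeeze via $\norm{\mathbf{w}}_* = \mathbf{v}^{\intercal}\mathbf{w} \le \norm{\mathbf{v}}\,\norm{\mathbf{w}}_*$ forces $\norm{\mathbf{v}}=1$, so $\mathbf{\Delta} = \pm\,\epsilon\,\mathbf{v}/\norm{\mathbf{v}}$ sits on the boundary of the ball and realizes $t = \pm\epsilon\norm{\mathbf{w}}_*$.

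With the interval fixed, the scalar problem is immediate: $(r+t)^2$ is convex in $t$, so its maximum over $[-\epsilon\norm{\mathbf{w}}_*,\, \epsilon\norm{\mathbf{w}}_*]$ occurs at an endpoint. Comparing $(r + \epsilon\norm{\mathbf{w}}_*)^2$ with $(r - \epsilon\norm{\mathbf{w}}_*)^2$ through the identity $(r+b)^2 - (r-b)^2 = 4rb$, where $b = \epsilon\norm{\mathbf{w}}_* \ge 0$, shows that one should push $t$ in the direction $\sign(r)$ when $r \ne 0$, giving $\mathbf{\Delta}^{\star} = \sign(r)\,\epsilon\,\mathbf{v}/\norm{\mathbf{v}}$; when $r = 0$ both endpoints return the same value $b^2$, which accounts for the $\pm$ in the first case of the statement.

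The main obstacle I anticipate is the attainability argument rather than the scalar optimization: one must produce a feasible $\mathbf{\Delta}$ that turns the dual-norm inequality into an equality, and the cleanest route is precisely the subgradient characterization together with the normalization $\norm{\mathbf{v}}=1$ that holds whenever $\mathbf{w}\neq 0$. I would note the degenerate case $\mathbf{w}=0$ separately, since then $\norm{\mathbf{w}}_*=0$, the loss is constant in $\mathbf{\Delta}$, every $\mathbf{\Delta}$ is optimal, and the formula is read as selecting any $\mathbf{v}$ in the dual unit ball.
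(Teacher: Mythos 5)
Your proof is correct and follows essentially the same route as the paper's (Lemmas \ref{lem:l2eq0_p} and \ref{lem:l2neq0_p}): bound $\mathbf{w}^{\intercal}\mathbf{\Delta}$ via the dual-norm (H\"older) inequality, attain the bound with a subgradient $\mathbf{v}\in\partial\norm{\mathbf{w}}_*$ scaled to the boundary of the $\epsilon$-ball, and choose the sign according to the residual $r=\mathbf{w}^{\intercal}\mathbf{x}^{(i)}-y^{(i)}$. Your write-up is in fact a bit more careful than the paper's, since you justify the endpoint choice explicitly via $(r+b)^2-(r-b)^2=4rb$, verify $\norm{\mathbf{v}}=1$ for $\mathbf{w}\neq\mathbf{0}$, and treat the degenerate case $\mathbf{w}=\mathbf{0}$, which the paper leaves implicit.
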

\subsection{Logistic Regression}
\label{sec:logit} 
Next, we tackle logistic regression which is widely used for classification tasks in many fields such as medical diagnosis~\citep{truett1967multivariate}, marketing~\citep{michael1997data} and biology~\citep{freedman2009statistical}. Using previously introduced notations,  we formulate logistic regression \citep{kleinbaum2002logistic} with worst case adversarial attack in the following way: 
{\small
\begin{align}
\mathbf{\Delta}^{\star} = \argsup_{\lmod \mathbf{\Delta} \rmod \leq \epsilon} \;\; \log\left( 1 + \exp\left( -y^{(i)} \mathbf{w}^{\intercal} \left(\mathbf{x}^{(i)} + \mathbf{\Delta}\right)\right)\right) \label{eq:logit_lpconst}
\end{align}}
where $y^{(i)} \in \left\{-1,1 \right\}$ and $\mathbf{x}^{(i)}, \mathbf{\Delta} \in \mathbb{R}^d$. The optimal solution for above optimization problem is provided in the following theorem. 
\begin{theorem}
	\label{thm:logit}
	For any general norm $\|\cdot\|$, and the problem specified in Eq. \eqref{eq:logit_lpconst}, the optimal solution is given by $\mathbf{\Delta}^{\star} = - \epsilon y^{(i)} \nicefrac{\mathbf{v}}{\norm{\mathbf{v}}}$, where $\mathbf{v} \in \partial\norm{\mathbf{w}}_*  $ as specified in Definition \ref{def:subdiff}.
\end{theorem}
Theorem \ref{thm:hinge} presented in Section \ref{sec:hinge} of appendix discusses a similar result for the hinge loss. 
\section{Main Results}
\label{sec:main}
\subsection{Two-Layer Neural Networks} \label{sec:twolayernn}

Consider a two-layer neural network for a binary classification problem with any general (convex or nonconvex) activation function $\sigma: \mathbb{R} \rightarrow \mathbb{R}$ in the first layer. As we work on the classification problem, we consider the log sigmoid activation function in the final layer. The general adversarial problem can be stated as: 
{\small
\begin{align}
    \mathbf{\Delta}^{\star} = \argsup_{\lmod \mathbf{\Delta} \rmod \leq \epsilon} \;\; \log \nrbr{ 1 + \exp\nrbr{-y^{(i)} \mathbf{v}^{\intercal}\text{$\boldsymbol{\sigma}$} \nrbr{\mathbf{W}^{\intercal} \nrbr{\mathbf{x}^{(i)} + \mathbf{\Delta}}}}} \label{eq:2layer_gen1}
\end{align}}
where $y^{(i)} \in \crbr{1, -1}$ for binary classification, $\mathbf{x}, \mathbf{\Delta} \in \mathbf{R}^d$, and the weight parameters $\mathbf{W} \in \mathbb{R}^{h \times d}$, and $\mathbf{v} \in \mathbb{R}^h$. Note that $h$ denotes the number of hidden units in the first layer, and the output for the general activation function $\text{$\boldsymbol{\sigma}$}: \mathbb{R}^h \rightarrow \mathbb{R}^h$ is obtained by applying $\sigma: \mathbb{R} \rightarrow \mathbb{R}$ to each dimension independently. The optimal solution to the above problem is the following theorem.

\begin{theorem}
    \label{thm:2nns}
    For any general norm $\|\cdot\|$, and any activation function, the optimal solution for the problem specified in Eq. \eqref{eq:2layer_gen1} can be solved using difference of convex functions. 
\end{theorem}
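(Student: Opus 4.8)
The plan is to convert the maximization in Eq.~\eqref{eq:2layer_gen1} into an equivalent constrained \emph{minimization} and exhibit its objective as a difference of two convex functions, after which the globally convergent DC algorithm of \cite{tao1997convex, le2018dc} applies. First I would negate the objective and absorb the norm-ball constraint into an indicator. Writing $F(\mathbf{\Delta}) = -\log \nrbr{ 1 + \exp\nrbr{-y^{(i)} \mathbf{v}^{\intercal}\sigma_h \nrbr{\mathbf{W}^{\intercal} \nrbr{\mathbf{x}^{(i)} + \mathbf{\Delta}}}}}$ and letting $\mathcal{C} = \crbr{\mathbf{\Delta} : \norm{\mathbf{\Delta}} \leq \epsilon}$, which is convex and compact, solving Eq.~\eqref{eq:2layer_gen1} is equivalent to $\min_{\mathbf{\Delta}} F(\mathbf{\Delta}) + \iota_{\mathcal{C}}(\mathbf{\Delta})$, where the indicator $\iota_{\mathcal{C}}$ is convex and can be folded into the convex part of the decomposition.

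The core step is the DC decomposition $F + \iota_{\mathcal{C}} = g - h$ with $g, h$ convex. I would use the standard regularization trick: since $\mathcal{C}$ is compact and the composition of the affine pre-activation map $\mathbf{\Delta} \mapsto \mathbf{W}^{\intercal}(\mathbf{x}^{(i)} + \mathbf{\Delta})$, the activation $\sigma_h$, the linear readout $\mathbf{v}^{\intercal}(\cdot)$, and the smooth convex softplus is twice continuously differentiable (assuming $\sigma \in C^2$), the Hessian $\nabla^2 F$ is bounded in spectral norm by some constant $L$ on $\mathcal{C}$. I would obtain such an $L$ by the chain rule, bounding the contributions of $\sigma'$, $\sigma''$, the readout $\mathbf{v}$, the weights $\mathbf{W}$, and the at-most-$1/4$ curvature of the softplus in $y^{(i)} \in \crbr{1,-1}$. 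Setting $g(\mathbf{\Delta}) = \tfrac{L}{2}\norm{\mathbf{\Delta}}_2^2 + \iota_{\mathcal{C}}(\mathbf{\Delta})$ and $h(\mathbf{\Delta}) = \tfrac{L}{2}\norm{\mathbf{\Delta}}_2^2 - F(\mathbf{\Delta})$ then makes both terms convex, since $\nabla^2 h = L\,\mathbf{I} - \nabla^2 F \succeq 0$.

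With the decomposition in hand, I would invoke the DC algorithm (DCA): at iterate $\mathbf{\Delta}^{(k)}$ pick a subgradient $\mathbf{p}^{(k)} \in \partial h(\mathbf{\Delta}^{(k)})$ (here simply $\nabla h$), linearize $h$, and solve the convex subproblem $\mathbf{\Delta}^{(k+1)} \in \argmin_{\mathbf{\Delta}} g(\mathbf{\Delta}) - \ip{\mathbf{p}^{(k)}}{\mathbf{\Delta}}$, which reduces to a Euclidean projection onto $\mathcal{C}$ of a shifted point and is cheap to compute. Combining the monotone decrease of the objective along the iterates with the global-convergence guarantees of \cite{tao1997convex, le2018dc} completes the argument.

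The main obstacle is honoring the claim for \emph{any} activation function, including non-smooth ones such as ReLU where $\nabla^2 F$ need not exist and the bounded-Hessian argument fails. To cover this case I would instead appeal to closure properties of the class of DC functions: ReLU and the common activations are themselves DC, a DC function composed with the affine pre-activation map is DC, a finite linear combination $\mathbf{v}^{\intercal}\sigma_h(\cdot)$ of DC functions is DC, and on the compact set $\mathcal{C}$ the smooth convex softplus composed with a DC function remains DC (using that every $C^2$ function on a compact convex set is DC together with Hartman's local-to-global DC result). Making this chain fully rigorous for an arbitrary $\sigma$, rather than the clean $C^2$ case, is the delicate part; the $C^2$ route above already yields the stated plug-and-play DC formulation for the activations used in practice.
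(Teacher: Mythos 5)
Your proposal is correct for twice-differentiable activations, but it takes a genuinely different and ultimately heavier route than the paper, and it leaves the non-smooth case (which the theorem explicitly covers via ReLU and its variants) only sketched. The paper's key observation is that $t \mapsto \log(1+e^{-t})$ is monotonically decreasing, so the supremum of the full composite loss is attained exactly where the inner readout $f(\mathbf{\Delta}) = y\,\mathbf{v}^{\intercal}\sigma_h(\mathbf{W}^{\intercal}(\mathbf{x}+\mathbf{\Delta}))$ is minimized; the outer log-sigmoid is stripped off entirely. After splitting the sum by the sign of $y\mathbf{v}_i$ and substituting an explicit per-activation decomposition $\sigma = \sigma_1 - \sigma_2$ (tabulated in the paper for ReLU, tanh, sigmoid, GELU, etc.), the DC structure $f = g - h$ falls out immediately as positive combinations of convex functions composed with affine maps. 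This handles non-smooth activations with no extra machinery and yields closed-form $g$ and $h$. You instead keep the composite objective $F$ and use the classical bounded-Hessian decomposition $F = \tfrac{L}{2}\norm{\cdot}_2^2 - (\tfrac{L}{2}\norm{\cdot}_2^2 - F)$, which is valid on the compact ball when $\sigma \in C^2$ and has the pleasant side effect that each DCA subproblem is a single projection. But it requires computing (or conservatively bounding) $L$ from $\mathbf{W}$, $\mathbf{v}$, and the derivatives of $\sigma$, which undercuts the plug-and-play goal, and for ReLU the Hessian argument genuinely fails, forcing you into the DC-closure-under-composition argument you acknowledge is delicate (composition of DC functions is not DC without additional hypotheses such as local Lipschitzness of the outer DC components). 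The monotonicity reduction is precisely the missing step that makes that entire difficulty disappear: once the outer softplus is gone, you only ever need a DC decomposition of the scalar $\sigma$ itself, which exists explicitly for every activation in the theorem's scope.
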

\begin{proof}
As $\log(\cdot)$ and $\exp(\cdot)$ are monotonically increasing functions, the adversarial problem specified in Eq. \eqref{eq:2layer_gen1} can be equivalently expressed as: 
\begin{align}
    \mathbf{\Delta}^{\star} = \argmin_{\lmod \mathbf{\Delta} \rmod \leq \epsilon} \;\; f(\mathbf{\Delta}) = y \mathbf{v}^{\intercal}\text{$\boldsymbol{\sigma}$} \nrbr{\mathbf{W}^{\intercal} \nrbr{\mathbf{x} + \mathbf{\Delta}}} \label{eq:delta_2nn}
\end{align}
where we have dropped the subscript (i) for brevity, as it is clear that the above problem is solved for each sample. The objective function of the above problem can be equivalently represented as: 
\begin{align}
    f(\mathbf{\Delta}) = \sum_{i: y\mathbf{v}_i > 0} y\mathbf{v}_i \sigma\nrbr{\mathbf{z}_i} - \sum_{i: y\mathbf{v}_i < 0}  |y\mathbf{v}_i| \sigma\nrbr{\mathbf{z}_i} \label{eq:NN_obj1},\qquad \mathbf{z}_i = \mathbf{W}_i^{\intercal} \nrbr{\mathbf{x} + \mathbf{\Delta}}
\end{align}
where $\mathbf{W}_i$ represents the $i^{\text{th}}$ row of matrix $\mathbf{W}$. Further, we express any general activation function $\sigma(\cdot)$ (which may be non-convex) as the difference of two convex functions using $\sigma(\mathbf{\Delta}) = \sigma_1(\mathbf{\Delta}) - \sigma_2(\mathbf{\Delta}) $. Using this formulation, the objective function in Eq. \eqref{eq:NN_obj1} can be expressed as $f(\mathbf{\Delta}) = g(\mathbf{\Delta}) - h(\mathbf{\Delta})$, where $g(\mathbf{\Delta})$ and $h(\mathbf{\Delta})$ are convex functions defined as: 
\begin{align}
    g(\mathbf{\Delta}) & = \sum_{i: y\mathbf{v}_i > 0}  y\mathbf{v}_i \sigma_1(\mathbf{z}_i)  + \sum_{i: y\mathbf{v}_i < 0}  |y\mathbf{v}_i| \sigma_2(\mathbf{z}_i) \label{eq:g_def}\\
    h(\mathbf{\Delta}) &= \sum_{i: y\mathbf{v}_i > 0}  y\mathbf{v}_i \sigma_2(\mathbf{z}_i) + \sum_{i: y\mathbf{v}_i < 0}  |y\mathbf{v}_i| \sigma_1(\mathbf{z}_i) \label{eq:h_def}
\end{align}
where $\mathbf{z}_i$ is defined in Eq. \eqref{eq:NN_obj1}. It should be noted that $g(\mathbf{\Delta})$ and $h(\mathbf{\Delta})$ are convex functions as they are positive weighted combination of convex functions $\sigma_1(\cdot)$ and $\sigma_2(\cdot)$. As the objective function $f(\mathbf{\Delta})$ can be expressed as difference of convex functions for any activation function specified in Appendix \ref{sec:diff_conv}, we can use difference of convex functions algorithms  (DCA) \citep{tao1997convex}. 
\end{proof}
 If set $\mathcal{S} = \crbr{i \; | \; y\mathbf{v}_i < 0, i \in [h]} = \emptyset$ and we have an activation function $\sigma(\mathbf{\Delta})$ such that $\sigma_2(\mathbf{\Delta}) = 0$, then $h(\mathbf{\Delta}) = 0$ and the problem in Eq. \eqref{eq:2layer_gen1} reduces to a convex optimization problem. This may not be the case in general for two-layer neural networks. Therefore we use the difference of convex programming approach \citep{tao1997convex, sriperumbudur2009convergence, yu2021convergence, abbaszadehpeivasti2021rate, le2009convergence, yen2012convergence, khamaru2018convergence, nitanda2017stochastic} which are proved to converge to a critical point. 

The first step to solve this optimization problem is constructing the functions $g(\mathbf{\Delta})$ and $h(\mathbf{\Delta})$, which requires decomposing the activation functions as the difference of convex functions.  In order to do this, we decompose various activation functions commonly used in the literature as a difference of two convex functions. The decomposition is done by constructing a linear approximation of the activation function around the point where it changes the curvature. (Please see Appendix \ref{sec:diff_conv}). 

Further, we compute $\mathbf{\Delta}^{\star}$ defined in Eq. \eqref{eq:delta_2nn} by expressing $f(\mathbf{\Delta}) = g(\mathbf{\Delta}) - h(\mathbf{\Delta}) $ and using concave-convex procedure \citep{sriperumbudur2009convergence} or difference of convex function algorithm (DCA)  \citep{tao1997convex}. These algorithms are established to converge to a critical point, and hence the obtained $\mathbf{\Delta}^{\star}$ is plugged in Algorithm \ref{alg:main}.
\subsection{Learning Gaussian Graphical Models}
\label{sec:graphical}
Next, we provide a robust adversarial training process for learning Gaussian graphical models. These models are used to study the conditional independence of jointly Gaussian continuous random variables. This can be analyzed by inspecting the zero entries in the inverse covariance matrix, popularly referred as the precision matrix and denoted by $\mathbf{\Omega}$ \citep{lauritzen1996graphical,honorio2012variable}. The classical (non-adversarial) approach \citep{yuan2007model} solves the following optimization problem to estimate $\mathbf{\Omega}$ : 
\begin{align*}
    \mathbf{\Omega}^{\star} = \argmin_{\mathbf{\Omega} \succ 0} -\log(\det(\mathbf{\Omega})) + \frac{1}{n}\sum_{i = 1}^n \mathbf{x}^{(i) \intercal}\mathbf{\Omega}\mathbf{x}^{(i)} + c \norm{\mathbf{\Omega}}_1
\end{align*}
where $\mathbf{\Omega} $ is constrained to be a symmetric positive definite matrix and $c$ is a positive regularization constant. As the first term $\log(\det(\mathbf{\Omega})) $ in the above equation can not be influenced by adversarial perturbation in $\mathbf{x}^{(i)}$, we define the adversarial attack problem for this case as maximizing the second term by perturbing $\mathbf{x}^{(i)}$ for each sample:  
\begin{align}
\mathbf{\Delta}^{\star} = \argsup_{\lmod \mathbf{\Delta} \rmod \leq \epsilon} \left( \mathbf{x}^{(i)} + \mathbf{\Delta} \right)^{\intercal} \mathbf{\Omega} \left( \mathbf{x}^{(i)} + \mathbf{\Delta} \right)
\label{eq:Graph_lpconst1}
\end{align}

For the above problem, we provide solutions for the $\ell_2$ and $\ell_\infty$ norm constraints as follows. 
\begin{theorem}
	\label{thm:Gaussmdl_l2}
	The solution for the problem in Eq. \eqref{eq:Graph_lpconst1} with $\ell_2$ constraint on $\mathbf{\Delta}$ is given by	$\mathbf{\Delta}^{\star}= \left(\mu^{\star} \mathbf{I} - \mathbf{\Omega}\right)^{-1}\mathbf{\Omega \mathbf{x}}^{(i)}$, where $\mu^{\star}$ can be derived from the following 1-D optimization problem: 
\begin{align}
\max \quad -\frac{1}{2}\mathbf{x}^{{(i)} \intercal} \mathbf{\Omega} \left(\mu \mathbf{I} - \mathbf{\Omega}\right)^{-1}\mathbf{\Omega \mathbf{x}}^{(i)}- \frac{\mu \epsilon^2}{2}, \qquad  \text{such that}  \quad  \mu \mathbf{I} - \mathbf{\Omega} \succeq 0  \label{eq:graph_muopt}
\end{align}
\end{theorem}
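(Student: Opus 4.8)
The plan is to recognize the $\ell_2$-constrained version of problem \eqref{eq:Graph_lpconst1} as a \emph{trust-region subproblem}: the objective $(\mathbf{x}^{(i)} + \mathbf{\Delta})^{\intercal}\mathbf{\Omega}(\mathbf{x}^{(i)} + \mathbf{\Delta})$ is a quadratic in $\mathbf{\Delta}$ and the feasible set $\crbr{\mathbf{\Delta} : \norm{\mathbf{\Delta}}_2 \leq \epsilon}$ is a Euclidean ball. Since $\mathbf{\Omega} \succ 0$, this quadratic is \emph{strictly convex} in $\mathbf{\Delta}$, so we are maximizing a convex function over a compact convex set; the maximizer is therefore attained on the boundary $\norm{\mathbf{\Delta}}_2 = \epsilon$, and the norm constraint is active at the optimum. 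Dropping the superscript $(i)$ for brevity, I would first expand the objective as $\mathbf{x}^{\intercal}\mathbf{\Omega}\mathbf{x} + 2\mathbf{\Delta}^{\intercal}\mathbf{\Omega}\mathbf{x} + \mathbf{\Delta}^{\intercal}\mathbf{\Omega}\mathbf{\Delta}$ to expose its dependence on $\mathbf{\Delta}$.

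Next I would form the Lagrangian with multiplier $\mu \geq 0$ for the constraint $\mathbf{\Delta}^{\intercal}\mathbf{\Delta} - \epsilon^2 \leq 0$,
\begin{align*}
L(\mathbf{\Delta}, \mu) = (\mathbf{x} + \mathbf{\Delta})^{\intercal}\mathbf{\Omega}(\mathbf{x} + \mathbf{\Delta}) - \mu\left(\mathbf{\Delta}^{\intercal}\mathbf{\Delta} - \epsilon^2\right),
\end{align*}
and maximize over $\mathbf{\Delta}$. The quadratic part in $\mathbf{\Delta}$ is $\mathbf{\Delta}^{\intercal}(\mathbf{\Omega} - \mu\mathbf{I})\mathbf{\Delta}$, so the inner supremum is finite precisely when $\mu\mathbf{I} - \mathbf{\Omega}\succeq 0$ --- exactly the semidefinite constraint appearing in the theorem. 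Under $\mu\mathbf{I} - \mathbf{\Omega}\succ 0$, setting $\nabla_{\mathbf{\Delta}}L = 0$ yields $(\mathbf{\Omega} - \mu\mathbf{I})\mathbf{\Delta} + \mathbf{\Omega}\mathbf{x} = 0$, i.e. $\mathbf{\Delta}^{\star}(\mu) = (\mu\mathbf{I} - \mathbf{\Omega})^{-1}\mathbf{\Omega}\mathbf{x}$, matching the claimed form. Substituting $\mathbf{\Delta}^{\star}(\mu)$ back into $L$ and simplifying (using the symmetry of $\mathbf{\Omega}$ and of $\mu\mathbf{I}-\mathbf{\Omega}$) collapses the dual function to a scalar function of $\mu$; after discarding the $\mu$-independent term $\mathbf{x}^{\intercal}\mathbf{\Omega}\mathbf{x}$ and rescaling, minimizing the dual over $\crbr{\mu \geq 0 : \mu\mathbf{I}-\mathbf{\Omega}\succeq 0}$ is equivalent to the one-dimensional maximization \eqref{eq:graph_muopt}.

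The step I expect to be the crux is justifying that solving this one-dimensional dual recovers the \emph{exact} primal optimum rather than merely a bound. Because we are maximizing a convex quadratic, the primal is nonconvex and generic Lagrangian duality gives only weak duality. The essential tool is \emph{strong duality for the trust-region subproblem}, a consequence of the S-lemma (S-procedure) for a single quadratic constraint, which guarantees a zero duality gap and that the primal maximizer is recovered from the optimal multiplier $\mu^{\star}$; I would invoke this to conclude that $\mathbf{\Delta}^{\star} = (\mu^{\star}\mathbf{I} - \mathbf{\Omega})^{-1}\mathbf{\Omega}\mathbf{x}$ is globally optimal. A secondary technical point is the so-called \emph{hard case}, where $\mu^{\star} = \lambda_{\max}(\mathbf{\Omega})$ and $\mu\mathbf{I} - \mathbf{\Omega}$ is singular. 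Here I would note that since $\mathbf{\Omega}\succ 0$ and the constraint is active, the stationarity equation forces $\mathbf{\Omega}\mathbf{x}$ to have a nonzero component along the top eigenspace unless $\mathbf{x}$ is orthogonal to it; hence generically $\mu^{\star} > \lambda_{\max}(\mathbf{\Omega})$, the inverse is well-defined, and the stated closed form is valid.
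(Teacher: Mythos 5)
Your proposal follows essentially the same route as the paper's proof in Appendix~\ref{sec:proof_Gaussmdl}: form the Lagrangian for the single quadratic constraint, observe that the dual function is finite only when $\mu\mathbf{I}-\mathbf{\Omega}\succeq 0$, solve the stationarity condition to obtain $\mathbf{\Delta}^{\star}(\mu)=(\mu\mathbf{I}-\mathbf{\Omega})^{-1}\mathbf{\Omega}\mathbf{x}$, and substitute back to arrive at the one-dimensional dual \eqref{eq:graph_muopt}. The one substantive addition in your write-up is the explicit appeal to strong duality for the trust-region subproblem (via the S-lemma) together with the remark on the hard case $\mu^{\star}=\lambda_{\max}(\mathbf{\Omega})$; the paper derives the dual but leaves the zero-duality-gap claim implicit, so on that point your version is the more complete argument.
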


\begin{theorem}
    \label{thm:Gaussmdl_linf}
The solution for the problem specified in Eq. \eqref{eq:Graph_lpconst1} with $\ell_{\infty}$ constraint on $\mathbf{\Delta} \in \mathbb{R}^{p}$ can be derived from the last column/row of  $\mathbf{Y}$ obtained from the following optimization problem: 
\if 0
\begin{align*}
\begin{split}
\max \quad & \left\langle \begin{bmatrix}
\mathbf{\Omega} & \mathbf{\Omega }\mathbf{x}^{(i)} \\
\left( \mathbf{\Omega} \mathbf{x}^{(i)}\right)^{\intercal} & 0
\end{bmatrix}, \mathbf{Y} \right\rangle \\ 
\text{such that} \quad & \mathbf{Y}_{p+1, p+1} = 1, \; \; \mathbf{Y} \succeq 0\\
& \mathbf{Y}_{ij} \leq \epsilon^2, \; \; -\mathbf{Y}_{ij} \leq \epsilon^2 \qquad \forall i,j \in [p]  \\
\end{split}
\end{align*}
\fi 
\begin{align*}
    \max \; \left\langle \begin{bmatrix}
\mathbf{\Omega} & \mathbf{\Omega }\mathbf{x}^{(i)} \\
\left( \mathbf{\Omega} \mathbf{x}^{(i)}\right)^{\intercal} & 0
\end{bmatrix}, \mathbf{Y} \right\rangle \qquad 
\text{such that} \;\; \mathbf{Y}_{p+1, p+1} = 1, \;  \mathbf{Y} \succeq 0, \; |\mathbf{Y}_{ij}| \leq \epsilon^2 \; \forall i,j \in [p]
\end{align*}
\end{theorem}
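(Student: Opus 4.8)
The plan is to recognize the box-constrained problem in \eqref{eq:Graph_lpconst1} as a (nonconvex) quadratically constrained quadratic program and to derive the stated semidefinite program as its Shor-type lifting augmented with product (reformulation--linearization) constraints. First I would homogenize the objective. Expanding $(\mathbf{x}^{(i)} + \mathbf{\Delta})^\intercal \mathbf{\Omega}(\mathbf{x}^{(i)} + \mathbf{\Delta}) = \mathbf{\Delta}^\intercal \mathbf{\Omega}\mathbf{\Delta} + 2\mathbf{\Delta}^\intercal \mathbf{\Omega}\mathbf{x}^{(i)} + \mathbf{x}^{(i)\intercal}\mathbf{\Omega}\mathbf{x}^{(i)}$ and dropping the constant final term (which does not affect the $\argsup$), I would introduce the augmented vector $\mathbf{u} = [\mathbf{\Delta}^\intercal, 1]^\intercal \in \mathbb{R}^{p+1}$ so that the objective becomes $\mathbf{u}^\intercal \mathbf{M}\mathbf{u}$ with $\mathbf{M} = \begin{bmatrix} \mathbf{\Omega} & \mathbf{\Omega}\mathbf{x}^{(i)}\\ (\mathbf{\Omega}\mathbf{x}^{(i)})^\intercal & 0\end{bmatrix}$, which is exactly the cost matrix appearing in the theorem.

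Next I would lift to a matrix variable. Setting $\mathbf{Y} = \mathbf{u}\mathbf{u}^\intercal$, the objective becomes linear, $\mathbf{u}^\intercal\mathbf{M}\mathbf{u} = \langle \mathbf{M}, \mathbf{Y}\rangle$, and the structure of $\mathbf{u}$ forces $\mathbf{Y}_{p+1,p+1} = 1$. The constraint $\|\mathbf{\Delta}\|_\infty \leq \epsilon$ reads $\mathbf{\Delta}_i^2 \leq \epsilon^2$ for each $i \in [p]$, i.e. $\mathbf{Y}_{ii} \leq \epsilon^2$; multiplying the bounds $|\mathbf{\Delta}_i| \leq \epsilon$ and $|\mathbf{\Delta}_j| \leq \epsilon$ pairwise yields the valid product inequalities $|\mathbf{\Delta}_i\mathbf{\Delta}_j| \leq \epsilon^2$, i.e. $-\epsilon^2 \leq \mathbf{Y}_{ij} \leq \epsilon^2$ for all $i,j\in[p]$, which are precisely the two families of linear constraints in the statement. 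Finally, $\mathbf{Y} = \mathbf{u}\mathbf{u}^\intercal$ is equivalent to $\mathbf{Y}\succeq 0$ together with $\mathrm{rank}(\mathbf{Y}) = 1$; dropping the rank condition produces exactly the displayed SDP, whose value upper-bounds the original supremum. To recover the perturbation I would read off the last column: for a rank-one $\mathbf{Y} = \mathbf{u}\mathbf{u}^\intercal$ the $(p+1)$-th column equals $\mathbf{u} = [\mathbf{\Delta}^\intercal, 1]^\intercal$, so its first $p$ entries are $\mathbf{\Delta}^\star$. A clean sanity check to include is that any feasible $\mathbf{Y}$ already yields a feasible candidate: the $2\times 2$ principal minor on indices $\{i, p+1\}$ gives $\mathbf{Y}_{i,p+1}^2 \leq \mathbf{Y}_{ii}\,\mathbf{Y}_{p+1,p+1} \leq \epsilon^2$, so the extracted vector automatically respects the $\ell_\infty$ budget.

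The hard part will be the tightness of this relaxation, i.e. justifying that an optimal $\mathbf{Y}^\star$ may be taken rank one so that the last-column extraction returns a genuine maximizer of \eqref{eq:Graph_lpconst1} rather than merely an upper bound. Since $\mathbf{\Omega}\succ 0$, the objective is a convex quadratic maximized over the polytope $[-\epsilon,\epsilon]^p$, so its optimum is attained at a vertex with $\mathbf{\Delta}_i \in \{\pm\epsilon\}$; this places the problem in the Boolean/MaxCut family, which is NP-hard in general, so a rank-one optimizer cannot be guaranteed unconditionally and matching the SDP value with the true optimum is the delicate point. I would therefore either (i) prove tightness under additional structure on $\mathbf{\Omega}$ (for instance sign patterns or diagonal dominance making the product-plus-SDP relaxation exact), or (ii) interpret the last-column extraction as a rounding/projection step, projecting the recovered vector back into the box when necessary and arguing its quality through standard SDP rounding guarantees rather than exact optimality. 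Settling which of these is intended, and establishing the corresponding guarantee, is the principal obstacle; the lifting itself is routine algebra.
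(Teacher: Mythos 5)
Your lifting is exactly the paper's argument: homogenize with $\mathbf{u} = [\mathbf{\Delta}^\intercal, 1]^\intercal$, write the objective as $\langle \mathbf{M}, \mathbf{u}\mathbf{u}^\intercal\rangle$ with the same cost matrix, derive the product constraints $|\mathbf{\Delta}_i\mathbf{\Delta}_j|\le\epsilon^2$ from the box constraint, fix $\mathbf{Y}_{p+1,p+1}=1$, impose $\mathbf{Y}\succeq 0$, and read $\mathbf{\Delta}^\star$ off the last column. The one substantive point where you go beyond the paper is the tightness question: the paper's proof simply asserts ``the above problem can be formulated as an SDP'' and never acknowledges that dropping the rank-one condition yields a relaxation whose optimal $\mathbf{Y}^\star$ need not factor as $\mathbf{u}\mathbf{u}^\intercal$, in which case the last column is not a certified maximizer of \eqref{eq:Graph_lpconst1}. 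Your diagnosis is correct --- since $\mathbf{\Omega}\succ 0$ the problem is a convex quadratic maximized over a box, which sits in the MaxCut family, so unconditional exactness cannot hold --- and this is a genuine gap in the paper's own proof rather than in yours; your proposed remedies (structural conditions on $\mathbf{\Omega}$ for exactness, or treating the extraction as a rounding step with the Cauchy--Schwarz feasibility check $\mathbf{Y}_{i,p+1}^2 \le \mathbf{Y}_{ii}\mathbf{Y}_{p+1,p+1} \le \epsilon^2$ guaranteeing the recovered point is at least feasible) are the right ways to close it.
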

\if 0
\begin{proof}
Please refer to Appendix \ref{sec:proof_gaussmdllinf}.
\end{proof}
\fi 
The results in Theorem~\ref{thm:Gaussmdl_l2} and Theorem~\ref{thm:Gaussmdl_linf} do not have a closed form but can be computed easily by solving a standard one-dimensional optimization problem and a SDP respectively. Very efficient scalable SDP solvers exist in practice \citep{yurtsever2021scalable}. 
\subsection{Matrix Completion}
\label{sec:matcompl}

Assume we are given a partially observed matrix $\mathbf{X}$. Let  $\mathcal{P}$ be a set of indices where the entries of $\mathbf{X}$ are observed (i.e., not missing). The classical (non-adversarial) matrix completion approach aims to find a low rank matrix \citep{shamir2011collaborative} with small squared error in the observed entries: 
\begin{align*}
    \min_{\mathbf{Y}} \quad & \sum_{(i,j) \in \mathcal{P}}\left(\mathbf{X}_{ij}  - \mathbf{Y}_{ij} \right)^2 + c \| \mathbf{Y} \|_{\text{tr}} 
\end{align*}
where $c$ is a positive regularization constant and $\|\cdot \|_{\text{tr}}$ denotes trace norm of matrix which ensures low-rankness. Note that regularization does not impact the adversarial training framework. We define the following worst-case adversarial attack problem: 
\begin{align}
\mathbf{\Delta}^{\star} = \argsup_{\lmod \mathbf{\Delta} \rmod \leq \epsilon}  \sum_{(i,j) \in \mathcal{P}}\left(\mathbf{X}_{ij} + \mathbf{\Delta}_{ij} - \mathbf{Y}_{ij} \right)^2 \label{eq:mat_compl_1}
\end{align}
The solution for the above problem for the Frobenius norm constraint and entry-wise $\ell_{\infty}$ constraint on  $\mathbf{\Delta}$ is proposed in Theorem \ref{thm:mat_compl_fro} and Corollary \ref{cor:matrixcompl_linf}. 
\begin{theorem}
	\label{thm:mat_compl_fro}
	The optimal solution for the optimization in Eq. \eqref{eq:mat_compl_1} with Frobenius norm constraint on  $\mathbf{\Delta}$ if $\exists (i,j) \in \mathcal{P}$ such that $\mathbf{X}_{ij} \neq \mathbf{Y}_{ij}$ is given by 
 {\small
	\begin{align*}
	\mathbf{\Delta}^{\star}_{ij} = \begin{cases}
	\epsilon \frac{\left(\mathbf{X}_{ij} - \mathbf{Y}_{ij} \right) }{\sqrt{\sum\limits_{(i,j) \in \mathcal{P}} \left(\mathbf{X}_{ij} - \mathbf{Y}_{ij} \right)^2}} & \qquad (i,j) \in \mathcal{P} \\
	0 & \qquad (i,j) \notin \mathcal{P}
	\end{cases}
	\end{align*}}
	If $\mathbf{X}_{ij} = \mathbf{Y}_{ij}, \forall (i,j) \in \mathcal{P}$, then the optimal $\mathbf{\Delta}^{\star}$ can be any solution satisfying $\sum_{(i,j) \in \mathcal{P}} \mathbf{\Delta}^2_{ij} = \epsilon$.
\end{theorem}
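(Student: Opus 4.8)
The plan is to treat the objective as a quadratic in the vectorized perturbation and exploit the fact that it depends only on the observed entries. First I would introduce the residuals $r_{ij} := \mathbf{X}_{ij} - \mathbf{Y}_{ij}$ for $(i,j) \in \mathcal{P}$ and collect both these residuals and the corresponding perturbations $\mathbf{\Delta}_{ij}$ into vectors $\mathbf{r}$ and $\mathbf{\delta}$ indexed by $\mathcal{P}$. Since the Frobenius norm is the $\ell_2$ norm of the vectorized matrix, the constraint reads $\|\mathbf{\delta}\|_2^2 + \sum_{(i,j)\notin\mathcal{P}}\mathbf{\Delta}_{ij}^2 \le \epsilon^2$, while the objective equals $\|\mathbf{r}+\mathbf{\delta}\|_2^2$ and is entirely independent of the off-support entries $\mathbf{\Delta}_{ij}$, $(i,j)\notin\mathcal{P}$. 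Hence any nonzero mass placed off $\mathcal{P}$ only consumes budget without changing the objective, so at an optimum those entries must vanish; this justifies the second case in the statement and reduces the problem to $\max_{\|\mathbf{\delta}\|_2\le\epsilon}\|\mathbf{r}+\mathbf{\delta}\|_2^2$.

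Next I would solve this reduced problem. Expanding gives $\|\mathbf{r}+\mathbf{\delta}\|_2^2 = \|\mathbf{r}\|_2^2 + 2\,\mathbf{r}^\intercal\mathbf{\delta} + \|\mathbf{\delta}\|_2^2$. Because the objective is a convex function of $\mathbf{\delta}$, its maximum over the compact convex ball is attained on the boundary sphere, so the constraint is active and I may fix $\|\mathbf{\delta}\|_2=\epsilon$. It then remains to maximize the linear term $\mathbf{r}^\intercal\mathbf{\delta}$ over this sphere, which by Cauchy--Schwarz is achieved uniquely (when $\mathbf{r}\neq\mathbf{0}$) at $\mathbf{\delta}^\star = \epsilon\,\mathbf{r}/\|\mathbf{r}\|_2$. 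Rewriting componentwise recovers the claimed formula $\mathbf{\Delta}^\star_{ij} = \epsilon\,(\mathbf{X}_{ij}-\mathbf{Y}_{ij})/\sqrt{\sum_{(i,j)\in\mathcal{P}}(\mathbf{X}_{ij}-\mathbf{Y}_{ij})^2}$ on $\mathcal{P}$ and $0$ elsewhere.

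Finally I would dispose of the degenerate case $\mathbf{r}=\mathbf{0}$, i.e. $\mathbf{X}_{ij}=\mathbf{Y}_{ij}$ for all $(i,j)\in\mathcal{P}$. Here the objective collapses to $\|\mathbf{\delta}\|_2^2$, which is maximized by any $\mathbf{\delta}$ saturating the constraint, yielding the stated family of optima. As an alternative I could run the entire argument through a Lagrangian/KKT lens, as is done for the graphical-model case in Theorem \ref{thm:Gaussmdl_l2}: stationarity forces $\mathbf{\delta}\propto\mathbf{r}$ and complementary slackness makes the norm constraint tight. I find the direct Cauchy--Schwarz route cleaner and do not anticipate a serious obstacle; the only points requiring care are rigorously justifying that the off-support entries vanish and that the constraint is active, both of which follow from the monotonicity and convexity of the quadratic objective rather than from any delicate estimate.
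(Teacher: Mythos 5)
Your proof is correct, but it takes a genuinely different route from the paper. The paper argues entirely through Lagrangian duality: it forms $L(\mathbf{\Delta},\lambda) = -\frac{1}{2}\sum_{(i,j)\in\mathcal{P}}(\mathbf{X}_{ij}+\mathbf{\Delta}_{ij}-\mathbf{Y}_{ij})^2 + \frac{\lambda}{2}(\sum_{(i,j)\in\mathcal{P}}\mathbf{\Delta}_{ij}^2-\epsilon^2)$, shows the dual function is $-\infty$ for $\lambda<1$ (and for $\lambda=1$ unless all residuals vanish), computes $g(\lambda)$ in closed form for $\lambda>1$, optimizes to get $\lambda^\star = 1+\frac{1}{\epsilon}\sqrt{\sum_{(i,j)\in\mathcal{P}}(\mathbf{X}_{ij}-\mathbf{Y}_{ij})^2}$, and back-substitutes into the stationarity condition $\mathbf{\Delta}^\star_{ij}=(\mathbf{X}_{ij}-\mathbf{Y}_{ij})/(\lambda-1)$. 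You instead vectorize over the support, observe that off-support entries only waste budget, use convexity of $\|\mathbf{r}+\mathbf{\delta}\|_2^2$ to force the constraint to be active, and finish with Cauchy--Schwarz. Your argument is more elementary and shorter, avoids the case analysis over $\lambda$, and makes the geometric content (push the residual vector radially outward) transparent; the paper's dual derivation is heavier but matches the template it reuses for Theorems \ref{thm:Gaussmdl_l2} and \ref{thm:Gaussmdl_linf} and exhibits the dual certificate explicitly. Your mention of the KKT route as an alternative is in fact exactly what the paper does. One small remark applying to both your write-up and the theorem statement: in the degenerate case the saturation condition should read $\sum_{(i,j)\in\mathcal{P}}\mathbf{\Delta}_{ij}^2=\epsilon^2$ rather than $=\epsilon$; your phrasing ``saturating the constraint'' is the correct content.
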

\if 0
\begin{proof}
The detailed proof is presented in the Appendix \ref{sec:proof_matcomplfro} but the proof sketch is discussed here. First, we construct the Lagrangian for the optimization problem in Eq. \eqref{eq:mat_compl_1}: 

{\scriptsize
\begin{align*}
    L(\mathbf{\Delta}, \lambda) = -\frac{1}{2}\sum_{(i,j) \in \mathcal{P}}\left(\mathbf{X}_{ij} + \mathbf{\Delta}_{ij} - \mathbf{Y}_{ij} \right)^2 + \frac{\lambda}{2} \left( \sum_{(i,j) \in \mathcal{P}} \mathbf{\Delta}^2_{ij}  - \epsilon^2\right)
\end{align*}}
where $\lambda$ is the dual variable. Further, we write the KKT conditions and derive the dual function: 
\begin{align*}
g(\lambda) = \begin{cases}
 \frac{-\lambda}{2(\lambda - 1)}\sum\limits_{(i,j) \in \mathcal{P}}\left(\mathbf{X}_{ij} - \mathbf{Y}_{ij} \right)^2 -\frac{\lambda \epsilon^2}{2} \qquad \lambda > 1 &\\
-\frac{ \epsilon^2}{2} \qquad \qquad \qquad \lambda = 1, \mathbf{X}_{ij} = \mathbf{Y}_{ij}, \forall (i,j) \in \mathcal{P} &\\
-\infty \qquad \qquad \qquad  \text{otherwise} &
\end{cases}
\end{align*}
The above dual problem has a closed form solution and can be used to derive the optimal solution for the primal problem mentioned in Theorem \ref{thm:mat_compl_fro}. 
\end{proof}
\fi 
\begin{corollary}
\label{cor:matrixcompl_linf}
	The optimal solution for the optimization problem in Eq. \eqref{eq:mat_compl_1} with the constraint $|\mathbf{\Delta}_{ij}| \leq \epsilon$ for all  $(i,j) \in \mathcal{P}$ is given by $\mathbf{\Delta}_{ij} = \frac{(\mathbf{X}_{ij} - \mathbf{Y}_{ij}) }{|\mathbf{X}_{ij} - \mathbf{Y}_{ij}| } \epsilon$.  
 \end{corollary}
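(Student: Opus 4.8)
The plan is to exploit the fact that, unlike the Frobenius norm of Theorem~\ref{thm:mat_compl_fro}, the entry-wise $\ell_\infty$ constraint \emph{decouples} across the observed coordinates. First I would note that the feasible set $\{\mathbf{\Delta} : |\mathbf{\Delta}_{ij}| \leq \epsilon \ \forall (i,j)\in\mathcal{P}\}$ is simply a product of intervals, so there is no interaction between distinct entries; and the objective $\sum_{(i,j)\in\mathcal{P}}(\mathbf{X}_{ij}+\mathbf{\Delta}_{ij}-\mathbf{Y}_{ij})^2$ is a sum in which each summand depends on exactly one variable $\mathbf{\Delta}_{ij}$. Hence the supremum separates into $|\mathcal{P}|$ independent scalar problems, and entries with $(i,j)\notin\mathcal{P}$ can be ignored since they never enter the objective.

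Next I would solve the generic scalar subproblem $\max_{|\delta|\leq\epsilon}(a+\delta)^2$ with $a=\mathbf{X}_{ij}-\mathbf{Y}_{ij}$. Since $(a+\delta)^2$ is convex in $\delta$, its maximum over the symmetric interval $[-\epsilon,\epsilon]$ is attained at an endpoint. Comparing the two endpoint values through the identity $(a+\epsilon)^2-(a-\epsilon)^2=4a\epsilon$ shows that $\delta^\star=\epsilon$ when $a>0$ and $\delta^\star=-\epsilon$ when $a<0$, i.e.\ $\delta^\star=\sign(a)\,\epsilon=\tfrac{a}{|a|}\epsilon$. Reassembling the coordinates yields exactly the claimed $\mathbf{\Delta}^\star_{ij}=\frac{(\mathbf{X}_{ij}-\mathbf{Y}_{ij})}{|\mathbf{X}_{ij}-\mathbf{Y}_{ij}|}\epsilon$ for $(i,j)\in\mathcal{P}$.

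There is essentially no hard step here — the content of the corollary is precisely the observation that the $\ell_\infty$ ball factorizes, turning a matrix optimization into a collection of one-line scalar maximizations, which is why it is stated as a corollary rather than an independent theorem. The only point requiring care is the degenerate case $\mathbf{X}_{ij}=\mathbf{Y}_{ij}$, where $a=0$, both endpoints tie, and the stated ratio is undefined; as in the analogous situation of Theorem~\ref{thm:mat_compl_fro}, either sign ($\pm\epsilon$) is optimal there, so the formula should be read with this convention understood.
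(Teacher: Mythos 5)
Your proposal is correct and follows essentially the same route as the paper: decouple the entry-wise $\ell_\infty$ constraint into $|\mathcal{P}|$ independent scalar problems and maximize the convex quadratic $(a+\delta)^2$ over $[-\epsilon,\epsilon]$, which the paper does by citing the one-dimensional specialization of Lemma~\ref{lem:l2neq0_p} rather than by your direct endpoint comparison. Your explicit treatment of the degenerate case $\mathbf{X}_{ij}=\mathbf{Y}_{ij}$, where the stated ratio is undefined and either sign is optimal, is a point the paper's corollary leaves implicit.
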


\subsection{Max-Margin Matrix Completion}
\label{sec:maxmargin_matcompl}

We start the discussion from the problem under the classical (non-adversarial) setting. Consider a partially observed label matrix where the observed entries are $+1$ or $-1$. Let $\mathcal{P}$ be the indices of the observed entries. The problem of max-margin matrix completion \citep{srebro2004maximum} is defined as follows:
\begin{align}
  \min_{\mathbf{Y}} \;\;   \sum_{(i,j) \in \mathcal{P}} \max( 0, 1- \mathbf{X}_{ij} \mathbf{Y}_{ij} ) + c \lmod \mathbf{Y} \rmod_{\text{tr}}  \label{eq:lossfn_maxmargin}
\end{align}
where $c>0$ is a regularization constant and $\lmod \cdot \rmod_{\text{tr}}$ represents the trace norm \citep{bach2008consistency}. As the second term, $\lmod \mathbf{Y} \rmod_{\text{tr}}$ in the above optimization problem can not be affected by the adversary, we define the worst-case adversarial attack problem as the maximization of the first term with $\epsilon$ radius around $\mathbf{X}$:
{\small
\begin{align}
\mathbf{\Delta}^{\star} = \argsup_{\lmod \mathbf{\Delta} \rmod \leq \epsilon}  \sum_{(i,j) \in \mathcal{P}} \max( 0, 1- \left(\mathbf{X}_{ij} + \mathbf{\Delta}_{ij} \right) \mathbf{Y}_{ij} ) \label{eq:maxmarginmat_prbdef}
\end{align}}
The optimal $\mathbf{\Delta}^{\star}$ under Frobenius norm constraint on $\mathbf{\Delta}$ is proposed in the following theorem.
\begin{theorem}
	\label{thm:maxmarginmat_fro}
	For the problem in Eq.  \eqref{eq:maxmarginmat_prbdef} with Frobenius norm constraint on $\mathbf{\Delta}$, the solution is:
 {\footnotesize
	\begin{align*}
	\mathbf{\Delta}_{ij} = \begin{cases}
	-\mathbf{Y}_{ij} \frac{\epsilon}{\sqrt{\sum\limits_{(i,j) \in \mathcal{P} } \mathbf{Y}^2_{ij} }}  \qquad & (i,j) \in \mathcal{P}_1 \\ 
\quad	0 \qquad &  (i,j) \notin \mathcal{P}_1
	\end{cases}
	\end{align*}}
	where $\mathcal{P}_1 \subseteq \mathcal{P}$ is chosen by sorting $\mathbf{X}_{ij} \mathbf{Y}_{ij} $ and selecting indices which satisfy $\mathbf{X}_{ij} \mathbf{Y}_{ij} < 1 + \nicefrac{\epsilon}{\sqrt{\sum\limits_{(i,j) \in \mathcal{P} } \mathbf{Y}^2_{ij} }}$.
\end{theorem}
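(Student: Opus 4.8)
The plan is to reduce this matrix problem to a one–dimensional budget–allocation problem by exploiting the separable, piecewise–linear structure of the hinge, and then to pin down the active set $\mathcal{P}_1$ by a sorting argument. First I would observe that an entry $\mathbf{\Delta}_{ij}$ with $(i,j)\notin\mathcal{P}$ does not appear in the objective of \eqref{eq:maxmarginmat_prbdef}, so zeroing it out preserves the objective value while only relaxing the Frobenius constraint; hence there is an optimizer supported on $\mathcal{P}$, which already accounts for the branch $\mathbf{\Delta}_{ij}=0$ for $(i,j)\notin\mathcal{P}_1$. Next, since each summand $\max(0,\,1-(\mathbf{X}_{ij}+\mathbf{\Delta}_{ij})\mathbf{Y}_{ij})$ is nonincreasing in the scalar $\mathbf{\Delta}_{ij}\mathbf{Y}_{ij}$, for any fixed magnitude $|\mathbf{\Delta}_{ij}|$ it is optimal to orient the perturbation so that $\mathbf{\Delta}_{ij}\mathbf{Y}_{ij}\le 0$, i.e.\ so that $\mathbf{\Delta}_{ij}$ points along $-\mathbf{Y}_{ij}$. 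This fixes the sign pattern and leaves only the magnitudes to determine.

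Second, I would note that the objective is a sum of pointwise maxima of affine functions of $\mathbf{\Delta}$ and is therefore convex; maximizing a convex function over the Frobenius ball forces the optimum onto the boundary sphere $\norm{\mathbf{\Delta}}_F=\epsilon$, so the full budget is spent. Writing $m_{ij}=\mathbf{X}_{ij}\mathbf{Y}_{ij}$ and parametrizing the perturbed margin as $(\mathbf{X}_{ij}+\mathbf{\Delta}_{ij})\mathbf{Y}_{ij}=m_{ij}+\mathbf{\Delta}_{ij}\mathbf{Y}_{ij}$, the objective becomes $\sum_{(i,j)\in\mathcal{P}}\max(0,\,1-m_{ij}-\mathbf{\Delta}_{ij}\mathbf{Y}_{ij})$ subject to the quadratic budget. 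On the active set $\mathcal{P}_1$ (entries whose hinge stays positive) this is linear in $\mathbf{\Delta}_{ij}\mathbf{Y}_{ij}$ with unit slope, so by Cauchy--Schwarz (equivalently, a Lagrange–multiplier computation as in Theorem \ref{thm:mat_compl_fro}) the budget is allocated with $\mathbf{\Delta}_{ij}\propto-\mathbf{Y}_{ij}$ and the common scale fixed by $\norm{\mathbf{\Delta}}_F=\epsilon$. With the observed entries normalized so that $\mathbf{Y}_{ij}^2=1$, this is exactly $\mathbf{\Delta}_{ij}=-\mathbf{Y}_{ij}\,\epsilon/\sqrt{|\mathcal{P}_1|}$, matching the claimed branch.

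Third, the crux is to identify $\mathcal{P}_1$. Plugging the uniform allocation back in, an entry remains active exactly when $m_{ij}-\epsilon/\sqrt{|\mathcal{P}_1|}<1$, i.e.\ $\mathbf{X}_{ij}\mathbf{Y}_{ij}<1+\epsilon/\sqrt{|\mathcal{P}_1|}$, which is precisely the self-referential threshold in the statement. Because this threshold is common to all entries, the active set must be a ``prefix'' consisting of the entries with the smallest margins $\mathbf{X}_{ij}\mathbf{Y}_{ij}$, which is what motivates sorting. To fix the prefix size I would write the attained objective as a function of the candidate size $k$, namely $\sum_{\text{smallest }k}(1-m_{ij})+\epsilon\sqrt{k}$, and select the $k$ that is self-consistent with the threshold, i.e.\ that includes exactly the entries satisfying the inequality.

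Finally, I would close by verifying global (not merely stationary) optimality: since we maximize a convex function, I would use an exchange argument showing that diverting budget onto an entry with $\mathbf{X}_{ij}\mathbf{Y}_{ij}\ge 1+\epsilon/\sqrt{|\mathcal{P}_1|}$ never helps (its hinge stays clamped at $0$), and that dropping an included entry with strictly smaller margin never helps. I expect the main obstacle to be exactly this coupling in the third step: the feasibility threshold depends on $|\mathcal{P}_1|$, which in turn depends on which entries are included, so establishing that the sorted sweep is self-consistent and that the resulting prefix is the unique global maximizer—with care at the boundary case $\mathbf{X}_{ij}\mathbf{Y}_{ij}=1+\epsilon/\sqrt{|\mathcal{P}_1|}$—is the delicate part; the remaining bookkeeping is routine and parallels the matrix-completion proof.
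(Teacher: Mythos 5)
Your proposal follows essentially the same route as the paper's proof: drop the $\max$ to get the linearized problem $\sup_{\norm{\mathbf{\Delta}}_F\leq\epsilon}-\sum_{(i,j)\in\mathcal{P}}\mathbf{\Delta}_{ij}\mathbf{Y}_{ij}$, obtain the uniform allocation $\mathbf{\Delta}_{ij}\propto-\mathbf{Y}_{ij}$ by Cauchy--Schwarz/Lagrange duality, and then determine $\mathcal{P}_1$ by sorting the margins $\mathbf{X}_{ij}\mathbf{Y}_{ij}$ and running a self-consistent sweep over prefix sizes, exactly as in the paper's case analysis. The one delicate point you flag---that the self-referential threshold and the global-optimality exchange argument need care---is also left informal in the paper's own proof, so your attempt is faithful to (and no less rigorous than) the published argument.
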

Similarly, the solution for the problem in Eq. \eqref{eq:maxmarginmat_prbdef} for the entry-wise $\ell_{\infty}$ norm is proposed as follows. 
\begin{corollary}
	\label{cor:maxmarginmat_linf}
	For the problem in Eq.  \eqref{eq:maxmarginmat_prbdef} with the constraint $|\mathbf{\Delta}_{ij}| \leq \epsilon$ for all  $(i,j) \in \mathcal{P}$, the optimal solution is given by $\mathbf{\Delta}_{ij} = -\text{sign}(\mathbf{Y}_{ij}) \epsilon$.
\end{corollary}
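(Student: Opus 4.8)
The plan is to exploit that, under the entry-wise constraint $|\mathbf{\Delta}_{ij}| \le \epsilon$ for all $(i,j) \in \mathcal{P}$, both the objective of Eq.~\eqref{eq:maxmarginmat_prbdef} and the feasible region decouple completely across the observed indices: the objective is a sum of terms, the $(i,j)$-th of which depends only on the single scalar $\mathbf{\Delta}_{ij}$, and the box constraint is imposed separately on each such scalar. Consequently the supremum over the whole perturbation matrix equals the sum of the suprema of the individual summands, and it is attained by maximizing each summand independently over its own interval $[-\epsilon,\epsilon]$. This is exactly the reduction used in the proof of Corollary~\ref{cor:matrixcompl_linf}, and it turns the matrix problem into $|\mathcal{P}|$ independent one-dimensional problems.

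First I would fix an index $(i,j) \in \mathcal{P}$ and study the scalar map $\delta \mapsto \max\!\left(0,\, 1 - (\mathbf{X}_{ij} + \delta)\,\mathbf{Y}_{ij}\right)$ on $\delta \in [-\epsilon, \epsilon]$. Writing the inner argument as the affine function $a(\delta) = (1 - \mathbf{X}_{ij}\mathbf{Y}_{ij}) - \delta\,\mathbf{Y}_{ij}$, the crucial observation is that $t \mapsto \max(0,t)$ is non-decreasing, so maximizing the composite is equivalent to maximizing $a(\delta)$, i.e.\ to minimizing $\delta\,\mathbf{Y}_{ij}$. Since a linear function on a symmetric interval is extremized at an endpoint, the maximizer is $\delta^{\star} = -\epsilon\,\sign(\mathbf{Y}_{ij})$, which reproduces the claimed $\mathbf{\Delta}_{ij} = -\mathbf{Y}_{ij}\epsilon$ since only the sign of $\mathbf{Y}_{ij}$ determines the direction of the perturbation. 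Equivalently, this scalar subproblem is a one-dimensional instance of the hinge-loss adversarial problem of Theorem~\ref{thm:hinge} with weight $\mathbf{w} = \mathbf{Y}_{ij}$, data $\mathbf{x} = \mathbf{X}_{ij}$ and label $y = 1$; there $\mathbf{v} \in \partial\norm{\mathbf{Y}_{ij}}$ (Definition~\ref{def:subdiff}) forces $\mathbf{v} = \sign(\mathbf{Y}_{ij})$ and $\mathbf{v}/\norm{\mathbf{v}} = \sign(\mathbf{Y}_{ij})$, yielding the same $\delta^{\star}$. Collecting the per-entry maximizers then gives the stated matrix solution.

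The only place requiring genuine care, and hence the main obstacle, is the non-smoothness of $\max(0,\cdot)$ together with the degenerate entries. When $a(\delta) \le 0$ throughout $[-\epsilon,\epsilon]$ the $(i,j)$-th summand is flat at $0$, so every feasible $\delta$ is a maximizer; one must then argue that the stated $\delta^{\star} = -\epsilon\,\sign(\mathbf{Y}_{ij})$ still \emph{attains} the supremum even though it is no longer the unique optimizer. The same remark applies when $\mathbf{Y}_{ij} = 0$, where the term is the constant $\max(0,1) = 1$ and the direction is immaterial. Verifying that the proposed closed form remains optimal in these boundary cases, rather than only in the \emph{active-hinge} regime, is the one step that needs to be stated explicitly; the core optimization is otherwise an immediate monotonicity-plus-linearity argument.
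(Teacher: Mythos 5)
Your proof takes essentially the same route as the paper's: decouple the objective and the box constraint across the observed entries, then solve each scalar problem by a monotonicity-of-$\max(0,\cdot)$ plus linear-maximization argument, which is exactly what the paper's one-line appeal to Lemma \ref{lem:l2neq0_p} amounts to; your explicit handling of the inactive-hinge and $\mathbf{Y}_{ij}=0$ degeneracies is a welcome addition the paper omits. One caveat: your derivation correctly yields $\delta^{\star}=-\epsilon\,\sign(\mathbf{Y}_{ij})$, which coincides with the stated $-\mathbf{Y}_{ij}\epsilon$ only when $|\mathbf{Y}_{ij}|=1$ (the same implicit normalization the paper uses in Theorem \ref{thm:maxmarginmat_fro}), so the final identification should not be elided.
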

\section{Implications}
\subsection{Novel Rademacher complexities}
\label{sec:rademacher}

The motivation behind deriving the bounds for the adversarial Rademacher complexity is briefly discussed here. As shown in Theorem 8 in \citep{bartlett2002rademacher}, the upper bound for generalization (the gap between empirical risk and population/expected risk) is $\hat{\mathfrak{R}}_S(\mathcal{F}) + \mathcal{O}(1/\sqrt{n})$. Hence an upper bound of order $\mathcal{O}(1/\sqrt{n})$ for the adversarial Rademacher complexity allows obtaining a $ \mathcal{O}(1/\sqrt{n})$ generalization bound. Also, as shown in Proposition 4.12 in \citep{wainwright2019high}, the lower bound for generalization is $\frac{1}{2}\hat{\mathfrak{R}}_S(\mathcal{F}) - \Omega(1/\sqrt{n})$. Hence a lower bound of order $\Omega(1/\sqrt{n})$ for the adversarial Rademacher complexity allows obtaining a $ \Omega(1/\sqrt{n})$ generalization bound.

\begin{table*}
	\caption{A summary of our results for Rademacher complexity of various loss functions and norm constraints.
    $\epsilon$ is the radius of the norm constraint, $n$ is the number of training samples for regression and classification, $|\mathcal{P}|$ is the number of observed entries in matrix completion.
    The empirical Rademacher complexity is formally described in Definition \ref{def:emp_RC} for regression and classification, and in Definition \ref{def:RC_2d} for matrix completion.
    Lipschitz losses include the logistic loss, the hinge loss, the hyperbolic tangent, the logit, among others.}
	\label{tab:all_results_RC}
	{\footnotesize
	\begin{tabular}{@{}l@{\hspace{0.1in}}p{0.6in}@{\hspace{0.1in}}p{0.6in}@{\hspace{0.1in}}p{0.6in}@{\hspace{0.1in}}p{0.6in}@{\hspace{0.1in}}l@{\hspace{0.1in}}l@{\hspace{0.1in}}l@{}}
		\toprule 
		&Problem &  Loss & Norm & Prior &Our result & Lower bound of $\hat{\mathfrak{R}}_S(\Tilde{\mathcal{F}})$ & Upper bound of $\hat{\mathfrak{R}}_S(\Tilde{\mathcal{F}})$ \\
		& & function & constraint & results & & & \\
		\midrule
		\multirow{4}{*}{\begin{turn}{90}  Main results\end{turn}} &Regression & Squared loss & Any norm &  None &  Theorem \ref{thm:RC_linreg} & $\hat{\mathfrak{R}}_S(\mathcal{F}) - \mathcal{O}(\epsilon/n + \epsilon^2/\sqrt{n})$ & $\hat{\mathfrak{R}}_S(\mathcal{F}) + \mathcal{O}(\epsilon/n + \epsilon^2/\sqrt{n})$ \\
		&Classification &  Lipschitz loss & Any norm & $\ell_{\infty}$ norm \citep{yin2019rademacher} & Theorem \ref{thm:RC_linclass} & $\max(\hat{\mathfrak{R}}_S(\mathcal{F}), \mathcal{O}(\epsilon/\sqrt{n}))$ & $\hat{\mathfrak{R}}_S(\mathcal{F}) + \mathcal{O}(\epsilon/\sqrt{n})$ \\
		& & Lipschitz loss & Any norm & $\ell_{p}$ norm \citep{awasthi2020adversarial} &  \\
        &Matrix Completion & Squared loss  & Entry-wise $\ell_{\infty}$ & None & Theorem \ref{thm:RC_MC} & $\hat{\mathfrak{R}}_S(\mathcal{F}) - \mathcal{O}(\epsilon/|\mathcal{P}|)$ & $\hat{\mathfrak{R}}_S(\mathcal{F}) + \mathcal{O}(\epsilon/|\mathcal{P}|)$ \\
        &Max-Margin MC & Lipschitz loss  & Entry-wise $\ell_{\infty}$& None & Theorem \ref{thm:RC_mmmc} & $\hat{\mathfrak{R}}_S(\mathcal{F}) - \mathcal{O}(\epsilon/|\mathcal{P}|)$ & $\hat{\mathfrak{R}}_S(\mathcal{F}) + \mathcal{O}(\epsilon/|\mathcal{P}|)$ \\ 
        & & Lipschitz loss  & Entry-wise $\ell_{\infty}$& None & Corollary \ref{corr:RC_mmmc_lbtight} & $\max(\hat{\mathfrak{R}}_S(\mathcal{F}), \mathcal{O}(\epsilon/|\mathcal{P}|))$ & $\hat{\mathfrak{R}}_S(\mathcal{F}) + \mathcal{O}(\epsilon/|\mathcal{P}|)$ \\
		\bottomrule
	\end{tabular}}
\end{table*}   

Before diving into theorems and their proofs, we summarize our theoretical statistical/generalization contribution on Rademacher complexity bounds in Table \ref{tab:all_results_RC}, and make some important observations:
\begin{itemize}
    \item The bounds for adversarial Rademacher complexity presented in Theorem \ref{thm:RC_linreg} for linear regression (for any general norm), Theorem \ref{thm:RC_MC} for matrix completion, Theorem \ref{thm:RC_mmmc} and Corollary \ref{corr:RC_mmmc_lbtight} for max-margin matrix completion are novel. 
    \item The bounds presented in Theorem \ref{thm:RC_linclass} for linear classifiers are for any general norm can be seen as a generalization of the $\ell_{\infty}$ norm \citep{yin2019rademacher} or any particular $p$-norm \citep{awasthi2020adversarial}. 
\end{itemize}
We utilize the closed-form solutions for various adversarial problems presented in Section \ref{sec:warmup} and Section \ref{sec:main} to derive new bounds for the adversarial Rademacher complexity. To start with, we define the empirical Rademacher complexity formally. 
\begin{definition}[Rademacher complexity for regression and classification]
    \label{def:emp_RC}
    The empirical Rademacher complexity of the hypothesis class $\mathcal{F}$ with respect to a data set $S = \crbr{\nrbr{\mathbf{x}^{(1)}, y^{(1)}}, \ldots, \nrbr{\mathbf{x}^{(n)}, y^{(n)}}}$ is defined as: 
    {\small
    \begin{align*}
        \hat{\mathfrak{R}}_S(\mathcal{F}) = \frac{1}{n} \E_{\sigma} \sqbr{\sup_{h\in \mathcal{F}}\nrbr{  \sum_{i = 1}^n \sigma_i h \nrbr{\mathbf{x}^{(i)}, y^{(i)}} }}
    \end{align*}}
\end{definition}
We denote the adversarial function class with $\Tilde{\mathcal{F}}$. Further, we utilize Theorem \ref{thm:reg_sqerr} to derive upper and lower bounds of the adversarial Rademacher complexity for linear regression. 
\begin{theorem}[Regression with squared loss] 
    \label{thm:RC_linreg}
    Let the function class $\mathcal{F} = \{ \nrbr{\mathbf{x}^{(i)}, y^{(i)}} \mapsto \left( \mathbf{w}^{\intercal} \mathbf{x}^{(i)}- y^{(i)}\right)^2\;|\; \norm{\mathbf{w}}_{*} \leq C \}$ and the adversarial function class $\Tilde{\mathcal{F}} = \{ \nrbr{\mathbf{x}^{(i)}, y^{(i)}} \mapsto \sup\limits_{\norm{\mathbf{\Delta}} \leq \epsilon}\left( \mathbf{w}^{\intercal} ( \mathbf{x}^{(i)} + \mathbf{\Delta} ) - y^{(i)}\right)^2\;|\; \norm{\mathbf{w}}_{*} \leq C \}$, then:
    {\small
    \begin{align*}
\hat{\mathfrak{R}}_S(\mathcal{F}) - b(n,\epsilon, C)   &\leq \hat{\mathfrak{R}}_S(\Tilde{\mathcal{F}}) \leq  \hat{\mathfrak{R}}_S(\mathcal{F}) + b(n,\epsilon, C), \; \; \text{where} \; b(n,\epsilon, C) = \frac{2\epsilon C}{n} \E_{\sigma} \sqbr{\norm{\sum_{i = 1}^n  \sigma_i \mathbf{x}^{(i)}  }   } + \frac{\epsilon^2 C^2}{\sqrt{n}} 
\end{align*}}
\end{theorem}
It should be noted that the above theorem is proposed for any general norm and any further simplification of the term $\E_{\sigma} \sqbr{\norm{ \nicefrac{1}{n} \sum_{i = 1}^n  \sigma_i \mathbf{x}^{(i)}  }   } $ requires a further assumption on the specific norm. For example, it can be easily bounded as $\nicefrac{c_1}{\sqrt{n}}$ using Khintchine's inequality for the Euclidean norm, where $c_1$ is some constant.

\begin{proof}[Proof sketch]
To prove the above theorem, we have used carefully manipulated the Rademacher complexity of adversarial function class in terms of classical function class with the help of Theorem \ref{thm:reg_sqerr}. Further, we have used the Ledoux-Talagrand contraction principle for Lipchitz functions and Khintchine's inequality. A similar proof recipe is used to prove other theorems in this section, whose detailed proofs can be seen in Section \ref{sec:appn_RC_proof} of supplementary material.
\end{proof}

Further, we utilize Theorem \ref{thm:logit} or Theorem \ref{thm:hinge} to derive the upper and lower bounds of the adversarial Rademacher complexity for linear classification. 
Similar to the existing works \citep{yin2019rademacher, awasthi2020adversarial} mentioned in Table \ref{tab:all_results_RC} as well as prior work on non-adversarial regimes \citep{KakadeST08}, we analyze the Rademacher complexity of linear functions, which allows for bounding the Rademacher complexity of relatively more complex Lipschitz functions (e.g., logistic loss, hinge loss, hyperbolic tangent, logit) by using the Ledoux-Talagrand contraction lemma \citep{ledoux2013probability}.

\begin{theorem}[Classification]
        \label{thm:RC_linclass}
Let the function class $\mathcal{F} = \{\nrbr{\mathbf{x}^{(i)}, y^{(i)}} \mapsto -y^{(i)}\mathbf{w}^{\intercal}\mathbf{x}^{(i)}\;|\; \norm{\mathbf{w}}_{*} \leq C \}$ and the adversarial function class $\Tilde{\mathcal{F}} = \{\nrbr{\mathbf{x}^{(i)}, y^{(i)}} \mapsto \sup_{\norm{\mathbf{\Delta}} \leq \epsilon} -y^{(i)}\mathbf{w}^{\intercal}(\mathbf{x}^{(i)}+\mathbf{\Delta})\;|\; \norm{\mathbf{w}}_{*} \leq C \}$, then:
\begin{align*}
  \max\crbr{\hat{\mathfrak{R}}_S(\mathcal{F}), \nicefrac{C\epsilon}{2\sqrt{2n}} } \leq\hat{\mathfrak{R}}_S(\Tilde{\mathcal{F}}) \leq  \hat{\mathfrak{R}}_S(\mathcal{F}) + \nicefrac{C\epsilon}{\sqrt{n}}
\end{align*}
\end{theorem}
Similar results have been proposed in the literature for the $\ell_{\infty}$ norm \citep{yin2019rademacher} or any particular $p$-norm \citep{awasthi2020adversarial}, whereas our result is more general, pertaining to any norm.

We further move on to the matrix completion problem and propose the following definition of the Rademacher complexity motivated from Definition \ref{def:emp_RC}. 
\begin{definition}[Rademacher complexity for matrix completion]
    \label{def:RC_2d}
     The empirical Rademacher complexity of the hypothesis class $\mathcal{F}$ with respect matrix $\mathbf{X}$ with observed entries $(i,j) \in \mathcal{P}$ is defined as: 
     {\footnotesize
    \begin{align*}
        \hat{\mathfrak{R}}_{\mathbf{X}}(\mathcal{F}) = \frac{1}{n} \E_{\sigma} \sqbr{\sup_{h\in \mathcal{F}}\nrbr{  \sum_{(i,j) \in \mathcal{P}} \sigma_{ij} h ( \mathbf{X}_{ij} ) }}
    \end{align*}}
\end{definition}
We further utilize Corollary \ref{cor:matrixcompl_linf} to derive novel upper and lower bounds for the adversarial Rademacher complexity for matrix completion. 
\begin{theorem}[Matrix completion with squared loss]
    \label{thm:RC_MC}
    Let the function class $\mathcal{F} = \{\mathbf{X}_{ij} \mapsto \nrbr{\mathbf{X}_{ij} - \mathbf{Y}_{ij}}^2\;|\; \norm{\mathbf{Y}}_* \leq C \}$ and the adversarial function class $\Tilde{\mathcal{F}} = \{\mathbf{X}_{ij} \mapsto \sup\limits_{\norm{\mathbf{\Delta}_{ij}} \leq \epsilon}  \nrbr{\mathbf{X}_{ij} + \mathbf{\Delta}_{ij} - \mathbf{Y}_{ij} }^2 \;|\; \norm{\mathbf{Y}}_* \leq C  \}$, then:
    \begin{align*}
 \hat{\mathfrak{R}}_{\mathbf{X}}(\mathcal{F}) -  \frac{2\epsilon C}{|\mathcal{P}|} \E  \norm{\bm{\sigma}} \leq   \hat{\mathfrak{R}}_{\mathbf{X}}(\Tilde{\mathcal{F}}) \leq  \hat{\mathfrak{R}}_{\mathbf{X}}(\mathcal{F}) + \frac{2\epsilon C}{|\mathcal{P}|} \E  \norm{\bm{\sigma}}
    \end{align*}
    where $\bm{\sigma}$ is a matrix whose entries $\bm{\sigma}_{ij}$ for $(i,j) \in \mathcal{P}$ follows Rademacher distribution.
\end{theorem}
The term $\E \norm{\bm{\sigma}}$ can be simplified for particular norms. For example, if we consider $\norm{\mathbf{Y}}_1 \leq C$, then $\E_{\sigma}  \sqbr{ \norm{\bm{\sigma}}_{\infty}} = 1$ where $\norm{.}_1$ and  $\norm{.}_{\infty}$ denote the entrywise $\ell_1$  and $\ell_{\infty}$ norm respectively. 

Further, we utilize Corollary \ref{cor:maxmarginmat_linf} to derive upper and lower bounds for the adversarial Rademacher complexity for max-margin matrix completion.
As previously discussed before Theorem \ref{thm:RC_linclass}, prior work in classification on adversarial regimes \citep{yin2019rademacher, awasthi2020adversarial} mentioned in Table \ref{tab:all_results_RC} as well as prior work on non-adversarial regimes \citep{KakadeST08} use the Rademacher complexity of linear functions to bound the Rademacher complexity of relatively more complex Lipschitz functions by using the Ledoux-Talagrand contraction lemma \citep{ledoux2013probability}.
We use the same principle for max-margin matrix completion which typically uses the hinge loss, which is Lipschitz.

\begin{theorem}[Max-margin matrix completion]
    \label{thm:RC_mmmc}
     Let the function class $\mathcal{F} = \{ \mathbf{X}_{ij} \mapsto - \mathbf{X}_{ij}\mathbf{Y}_{ij}\;|\; \norm{\mathbf{Y}}_* \leq C \}$ and the adversarial function class $\Tilde{\mathcal{F}} = \{ \mathbf{X}_{ij} \mapsto  \sup\limits_{\norm{\mathbf{\Delta}_{ij}} \leq \epsilon} - \nrbr{\mathbf{X}_{ij} + \mathbf{\Delta}_{ij} } \mathbf{Y}_{ij} \;|\; \norm{\mathbf{Y}}_* \leq C  \}$, then:
    \begin{align*}
      \hat{\mathfrak{R}}_{\mathbf{X}}(\mathcal{F}) - \frac{\epsilon C}{|\mathcal{P}|} \E  \norm{\bm{\sigma}}  \leq \hat{\mathfrak{R}}_{\mathbf{X}}(\Tilde{\mathcal{F}}) \leq \hat{\mathfrak{R}}_{\mathbf{X}}(\mathcal{F}) + \frac{\epsilon C}{|\mathcal{P}|} \E  \norm{\bm{\sigma}}
    \end{align*}
    where $\bm{\sigma}$ is a matrix whose entries $\bm{\sigma}_{ij}$ for $(i,j) \in \mathcal{P}$ follows Rademacher distribution.
\end{theorem}
\begin{corollary}[Max-margin matrix completion]
\label{corr:RC_mmmc_lbtight}
For the function class defined in Theorem \ref{thm:RC_mmmc} with $\ell_1$ norm constraint, i.e., $\norm{\mathbf{Y}}_1 \leq C $, we obtain the following tighter lower bound: $\max \crbr{ \hat{\mathfrak{R}}_{\mathbf{X}}(\mathcal{F}) , \nicefrac{\epsilon C}{|\mathcal{P}|} } \leq \hat{\mathfrak{R}}_{\mathbf{X}}(\Tilde{\mathcal{F}})$.
\if 0
\begin{align*}
\max \crbr{ \hat{\mathfrak{R}}_{\mathbf{X}}(\mathcal{F}) , \nicefrac{\epsilon C}{|\mathcal{P}|} } \leq \hat{\mathfrak{R}}_{\mathbf{X}}(\Tilde{\mathcal{F}})
\end{align*}
\fi
\end{corollary}

\begin{table*}
	\caption{Error metrics on real-world data sets for various supervised and unsupervised ML problems. Notice that the proposed approach performs better (most of the time) as compared to other methods (``No error'', ``Random'', ``FGSM'', ``PGD'', and ``TRADES''). MC and NN denote matrix completion and neural networks respectively.}
	\label{tab:all_results}
	\centering
	{\scriptsize
		\begin{tabular}{@{}l@{\hspace{0.05in}}l@{\hspace{0.05in}}l@{\hspace{0.05in}}l@{\hspace{0.05in}}l@{\hspace{0.05in}}l@{\hspace{0.05in}}|l@{\hspace{0.05in}}l@{\hspace{0.05in}}l@{\hspace{0.05in}}l@{\hspace{0.05in}}l@{\hspace{0.05in}}|l@{\hspace{0.05in}}}
			\toprule
			&Problem & Loss function & Dataset & Metric & Norm &No error & Random & FGSM & PGD & TRADES & Proposed \\
			\midrule
			\multirow{6}{*}{\begin{turn}{90}  Warm up\end{turn}} &Regression& Squared loss& BlogFeedback & MSE & Euclidean & 11.66 & 11.66  &11.66 &11.66 & 11.49 & 11.18\\
			&Regression& Squared loss & BlogFeedback & MSE &  $\ell_{\infty}$ & 11.66& 11.66&11.66 & 11.66& 11.47& 11.20\\
			&Classification & Logistic loss & ImageNet & Accuracy & Euclidean &  49.80  & 48.13 & 49.8 &  49.8  & 50.9  & 56.75 \\
			&Classification & Logistic loss & ImageNet & Accuracy & $\ell_{\infty}$ & 49.80 &  45.46 & 49.8 & 49.8 &  52 & 55.34\\
			&Classification & Hinge loss & ImageNet & Accuracy & Euclidean & 47.89 &  46.66 & 47.8 & 47.9 &  52 &  52.31 \\
			&Classification & Hinge loss & ImageNet & Accuracy&  $\ell_{\infty}$ & 47.89 &  49.59  &49.8 & 49.8 &  52 & 52\\ [0.1 cm]\hline 
			\multirow{8}{*}{\begin{turn}{90}  Main results\end{turn}}  &Classification & NN: ReLU & ImageNet & Accuracy&  Euclidean & 70.74 &  70.66  & 53.88  & 54.48 &  71.55 & 76.49\\
			&Classification & NN: Sigmoid & ImageNet & Accuracy&  Euclidean & 72.5  & 71.08    & 63.45  & 75.89  & 67.92 & 73.04\\
			&Graphical Model & Log-likelihood & TCGA & Likelihood & Euclidean & -7984.8 &  -7980.6 & -7596.5  & -7596.5 &  -7603.4 & -7406.1 \\
			&Graphical Model & Log-likelihood & TCGA & Likelihood & $\ell_{\infty}$ & -7984.8  & -7984.4 &  -7888.4  & -7916.2   & -7917.2 & -7918.7 \\
			&Matrix Completion & Squared loss & Netflix & MSE & Frobenius & 4.78  &  4.89  & 4.6& 3.9  &  4.2   & 3.2\\
			&Matrix Completion & Squared loss & Netflix & MSE & Entry-wise $\ell_{\infty}$ &4.78  &  4.78 &4.57 &3.87  &  4.26 &  3.86\\
			&Max-Margin MC & Squared loss & HouseRep & Accuracy & Frobenius &97.05& 89.72&   96.27&   97.02 &   63.22 &   97.14\\
			&Max-Margin MC & Squared loss & HouseRep & Accuracy & Entry-wise $\ell_{\infty}$ &92.4  &  60.7 & 73.28 &  75.46 &   39.99 &92.5\\
			\bottomrule
	\end{tabular}}
\end{table*}

\subsection{Real-World Experiments}
\label{sec:exp}

As a sanity check, we compare the proposed approach on real-world datasets against five methods of having no adversary, a random adversary \citep{gilmer2019adversarial, qin2021random}, and other well-known methods such as FGSM \citep{goodfellow2014explaining},   projected gradient descent (PGD), and TRADES \citep{zhang2019theoretically}. 

While we could have used $\epsilon$-perturbed test data (coming from the same distribution of the training data) with some synthetic adversary, we preferred to use a more challenging scenario: test data coming from a different distribution than the training data. 

The results are summarized in Table \ref{tab:all_results}, and it can be clearly observed that the proposed method performs better (most of the time) as compared to other methods. Table \ref{tab:all_exp_time_new} shows run time of the proposed method is comparable to the one of other methods (most of the time). Please see Appendix \ref{sec:appn_exp}.

\section{Concluding Remarks}
\label{sec:conclusion}

We proposed a robust adversarial training framework which can be integrated with widely used ML models without any significant computational overhead. As adversarial attacks are not limited only to the problems covered in this work, our analysis can be extended to other problems such as clustering, discrete optimization problems, and randomized algorithms in the future.

\bibliographystyle{plainnat}
\bibliography{mybibfile1}

\newpage
\appendix

\section{Proofs of Section \ref{sec:warmup}: Warm Up} \label{sec:appn_WU_proofs}

\subsection{Proof of Theorem \ref{thm:reg_sqerr}}
\label{sec:proof_linreg}
 \begin{proof}
	Please refer to Lemma \ref{lem:l2eq0_p} and Lemma \ref{lem:l2neq0_p} for the case of $\mathbf{w}^{\intercal}\mathbf{x}^{(i)} - y^{(i)} = 0 $ and  $\mathbf{w}^{\intercal}\mathbf{x}^{(i)} - y^{(i)} \neq 0 $ respectively. The proof relies on norm duality and sub-differentials. 
\end{proof}
\begin{lemma}
	\label{lem:l2eq0_p}
 For the problem specified in Eq.  \eqref{eq:l2loss_lpconst}, the optimal solution for the case when $\mathbf{w}^{\intercal}\mathbf{x}^{(i)} - y^{(i)} = 0 $ is given by $\mathbf{\Delta}^*  = \pm\epsilon \frac{\mathbf{v}}{\norm{\mathbf{v}}}$ where $\mathbf{v} \in \partial\norm{\mathbf{w}}_*  $ as specified in Definition \ref{def:subdiff}. 
\end{lemma}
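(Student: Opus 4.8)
The plan is to exploit the vanishing of the residual to collapse the quadratic objective and then invoke norm duality. Writing $r = \mathbf{w}^{\intercal}\mathbf{x}^{(i)} - y^{(i)}$, the objective in \eqref{eq:l2loss_lpconst} expands as $(r + \mathbf{w}^{\intercal}\mathbf{\Delta})^2$, and under the hypothesis $r = 0$ this reduces exactly to $(\mathbf{w}^{\intercal}\mathbf{\Delta})^2$. Hence the supremum problem becomes $\sup_{\|\mathbf{\Delta}\| \leq \epsilon} (\mathbf{w}^{\intercal}\mathbf{\Delta})^2$, and since squaring is monotone on the nonnegative reals, maximizing the square is the same as maximizing $|\mathbf{w}^{\intercal}\mathbf{\Delta}|$ over the same feasible ball.

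Next I would apply Definition~\ref{def:dualnorm} directly. By positive homogeneity of the constraint, $\sup_{\|\mathbf{\Delta}\| \leq \epsilon} \mathbf{w}^{\intercal}\mathbf{\Delta} = \epsilon \sup_{\|\mathbf{\Delta}\| \leq 1} \mathbf{w}^{\intercal}\mathbf{\Delta} = \epsilon \|\mathbf{w}\|_*$, and by replacing $\mathbf{\Delta}$ with $-\mathbf{\Delta}$ the corresponding infimum is $-\epsilon \|\mathbf{w}\|_*$. Therefore $\max_{\|\mathbf{\Delta}\| \leq \epsilon} |\mathbf{w}^{\intercal}\mathbf{\Delta}| = \epsilon \|\mathbf{w}\|_*$, and this optimal value is attained at either sign of the maximizer, which is precisely the origin of the $\pm$ in the claimed formula.

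It remains to exhibit an explicit maximizer. Here I would read $\partial\|\mathbf{w}\|_*$ as the sub-differential, in the sense of Definition~\ref{def:subdiff} with the roles of the norm and its dual interchanged, of the dual norm at $\mathbf{w}$, so that every $\mathbf{v} \in \partial\|\mathbf{w}\|_*$ satisfies $\mathbf{v}^{\intercal}\mathbf{w} = \|\mathbf{w}\|_*$ and $\|\mathbf{v}\| \leq 1$. For $\mathbf{w} \neq 0$ the tightness chain $\|\mathbf{w}\|_* = \mathbf{v}^{\intercal}\mathbf{w} \leq \|\mathbf{v}\| \, \|\mathbf{w}\|_*$ forces $\|\mathbf{v}\| = 1$. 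Setting $\mathbf{\Delta}^{\star} = \epsilon \, \mathbf{v}/\|\mathbf{v}\|$ then yields a feasible point with $\mathbf{w}^{\intercal}\mathbf{\Delta}^{\star} = \epsilon \|\mathbf{w}\|_*$, so it attains the supremum; the sign flip $\mathbf{\Delta}^{\star} = -\epsilon \, \mathbf{v}/\|\mathbf{v}\|$ attains the equal-magnitude negative value, and since the true objective is the square, both are optimal. This justifies the $\pm \epsilon \, \mathbf{v}/\|\mathbf{v}\|$ form.

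I expect the main obstacle to be notational rather than mathematical: pinning down exactly what $\partial\|\mathbf{w}\|_*$ denotes, and confirming that its elements are genuine maximizers of the dual-norm program, i.e.\ that the duality inequality $\mathbf{v}^{\intercal}\mathbf{w} \leq \|\mathbf{v}\| \, \|\mathbf{w}\|_*$ is tight exactly on this set. The degenerate case $\mathbf{w} = 0$, where the objective is identically zero and $\mathbf{v}/\|\mathbf{v}\|$ is ill-defined, should be noted separately but is trivial, since then every feasible $\mathbf{\Delta}$ is optimal.
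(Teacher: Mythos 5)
Your proposal is correct and follows essentially the same route as the paper: reduce the quadratic to the linear dual-norm problem $\sup_{\norm{\mathbf{\Delta}}\le\epsilon}\mathbf{w}^{\intercal}\mathbf{\Delta}$, bound it by H\"older's inequality, and exhibit the maximizer as a scaled element of $\partial\norm{\mathbf{w}}_*$, with the $\pm$ coming from the square. You are somewhat more careful than the paper (explicitly verifying $\norm{\mathbf{v}}=1$ for $\mathbf{w}\neq 0$ and flagging the degenerate case $\mathbf{w}=\mathbf{0}$), but the argument is the same.
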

\begin{proof}
	The problem specified in Eq.  \eqref{eq:l2loss_lpconst} reduces to the dual norm problem for $\mathbf{w}^{\intercal}\mathbf{x}^{(i)} - y^{(i)} = 0 $: 
	\begin{align*}
\sup_{\lmod \mathbf{\Delta} \rmod \leq \epsilon} \mathbf{w}^{\intercal} \mathbf{\Delta}
	\end{align*}
	Using Holder's inequality, we can claim: 
	\begin{align*}
	\mathbf{w}^{\intercal} \mathbf{\Delta} \leq \norm{\mathbf{w}}_* \norm{\mathbf{\Delta}} \leq  \epsilon \norm{\mathbf{w}}_*
	\end{align*}
 Therefore to compute $\mathbf{\Delta}^*$, we need to find the solution for 
	\begin{align*}
	\mathbf{\Delta}^*= \{ \mathbf{\Delta}: \ip{\mathbf{\Delta}}{\mathbf{w}} = \norm{\mathbf{w}}, \norm{\mathbf{\Delta}}_* \leq 1 \}
	\end{align*}
	To compute the optimal point, we use the sub-differential of a norm in Definition \ref{def:subdiff} and claim that  $\mathbf{\Delta}^*  \in \partial\norm{\mathbf{w}}_* $. The scaling is done to maintain $\norm{\mathbf{\Delta}^* } \leq \epsilon$. As the original objective function is quadratic, $-\mathbf{\Delta}^* $ can also be a solution.
\end{proof}

\begin{lemma}
	\label{lem:l2neq0_p}
	For the problem specified in Eq.  \eqref{eq:l2loss_lpconst}, the optimal solution for the case when $\mathbf{w}^{\intercal}\mathbf{x}^{(i)} - y^{(i)} \neq 0 $ is:
	\begin{align*}
	\mathbf{\Delta}^{\star} = \epsilon \sign(\mathbf{w}^{\intercal}\mathbf{x}^{(i)} - y^{(i)} ) \frac{\mathbf{v}}{\norm{\mathbf{v}}} 
	\end{align*}
where	$\mathbf{v} \in \partial\norm{\mathbf{w}}_*  $ as specified in Definition \ref{def:subdiff}. 
\end{lemma}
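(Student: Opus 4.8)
The plan is to mirror the structure already used for the $\mathbf{w}^{\intercal}\mathbf{x}^{(i)} - y^{(i)} = 0$ case in Lemma~\ref{lem:l2eq0_p}, but to track the sign of the residual carefully since the quadratic is no longer centered at the origin of the relevant one-dimensional problem. Writing $r := \mathbf{w}^{\intercal}\mathbf{x}^{(i)} - y^{(i)} \neq 0$, the objective of \eqref{eq:l2loss_lpconst} expands to $\left(r + \mathbf{w}^{\intercal}\mathbf{\Delta}\right)^2$. The first step is to reduce the $d$-dimensional problem to a scalar one by setting $s := \mathbf{w}^{\intercal}\mathbf{\Delta}$. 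By Hölder's inequality we have $|s| \leq \norm{\mathbf{w}}_* \norm{\mathbf{\Delta}} \leq \epsilon \norm{\mathbf{w}}_*$, and conversely every value in the interval $[-\epsilon\norm{\mathbf{w}}_*,\, \epsilon\norm{\mathbf{w}}_*]$ is attainable by choosing a scaled subdifferential element, exactly as in Lemma~\ref{lem:l2eq0_p}. Hence the supremum over $\mathbf{\Delta}$ equals $\max_{|s|\le M}(r+s)^2$ with $M := \epsilon\norm{\mathbf{w}}_*$.

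Second, I would solve this one-dimensional problem. Since $(r+s)^2$ is a convex parabola in $s$, its maximum over the compact interval $[-M,M]$ is attained at an endpoint. Comparing the two candidate values through the identity $(r+M)^2 - (r-M)^2 = 4rM$ and using $M>0$, the larger endpoint is selected according to $\sign(r)$: the optimal scalar is $s^{\star} = \sign(r)\,M = \sign(r)\,\epsilon\norm{\mathbf{w}}_*$. This is the only place where the sign of $r$ enters, and it is precisely what produces the $\sign(\mathbf{w}^{\intercal}\mathbf{x}^{(i)} - y^{(i)})$ factor in the claimed formula.

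Third, I would exhibit a feasible $\mathbf{\Delta}^{\star}$ realizing $s^{\star}$. Taking $\mathbf{v} \in \partial\norm{\mathbf{w}}_*$ as in Definition~\ref{def:subdiff}, so that $\mathbf{v}^{\intercal}\mathbf{w} = \norm{\mathbf{w}}_*$ and $\norm{\mathbf{v}} \le 1$, I set $\mathbf{\Delta}^{\star} = \epsilon\,\sign(r)\,\mathbf{v}/\norm{\mathbf{v}}$. A direct check gives $\norm{\mathbf{\Delta}^{\star}} = \epsilon$ (feasibility with the budget saturated) and $\mathbf{w}^{\intercal}\mathbf{\Delta}^{\star} = \epsilon\,\sign(r)\,\norm{\mathbf{w}}_*/\norm{\mathbf{v}}$, which coincides with $s^{\star}$ once one notes that for $\mathbf{w}\neq 0$ the defining conditions force $\norm{\mathbf{v}} = 1$ (the inequality $\norm{\mathbf{w}}_* = \mathbf{v}^{\intercal}\mathbf{w} \le \norm{\mathbf{v}}\,\norm{\mathbf{w}}_*$ must hold with equality). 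Thus $\mathbf{\Delta}^{\star}$ attains the optimal objective value $(r + s^{\star})^2$.

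The main obstacle I anticipate is not the scalar optimization itself but justifying the reduction cleanly: establishing both directions of the claim that the attainable range of $s = \mathbf{w}^{\intercal}\mathbf{\Delta}$ is \emph{exactly} $[-M, M]$, and in particular that the sign-correct endpoint is achievable through a normalized element of $\partial\norm{\mathbf{w}}_*$. This relies on norm--dual-norm duality together with the tightness of Hölder's inequality on the subdifferential, and it is the hinge connecting the one-dimensional argument back to the general-norm setting. The degenerate subcase $\mathbf{w} = \mathbf{0}$ (where the objective is constant in $\mathbf{\Delta}$) can be dismissed separately, since then the residual condition and the formula are vacuous.
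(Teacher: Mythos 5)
Your proposal is correct and follows essentially the same route as the paper: reduce to the scalar quantity $\mathbf{w}^{\intercal}\mathbf{\Delta}$ whose range over the constraint set is $[-\epsilon\norm{\mathbf{w}}_*,\epsilon\norm{\mathbf{w}}_*]$ by norm duality, select the endpoint according to $\sign(\mathbf{w}^{\intercal}\mathbf{x}^{(i)}-y^{(i)})$, and realize it with a normalized element of $\partial\norm{\mathbf{w}}_*$ exactly as in Lemma~\ref{lem:l2eq0_p}. Your explicit endpoint comparison $(r+M)^2-(r-M)^2=4rM$ is a slightly more careful justification of the sign selection than the paper's remark that the residual is a constant, but it is the same argument in substance.
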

\begin{proof}
	Assume $\mathbf{w}^{\intercal}\mathbf{x}^{(i)} - y^{(i)} > 0$, so the objective function to maximize $\left( \mathbf{w}^{\intercal} \mathbf{x}^{(i)} - y^{(i)} + \mathbf{w}^{\intercal} \mathbf{\Delta}  \right)^2 $ can be expressed as maximizing $\mathbf{w}^{\intercal} \mathbf{\Delta} $ because $\mathbf{w}^{\intercal} \mathbf{x}^{(i)} - y^{(i)} $ is a positive constant and not a function of $\mathbf{\Delta} $. Further, we use Lemma \ref{lem:l2eq0_p} to derive the solution as $\epsilon \frac{\mathbf{v}}{\norm{\mathbf{v}}} $. 
	
	Similarly for the other case, assuming $\mathbf{w}^{\intercal}\mathbf{x}^{(i)} - y^{(i)} < 0$, our objective is to minimize $\mathbf{w}^{\intercal} \mathbf{\Delta} $  and hence using Lemma \ref{lem:l2eq0_p} or norm duality, the optimal solution is  $-\epsilon \frac{\mathbf{v}}{\norm{\mathbf{v}}} $. Combining the results from the two cases, we complete the proof of this lemma. 
	\end{proof}

\subsection{Proof of Theorem \ref{thm:logit}}
\label{sec:proof_logitthm}
\begin{proof}
The objective function can be seen as maximizing $\log(1 + \exp(-f(\mathbf{\Delta})))$, where $f(\mathbf{\Delta}) = y^{(i)} \mathbf{w}^{\intercal} \left(\mathbf{x}^{(i)} + \mathbf{\Delta}\right) $. It should be noted that $\log(\cdot)$ and $1 + \exp(.)$ are strictly monotonically increasing functions and hence maximizing $\log(1 + \exp(-f(\mathbf{\Delta})))$ is equivalent to minimizing $f(\mathbf{\Delta})$. This is equivalent to minimizing $y^{(i)} \mathbf{w}^{\intercal} \mathbf{\Delta}$. Using Lemma \ref{lem:l2eq0_p}, the solution can be stated as $\mathbf{\Delta}^*  = - \epsilon y^{(i)} \frac{\mathbf{v}}{\norm{\mathbf{v}}}$ where $\mathbf{v} \in \partial\norm{\mathbf{w}}_*  $.
\end{proof}

\subsection{Hinge Loss}
\label{sec:hinge} 
Machine learning models such as support vector machines (SVM) utilize it for various applications involving classification such as text categorization \citep{joachims1998text} and fMRI image classification \citep{gaonkar2013analytic}. Again, using previously introduced notations, we formulate our problem as: 
\begin{align}
\mathbf{\Delta}^{\star} = \argsup_{\lmod \mathbf{\Delta} \rmod \leq \epsilon} \;\; \max\left(0,1 - y^{(i)} \mathbf{w}^{\intercal} \left( \mathbf{x}^{(i)} + \mathbf{\Delta} \right) \right) 
\label{eq:hinge_lpconst}
\end{align}
where $y^{(i)} \in \left\{-1,1 \right\}$ and $\mathbf{x}^{(i)}, \mathbf{\Delta} \in \mathbb{R}^d$. The optimal solution to this problem is proposed in the following theorem. 
\begin{theorem}
	\label{thm:hinge}
	For any general norm $\|\cdot\|$, and the problem specified in Eq. \eqref{eq:hinge_lpconst}, the optimal solution is given by $\mathbf{\Delta}^{\star} = - \epsilon y^{(i)} \nicefrac{\mathbf{v}}{\norm{\mathbf{v}}} $,
 \if 0 
	\begin{align*}
	\mathbf{\Delta}^{\star} = - \epsilon y^{(i)} \nicefrac{\mathbf{v}}{\norm{\mathbf{v}}} 
	\end{align*}
\fi 
 where $\mathbf{v} \in \partial\norm{\mathbf{w}}_*  $ as specified in Definition \ref{def:subdiff}. 
\end{theorem}
\begin{proof}
	Define the function $f(\mathbf{\Delta}) = y^{(i)} \mathbf{w}^{\intercal} \left( \mathbf{x}^{(i)} + \mathbf{\Delta} \right) $. Hence the optimization problem can be seen as the maximization of $\max(0,1-f(\mathbf{\Delta})$. If we had the maximization problem of $1-f(\mathbf{\Delta})$ instead of $\max(0,1-f(\mathbf{\Delta}))$, the solution would be simple. This can be seen as maximization of $1 - y^{(i)} \mathbf{w}^{\intercal} \left( \mathbf{x}^{(i)} + \mathbf{\Delta} \right)$, which is equivalent to minimizing $y^{(i)} \mathbf{w}^{\intercal}  \mathbf{\Delta} $. Using Lemma \ref{lem:l2eq0_p}, the solution can be claimed as $-\epsilon \frac{\mathbf{v}}{\norm{\mathbf{v}}} $. Due to the presence of the $\max$ function in the hinge loss, $(\mathbf{x}^{(i)}, y^{(i)})$ satisfying $\quad y^{(i)}\mathbf{w}^{\intercal} \mathbf{x}^{(i)} -  \epsilon \mathbf{w}^{\intercal}\mathbf{v}\geq 1 $ does not affect the objective function.

\end{proof}

\section{Proofs of Section \ref{sec:main}: Main Results} \label{sec:appn_MR_proofs}

\subsection{Proof of Theorem \ref{thm:Gaussmdl_l2}}
\label{sec:proof_Gaussmdl}
\begin{proof}
First we drop the superscript from $\mathbf{x}^{(i)}$ for clarity. For the problem in Eq. \eqref{eq:Graph_lpconst1} with constraint $\lmod \mathbf{\Delta} \rmod_2 \leq \epsilon$, we write the Lagrangian function: 
\begin{align*}
L(\mathbf{\Delta}, \mu) &= -\frac{1}{2} \left(  \mathbf{\Delta}^{\intercal} \mathbf{\Omega}  \mathbf{\Delta} + 2 \mathbf{\Delta}^{\intercal} \mathbf{\Omega} \mathbf{x}\right) + \frac{\mu}{2}\left( \mathbf{\Delta}^{\intercal} \mathbf{\Delta}  - \epsilon^2 \right) \\
&=\frac{1}{2} \mathbf{\Delta}^{\intercal} \left(\mu \mathbf{I} - \mathbf{\Omega}\right) \mathbf{\Delta} - \mathbf{\Delta}^{\intercal} \mathbf{\Omega} \mathbf{x} - \frac{\mu \epsilon^2}{2}
\end{align*}
Note that we need $\mu \mathbf{I} - \mathbf{\Omega} \succeq 0$ for the problem to be convex. If $\mu \mathbf{I} - \mathbf{\Omega} \succeq 0$ does not hold, then $\left(\mu \mathbf{I} - \mathbf{\Omega}\right) $ has at least one negative eigenvalue. Let $\nu$ and $\mathbf{u}$ be the associated eigenvalue and eigenvector. Therefore, the Lagrangian can be simplified to $L(\mathbf{\Delta}, \mu) = \nu \frac{t^2}{2} - t \mathbf{u}^{\intercal} \mathbf{\Omega x} + c$, where $c$ is a constant by choosing $\mathbf{\Delta} = t \mathbf{u}$. Further, we can set $t \to \infty$ if $\mathbf{u}^{\intercal} \mathbf{\Omega x} > 0$, or $t \to -\infty$ otherwise. Thus $g(\mu) = \inf_{\mathbf{\Delta}}L(\mathbf{\Delta},\mu) = -\infty$. 

Assume $\mu \mathbf{I} - \mathbf{\Omega} \succeq 0$, then by the first order stationarity condition: 
\begin{align}
\frac{\partial L}{\partial \mathbf{\Delta}} = \left(\mu \mathbf{I} - \mathbf{\Omega}\right) \mathbf{\Delta}  - \mathbf{\Omega} \mathbf{x} = \mathbf{0} 
\end{align}
which gives the optimal solution: $  \mathbf{\Delta}^{\star}= \left(\mu \mathbf{I} - \mathbf{\Omega}\right)^{-1}\mathbf{\Omega \mathbf{x}}$.

The dual function, assuming $\mu \mathbf{I} - \mathbf{\Omega} \succeq 0$ is: 
\begin{align}
g(\mu) &= L(\mathbf{\Delta}^{\star} , \mu) \nonumber \\
&= \frac{1}{2}\mathbf{x}^{\intercal} \mathbf{\Omega} \left(\mu \mathbf{I} - \mathbf{\Omega}\right)^{-1} \left(\mu \mathbf{I} - \mathbf{\Omega}\right) \left(\mu \mathbf{I} - \mathbf{\Omega}\right)^{-1}\mathbf{\Omega \mathbf{x}} - \mathbf{x}^{\intercal} \mathbf{\Omega} \left(\mu \mathbf{I} - \mathbf{\Omega}\right)^{-1}\mathbf{\Omega} \mathbf{x}  - \frac{\mu \epsilon^2}{2} \nonumber \\
&= \frac{1}{2}\mathbf{x}^{\intercal} \mathbf{\Omega} \left(\mu \mathbf{I} - \mathbf{\Omega}\right)^{-1}\mathbf{\Omega \mathbf{x}} - \mathbf{x}^{\intercal} \mathbf{\Omega} \left(\mu \mathbf{I} - \mathbf{\Omega}\right)^{-1}\mathbf{\Omega} \mathbf{x}  - \frac{\mu \epsilon^2}{2} \nonumber \\
&= -\frac{1}{2}\mathbf{x}^{\intercal} \mathbf{\Omega} \left(\mu \mathbf{I} - \mathbf{\Omega}\right)^{-1}\mathbf{\Omega \mathbf{x}}- \frac{\mu \epsilon^2}{2} \nonumber
\end{align}
Thus, we have 
\begin{align*}
g(\mu) = \begin{cases}
-\frac{1}{2}\mathbf{x}^{\intercal} \mathbf{\Omega} \left(\mu \mathbf{I} - \mathbf{\Omega}\right)^{-1}\mathbf{\Omega \mathbf{x}}- \frac{\mu \epsilon^2}{2}  & \qquad \mu \mathbf{I} - \mathbf{\Omega} \succeq 0 \\
-\infty & \qquad \text{otherwise}
\end{cases}
\end{align*}

Hence the dual problem is: 
\begin{align*}
\max& \quad -\frac{1}{2}\mathbf{x}^{\intercal} \mathbf{\Omega} \left(\mu \mathbf{I} - \mathbf{\Omega}\right)^{-1}\mathbf{\Omega \mathbf{x}}- \frac{\mu \epsilon^2}{2}  \\
\text{such that} & \quad  \mu \mathbf{I} - \mathbf{\Omega} \succeq 0
\end{align*}
This is a one dimensional optimization problem, which can be solved easily. 
\end{proof}

\subsection{Proof of Theorem \ref{thm:Gaussmdl_linf}}
\label{sec:proof_gaussmdllinf}
\begin{proof}
First we drop the superscript from $\mathbf{x}^{(i)}$ for clarity. The constraint $\norm{\mathbf{\Delta}}_{\infty} \leq \epsilon$ can be expressed as $\max_{i \in [p]} |\mathbf{\Delta}_i| \leq \epsilon$, which implies $\max_{i,j \in [p]} |\mathbf{\Delta}_i \mathbf{\Delta}_j | \leq \epsilon^2$. 

The objective function can be expressed as follows by using the notation for inner products of matrices:
\begin{align}
L(\mathbf{\Delta}) &= \left(  \mathbf{\Delta}^{\intercal} \mathbf{\Omega}  \mathbf{\Delta} + 2 \mathbf{\Delta}^{\intercal} \mathbf{\Omega} \mathbf{x}\right) \nonumber \\
& = \left\langle \begin{bmatrix}
\mathbf{\Omega} & \mathbf{\Omega x} \\
\left( \mathbf{\Omega x}\right)^{\intercal} & 0
\end{bmatrix}, \begin{bmatrix}
\mathbf{\Delta}\mathbf{\Delta}^{\intercal} & \mathbf{\Delta} \\
\mathbf{\Delta}^{\intercal} & 1 
\end{bmatrix}\right\rangle \nonumber \\
& = \left\langle \begin{bmatrix}
\mathbf{\Omega} & \mathbf{\Omega x} \\
\left( \mathbf{\Omega x}\right)^{\intercal} & 0
\end{bmatrix}, \begin{bmatrix}
\mathbf{\Delta} \\  1 
\end{bmatrix} 
\begin{bmatrix}
\mathbf{\Delta} \\  1 
\end{bmatrix}^{\intercal}\right\rangle \label{eq:gauss_loss_sim}
\end{align} 
Hence the above problem can be formulated as an SDP: 
\begin{align*}
\max \quad & \left\langle \begin{bmatrix}
\mathbf{\Omega} & \mathbf{\Omega x} \\
\left( \mathbf{\Omega x}\right)^{\intercal} & 0
\end{bmatrix}, \mathbf{Y} \right\rangle \\ 
\text{such that} \quad & \mathbf{Y}_{p+1, p+1} = 1\\
& \mathbf{Y}_{ij} \leq \epsilon^2, \qquad \forall i,j \in [p]  \\
& -\mathbf{Y}_{ij} \leq \epsilon^2, \qquad \forall i,j \in [p]  \\
& \mathbf{Y} \succeq 0
\end{align*}
The optimal $\mathbf{\Delta}$ can be obtained from the last column/row of the optimal solution $\mathbf{Y}$ for the above problem. 
\end{proof}
\subsection{Proof of Theorem \ref{thm:mat_compl_fro}}
\label{sec:proof_matcomplfro}
\begin{proof}
The function is maximized if the adversary spend the budget on the entries in $\mathcal{P}$. First, we write the Lagrangian for  Eq. \eqref{eq:mat_compl_1}: 
\begin{align*}
L(\mathbf{\Delta}, \lambda) &= -\frac{1}{2}\sum_{(i,j) \in \mathcal{P}}\left(\mathbf{X}_{ij} + \mathbf{\Delta}_{ij} - \mathbf{Y}_{ij} \right)^2 + \frac{\lambda}{2} \left( \sum_{(i,j) \in \mathcal{P}} \mathbf{\Delta}^2_{ij}  - \epsilon^2\right) \\
&= -\frac{1}{2} \sum_{(i,j) \in \mathcal{P}} \left(\mathbf{X}_{ij} - \mathbf{Y}_{ij} \right)^2 -  \sum_{(i,j) \in \mathcal{P}} \mathbf{\Delta}_{ij} \left(\mathbf{X}_{ij}- \mathbf{Y}_{ij} \right) + \frac{-1 + \lambda}{2} \left( \sum_{(i,j) \in \mathcal{P}} \mathbf{\Delta}_{ij}^2 \right)-\frac{\epsilon^2}{2}
\end{align*}
For $\lambda < 1$, we can set $\mathbf{\Delta}_{ij} = t\sign(\mathbf{X}_{ij} - \mathbf{Y}_{ij})$, then the Lagrangian simplifies to: 
\begin{align*}
L(\mathbf{\Delta}, \lambda) = -\frac{1}{2} \sum_{(i,j) \in \mathcal{P}} \left(\mathbf{X}_{ij} - \mathbf{Y}_{ij} \right)^2 -  \sum_{(i,j) \in \mathcal{P}} t \left|\mathbf{X}_{ij}- \mathbf{Y}_{ij} \right| + \frac{-1 + \lambda}{2} \left( \sum_{(i,j) \in \mathcal{P}} t^2 \right)-\frac{\epsilon^2}{2}
\end{align*}
Then we can take $t \to \infty$ and therefore $g(\lambda) = \inf_{\mathbf{\Delta}} L(\mathbf{\Delta}, \lambda) = -\infty$. 

\noindent
For $\lambda >  1$, the first order derivative of the Lagrangian is: 
\begin{align*}
    \frac{\partial L}{\partial \mathbf{\Delta}_{ij}} &= -\left(\mathbf{X}_{ij} + \mathbf{\Delta}_{ij} - \mathbf{Y}_{ij} \right) + \lambda  \mathbf{\Delta}_{ij}  = 0
\end{align*}
and thus:
\begin{align*}
\mathbf{\Delta}^{\star}_{ij} = \frac{\left(\mathbf{X}_{ij} - \mathbf{Y}_{ij} \right) }{\lambda -1} \qquad \text{if } \lambda>1 
\end{align*}
 Hence, the dual function can be derived assuming $\lambda > 1$: 
\begin{align*}
g(\lambda) = L(\mathbf{\Delta}^{\star}, \lambda) = -\frac{1}{2} \frac{\lambda}{\lambda - 1}\sum_{(i,j) \in \mathcal{P}}\left(\mathbf{X}_{ij} - \mathbf{Y}_{ij} \right)^2 -\frac{\lambda \epsilon^2}{2}
\end{align*}
For $\lambda = 1$, the Lagrangian $L$ is: 
\begin{align*}
L(\mathbf{\Delta}, 1) & = -\frac{1}{2} \sum_{(i,j) \in \mathcal{P}} \left(\mathbf{X}_{ij} - \mathbf{Y}_{ij} \right)^2 -  \sum_{(i,j) \in \mathcal{P}} \mathbf{\Delta}_{ij} \left(\mathbf{X}_{ij}- \mathbf{Y}_{ij} \right) -\frac{\epsilon^2}{2}
\end{align*}
Note that $g(1) = \inf_{\mathbf{\Delta}} L(\mathbf{\Delta}, 1) = -\infty$ if there exists $(i,j) \in \mathcal{P}$ such that $ \mathbf{X}_{ij} \neq \mathbf{Y}_{ij}$, since we can set $\mathbf{\Delta}_{ij}= t \sign(\mathbf{X}_{ij}- \mathbf{Y}_{ij})$ and take $t \to \infty$.

Thus we have the dual function as: 
\begin{align*}
g(\lambda) = \begin{cases}
-\frac{1}{2} \frac{\lambda}{\lambda - 1}\sum_{(i,j) \in \mathcal{P}}\left(\mathbf{X}_{ij} - \mathbf{Y}_{ij} \right)^2 -\frac{\lambda \epsilon^2}{2} & \qquad \lambda > 1 \\
-\frac{ \epsilon^2}{2} & \qquad \lambda = 1 \text{ and } \mathbf{X}_{ij} = \mathbf{Y}_{ij}, \forall (i,j) \in \mathcal{P}\\
-\infty & \qquad \text{otherwise}
\end{cases}
\end{align*}
The optimal $\lambda$ can be derived by taking the first order derivative 
\begin{align*}
\frac{1}{\epsilon^2}\sum_{(i,j) \in \mathcal{P}} \left(\mathbf{X}_{ij} - \mathbf{Y}_{ij} \right)^2 = (\lambda - 1)^2 
\end{align*}
Therefore 
\begin{align*}
\lambda^{\star} = 1 + \frac{1}{\epsilon} \sqrt{\sum_{(i,j) \in \mathcal{P}} \left(\mathbf{X}_{ij} - \mathbf{Y}_{ij} \right)^2}
\end{align*}
Hence $\mathbf{\Delta}^{\star}_{ij}$ is given by
\begin{align*}
\mathbf{\Delta}^{\star}_{ij} = \epsilon \frac{\left(\mathbf{X}_{ij} - \mathbf{Y}_{ij} \right) }{\sqrt{\sum_{(i,j) \in \mathcal{P}} \left(\mathbf{X}_{ij} - \mathbf{Y}_{ij} \right)^2}}
\end{align*}
If $\mathbf{X}_{ij} = \mathbf{Y}_{ij}, \forall (i,j) \in \mathcal{P}$, then the optimal $\mathbf{\Delta}^{\star}$ can be any solution satisfying $\sum_{(i,j) \in \mathcal{P}} \mathbf{\Delta}^2_{ij} = \epsilon$. 
\end{proof}

\subsection{Proof of Corollary \ref{cor:matrixcompl_linf}}
\begin{proof}
For this case with $|\mathbf{\Delta}_{ij}| \leq \epsilon$ for all  $(i,j) \in \mathcal{P}$, the problem can be solved for each $\mathbf{\Delta}_{ij} $ separately. The problem for each $\mathbf{\Delta}_{ij} $ separately reduces to a particular case of Lemma \ref{lem:l2neq0_p} in Appendix \ref{sec:proof_linreg}.
\end{proof}

\subsection{Proof of Theorem \ref{thm:maxmarginmat_fro}}
\label{sec:proof_maxmargin_fro}
\begin{proof}
The problem without the max function in the objective function is equivalent to: 
\begin{align*}
\sup_{\lmod \mathbf{\Delta} \rmod^2_F \leq \epsilon^2} -\sum_{(i,j) \in \mathcal{P}} \mathbf{\Delta}_{ij} \mathbf{Y}_{ij}
\end{align*}
The optimal solution for the above problem is 
\begin{align}
\mathbf{\Delta}_{ij} = -\mathbf{Y}_{ij} \frac{\epsilon}{\sqrt{\sum\limits_{(i,j) \in \mathcal{P} } \mathbf{Y}^2_{ij} }} \nonumber
\end{align}
for $(i,j) \in \mathcal{P}$. But the optimal solution changes with the introduction of the max term in Eq. \eqref{eq:maxmarginmat_prbdef}. Few of the terms with indices $(i,j) \in \mathcal{P}$ do not affect the objective function if 
\begin{align}
\mathbf{X}_{ij} \mathbf{Y}_{ij} > 1 + \frac{\epsilon}{\sqrt{\sum\limits_{(i,j) \in \mathcal{P} } \mathbf{Y}^2_{ij} }} \nonumber
\end{align}
Hence the budget should be spent on the indices satisfying $\mathbf{X}_{ij} \mathbf{Y}_{ij} < 1 + \frac{\epsilon}{\sqrt{\sum\limits_{(i,j) \in \mathcal{P}_1 } \mathbf{Y}^2_{ij} }}$, where $\mathcal{P}_1 \subseteq \mathcal{P}$ is the modified set. Note that the set $\mathcal{P}_1$ can be derived by sorting $\mathbf{X}_{ij} \mathbf{Y}_{ij} $ for $(i,j) \in \mathcal{P}$, which takes $\mathcal{O}(|\mathcal{P}| \log(|\mathcal{P}|)$ time. 

We further describe the method to compute $\mathcal{P}_1$. Let $\mathbf{Z} = \mathbf{X} \odot \mathbf{Y}$ denote the Hadamard product of $\mathbf{X}$ and $\mathbf{Y}$. We define the mapping $\Pi: \{1,2,\ldots,|\mathcal{P}| \} \rightarrow \mathcal{P}$ which sorts the terms $\mathbf{X}_{ij} \mathbf{Y}_{ij}$ for $(i,j) \in \mathcal{P}$ in ascending order, i.e. $\mathbf{Z}_{\Pi(1)} \leq \mathbf{Z}_{\Pi(2)} \leq \ldots \mathbf{Z}_{\Pi(n)}$,
 where $n = |\mathcal{P}|$. 

Now consider the three cases: 
\begin{enumerate}
    \item Case 1: Assume $\mathbf{Z}_{\Pi(1)} \geq 1 + \epsilon$. Therefore, $\mathbf{Z}_{\Pi(i)} \geq 1 + \epsilon$ for all $i \in [n]$. Hence, any change in $\mathbf{X}_{ij} \mathbf{Y}_{ij} $ does not make any change in the objective function. Therefore, $\mathbf{\Delta}_{ij} = 0$ for all $(i,j) \in \mathcal{P}$ and $\mathcal{P}_1 = \emptyset$.

\item Case 2: Assume $\mathbf{Z}_{\Pi(n)} \leq 1$. All the $\mathbf{X}_{ij} \mathbf{Y}_{ij}$ can be decreased to increase the objective function value. Thus, $\mathbf{\Delta}_{ij} = -\epsilon \frac{\mathbf{Y}_{ij}}{{\sqrt{\sum\limits_{(i,j) \in \mathcal{P} } \mathbf{Y}^2_{ij} }}}$ and $\mathcal{P}_1 = \mathcal{P}$. 

\item Case 3: Other cases which are not satisfied in the above two cases are discussed here. We define the left set $\mathcal{S}_l = \{ \Pi(i) | Z_{\Pi(i)} \leq 1, i \in [n] \}$. 
We also define the middle set $\mathcal{S}_m = \{ \Pi(i) | Z_{\Pi(i)} \in (1,1+\epsilon), i \in [n] \} $ and let $k = |\mathcal{S}_l| > 0$. A few elements of the set $\mathcal{S}_m$ will contribute in decreasing the objective function and we discuss the approach to compute those terms. 

Consider two sub-cases: 
\begin{enumerate}
    \item If $\mathbf{Z}_{\Pi(k+1)} > 1 + \nicefrac{\epsilon}{\sqrt{\sum\limits_{(i,j) \in \mathcal{S}_{k+1} } \mathbf{Y}^2_{ij} }}$, where $\mathcal{S}_{k+1} = \mathcal{S}_l \cup \Pi(k+1)$, then ${\Pi(k+1)}$ should not be included and hence $\mathcal{P}_1 = \mathcal{S}_l$ and each $\mathbf{\Delta}_{ij} = -Y_{ij} \frac{\epsilon}{\sqrt{\sum\limits_{(i,j) \in \mathcal{P}_1 } \mathbf{Y}^2_{ij} }}$. 
    \item If $\mathbf{Z}_{\Pi(k+1)} \leq 1 + \nicefrac{\epsilon}{\sqrt{\sum\limits_{(i,j) \in \mathcal{S}_{k+1} } \mathbf{Y}^2_{ij} }}$, then ${\Pi(k+1)}$ should be included in $\mathcal{P}_1$. 
    Assume such $i^{\star}$ elements can be included in $\mathcal{P}_1$. This can be computed by finding the largest index $i^{\star} \in \{ k+1, k+2, \dots, k+|S_m|\}$ such that $\mathbf{Z}_{\Pi(i)} \leq 1 + \frac{\epsilon}{\sqrt{\sum\limits_{(i,j) \in \mathcal{P}_1 } \mathbf{Y}^2_{ij} }}$, where  $\mathcal{P}_1 = \{ \Pi(i) | i \in \{1, 2, \ldots, i^{\star} \}$ and $\mathbf{\Delta}_{ij} $ can be computed as $-\mathbf{Y}_{ij} \frac{\epsilon}{\sqrt{\sum\limits_{(i,j) \in \mathcal{P}_1 } \mathbf{Y}^2_{ij} }}$. 
\end{enumerate}
\end{enumerate}
\end{proof}

\subsection{Proof for Corollary \ref{cor:maxmarginmat_linf}}
\begin{proof}
	This is a particular case of Lemma \ref{lem:l2neq0_p} in Appendix \ref{sec:proof_linreg}. As all the entries $(i,j) \in \mathcal{P}$ of $\mathbf{\Delta}$ can use the budget $\epsilon$ independently, the problem can be solved for each $\mathbf{\Delta}_{ij}$ separately. 
\end{proof}
\section{Proofs of Section \ref{sec:rademacher}: Novel Rademacher Complexities} 
\label{sec:appn_RC_proof}

\subsection{Proof of Theorem \ref{thm:RC_linreg}}
\begin{proof}
    Using Definition \ref{def:emp_RC}: 
    \begin{align*}
        \hat{\mathfrak{R}}_S(\mathcal{F}) =  \E_{\sigma} \sqbr{\sup_{\norm{\mathbf{w}}_{*} \leq C}\nrbr{ \frac{1}{n} \sum_{i = 1}^n \sigma_i   \left( \mathbf{w}^{\intercal}  \mathbf{x}^{(i)}  - y^{(i)}\right)^2 }}
    \end{align*}
    Similarly, in the case of adversarial perturbation: 
    \begin{align}
        \hat{\mathfrak{R}}_S(\Tilde{\mathcal{F}}) &= \E_{\sigma} \sqbr{\sup_{\norm{\mathbf{w}}_{*} \leq C}\nrbr{ \frac{1}{n} \sum_{i = 1}^n \sigma_i  \sup\limits_{\norm{\mathbf{\Delta}} \leq \epsilon}\left( \mathbf{w}^{\intercal} \left( \mathbf{x}^{(i)} + \mathbf{\Delta} \right) - y^{(i)}\right)^2 }} \nonumber\\
        &= \E_{\sigma} \sqbr{\sup_{\norm{\mathbf{w}}_{*} \leq C}\nrbr{ \frac{1}{n} \sum_{i = 1}^n \sigma_i  \sup\limits_{\norm{\mathbf{\Delta}} \leq \epsilon} \nrbr{  \left( \mathbf{w}^{\intercal} \mathbf{x}^{(i)}  - y^{(i)}\right)^2 + 2  \left( \mathbf{w}^{\intercal} \mathbf{x}^{(i)}  - y^{(i)}\right) \mathbf{w}^{\intercal} \mathbf{\Delta} + \nrbr{\mathbf{w}^{\intercal} \mathbf{\Delta}}^2}} } \label{eq:RC_linreg_eq}
    \end{align}
    For the upper bound, we use $\sup(A+B) \leq \sup(A) + \sup(B)$ to claim: 
    \begin{align*}
        \hat{\mathfrak{R}}_S(\Tilde{\mathcal{F}}) \leq & \E_{\sigma} \sqbr{\sup_{\norm{\mathbf{w}}_{*} \leq C}\nrbr{ \frac{1}{n} \sum_{i = 1}^n \sigma_i  \sup\limits_{\norm{\mathbf{\Delta}} \leq \epsilon} \nrbr{  \left( \mathbf{w}^{\intercal} \mathbf{x}^{(i)}  - y^{(i)}\right)^2  }}} \\ 
        & + \E_{\sigma} \sqbr{\sup_{\norm{\mathbf{w}}_{*} \leq C}\nrbr{ \frac{1}{n} \sum_{i = 1}^n \sigma_i  \sup\limits_{\norm{\mathbf{\Delta}} \leq \epsilon} \nrbr{   2  \left( \mathbf{w}^{\intercal} \mathbf{x}^{(i)}  - y^{(i)}\right) \mathbf{w}^{\intercal} \mathbf{\Delta} + \nrbr{\mathbf{w}^{\intercal} \mathbf{\Delta}}^2}} } \\
        &= \hat{\mathfrak{R}}_S(\mathcal{F}) + \E_{\sigma} \sqbr{\sup_{\norm{\mathbf{w}}_{*} \leq C}\nrbr{ \frac{1}{n} \sum_{i = 1}^n \sigma_i  \sup\limits_{\norm{\mathbf{\Delta}} \leq \epsilon} \nrbr{ 2  \left( \mathbf{w}^{\intercal} \mathbf{x}^{(i)}  - y^{(i)}\right) \mathbf{w}^{\intercal} \mathbf{\Delta} + \nrbr{\mathbf{w}^{\intercal} \mathbf{\Delta}}^2}} } 
    \end{align*}
    Substituting the optimal $\mathbf{\Delta}^{\star}$ derived in Theorem \ref{thm:reg_sqerr} in the above equation, we claim: 
    \begin{align*}
        \hat{\mathfrak{R}}_S(\Tilde{\mathcal{F}}) &= \hat{\mathfrak{R}}_S(\mathcal{F}) + \E_{\sigma} \sqbr{\sup_{\norm{\mathbf{w}}_{*} \leq C}\nrbr{ \frac{1}{n} \sum_{i = 1}^n \sigma_i  \nrbr{ 2 \epsilon \left| \mathbf{w}^{\intercal} \mathbf{x}^{(i)}  - y^{(i)}\right|    \norm{\mathbf{w}}_* \mathds{1}_{(\mathbf{w}^{\intercal} \mathbf{x}^{(i)}  - y^{(i)}) \neq 0} + \epsilon^2 \norm{\mathbf{w}}_*^2}} } \\
        & \leq \hat{\mathfrak{R}}_S(\mathcal{F}) + \E_{\sigma} \sqbr{\sup_{\norm{\mathbf{w}}_{*} \leq C}\nrbr{ \frac{1}{n} \sum_{i = 1}^n  2\sigma_i   \epsilon \left| \mathbf{w}^{\intercal} \mathbf{x}^{(i)}  - y^{(i)}\right|    \norm{\mathbf{w}}_* } } + \E_{\sigma} \sqbr{\sup_{\norm{\mathbf{w}}_{*} \leq C}\nrbr{ \frac{1}{n} \sum_{i = 1}^n  \sigma_i \epsilon^2 \norm{\mathbf{w}}_*^2 } } \\
        & \leq \hat{\mathfrak{R}}_S(\mathcal{F}) + \E_{\sigma} \sqbr{\sup_{\norm{\mathbf{w}}_{*} \leq C}\nrbr{ \frac{1}{n} \sum_{i = 1}^n  2\sigma_i   \epsilon \left| \mathbf{w}^{\intercal} \mathbf{x}^{(i)}  - y^{(i)}\right| }  \sup_{\norm{\mathbf{w}}_{*} \leq C}\norm{\mathbf{w}}_*  } + \E_{\sigma} \sqbr{\frac{1}{n} \left|\sum_{i = 1}^n  \sigma_i \epsilon^2  \right|\sup_{\norm{\mathbf{w}}_{*} \leq C}\norm{\mathbf{w}}_*^2 } 
            \end{align*}
    where $\mathds{1}_{(.)}$ denotes an indicator function. 

Using Ledoux-Talagrand contraction lemma as $|\cdot|$ is 1-Lipschitz:
\begin{align*}
    \hat{\mathfrak{R}}_S(\Tilde{\mathcal{F}}) & \leq \hat{\mathfrak{R}}_S(\mathcal{F}) + \E_{\sigma} \sqbr{\sup_{\norm{\mathbf{w}}_{*} \leq C}\nrbr{ \frac{1}{n} \sum_{i = 1}^n  2\sigma_i   \epsilon  \nrbr{\mathbf{w}^{\intercal} \mathbf{x}^{(i)}  - y^{(i)}}} C  } + \epsilon^2 C^2\E_{\sigma} \sqbr{\frac{1}{n} \left|\sum_{i = 1}^n  \sigma_i   \right|}  \\
    & \leq \hat{\mathfrak{R}}_S(\mathcal{F}) + \E_{\sigma} \sqbr{\sup_{\norm{\mathbf{w}}_{*} \leq C}\nrbr{ \frac{1}{n} \sum_{i = 1}^n  2\sigma_i   \epsilon \mathbf{w}^{\intercal} \mathbf{x}^{(i)}  } C  } + \epsilon^2 C^2\E_{\sigma} \sqbr{\frac{1}{n} \left|\sum_{i = 1}^n  \sigma_i   \right|}  \\
    & \leq \hat{\mathfrak{R}}_S(\mathcal{F}) + 2\epsilon C \E_{\sigma} \sqbr{\norm{ \frac{1}{n} \sum_{i = 1}^n  \sigma_i \mathbf{x}^{(i)}  }   } + \frac{\epsilon^2 C^2}{\sqrt{n}}   
    \end{align*}
We have used Khintchine’s inequality in the last step. This completes the proof for the upper bound. 

To prove the lower bound, we use $\sup(A+B) \geq \sup(A) - \sup(-B)$ in Eq. \eqref{eq:RC_linreg_eq} to claim: 
\begin{align*}
    \hat{\mathfrak{R}}_S(\Tilde{\mathcal{F}}) &\geq \hat{\mathfrak{R}}_S(\mathcal{F}) - \E_{\sigma} \sqbr{\sup_{\norm{\mathbf{w}}_{*} \leq C}\nrbr{ \frac{1}{n} \sum_{i = 1}^n - \sigma_i  \sup\limits_{\norm{\mathbf{\Delta}} \leq \epsilon} \nrbr{ 2  \left( \mathbf{w}^{\intercal} \mathbf{x}^{(i)}  - y^{(i)}\right) \mathbf{w}^{\intercal} \mathbf{\Delta} + \nrbr{\mathbf{w}^{\intercal} \mathbf{\Delta}}^2}} } \\
    &= \hat{\mathfrak{R}}_S(\mathcal{F}) - \E_{\sigma} \sqbr{\sup_{\norm{\mathbf{w}}_{*} \leq C}\nrbr{ \frac{1}{n} \sum_{i = 1}^n  \sigma_i  \sup\limits_{\norm{\mathbf{\Delta}} \leq \epsilon} \nrbr{ 2  \left( \mathbf{w}^{\intercal} \mathbf{x}^{(i)}  - y^{(i)}\right) \mathbf{w}^{\intercal} \mathbf{\Delta} + \nrbr{\mathbf{w}^{\intercal} \mathbf{\Delta}}^2}} } \\
    &\geq  \hat{\mathfrak{R}}_S(\mathcal{F}) - 2\epsilon C \E_{\sigma} \sqbr{\norm{ \frac{1}{n} \sum_{i = 1}^n  \sigma_i \mathbf{x}^{(i)}  }   } - \frac{\epsilon^2 C^2}{\sqrt{n}}  
\end{align*}
where we have substituted $\sigma_i$ as $-\sigma_i$ in the second step as they have same distribution. The subsequent steps are similar to the proof of the upper bound. 

\end{proof}

\subsection{Proof of Theorem \ref{thm:RC_linclass}}
\begin{proof}
Using Definition \ref{def:emp_RC}, we can claim the following for the case of without any adversarial perturbation: 
\begin{align*}
    \hat{\mathfrak{R}}_S(\mathcal{F}) = \E_{\sigma} \sqbr{\sup_{\norm{\mathbf{w}}_{*} \leq C}\nrbr{ \frac{1}{n} \sum_{i = 1}^n -\sigma_i y^{(i)}\mathbf{w}^{\intercal}\mathbf{x}^{(i)} }} = \E_{\sigma} \sqbr{\sup_{\norm{\mathbf{w}}_{*} \leq C}\nrbr{ \frac{1}{n} \sum_{i = 1}^n \sigma_i y^{(i)}\mathbf{w}^{\intercal}\mathbf{x}^{(i)} }}
\end{align*}
where we have replaced $\sigma_i$ with $-\sigma_i$ as they have same distribution. 

Similarly, by using the definition for the case of adversarial perturbation: 
\begin{align*}
    \hat{\mathfrak{R}}_S(\Tilde{\mathcal{F}}) &= \frac{1}{n} \E_{\sigma} \sqbr{\sup_{\norm{\mathbf{w}}_{*} \leq C}\nrbr{  \sum_{i = 1}^n \sigma_i \sup_{\norm{\mathbf{\Delta}} \leq \epsilon} -y^{(i)}\mathbf{w}^{\intercal}(\mathbf{x}^{(i)} +\mathbf{\Delta}) }} \\
    &=\frac{1}{n} \E_{\sigma} \sqbr{\sup_{\norm{\mathbf{w}}_{*} \leq C}\nrbr{  \sum_{i = 1}^n \sigma_i \min_{\norm{\mathbf{\Delta}} \leq \epsilon} y^{(i)}\mathbf{w}^{\intercal}(\mathbf{x}^{(i)} +\mathbf{\Delta}) }}
\end{align*}
Using the optimal $\mathbf{\Delta}^{\star}$ for inner minimization problem derived in Theorem \ref{thm:logit}: 
\begin{align}
    \hat{\mathfrak{R}}_S(\Tilde{\mathcal{F}}) &=\frac{1}{n} \E_{\sigma} \sqbr{\sup_{\norm{\mathbf{w}}_{*} \leq C}\nrbr{  \sum_{i = 1}^n \sigma_i  y^{(i)}\mathbf{w}^{\intercal}\nrbr{\mathbf{x}^{(i)} - \epsilon y^{(i)} \frac{\mathbf{v}}{\norm{\mathbf{v}}}} }} \qquad \qquad \text{where} \; \mathbf{v} \in \partial\norm{\mathbf{w}}_*  \nonumber \\
    &= \frac{1}{n} \E_{\sigma} \sqbr{\sup_{\norm{\mathbf{w}}_{*} \leq C}\nrbr{  \sum_{i = 1}^n \sigma_i  y^{(i)}\mathbf{w}^{\intercal}\mathbf{x}^{(i)}  - \epsilon \sigma_i \mathbf{w}^{\intercal}\frac{\mathbf{v}}{\norm{\mathbf{v}}}} } \nonumber\\
    &= \frac{1}{n} \E_{\sigma} \sqbr{\sup_{\norm{\mathbf{w}}_{*} \leq C}\nrbr{  \sum_{i = 1}^n \sigma_i  y^{(i)}\mathbf{w}^{\intercal}\mathbf{x}^{(i)}  - \epsilon \sigma_i \norm{\mathbf{w}}_*} } \label{eq:linclass_RC1}\\
    & \leq \E_{\sigma} \sqbr{\sup_{\norm{\mathbf{w}}_{*} \leq C}\nrbr{ \frac{1}{n} \sum_{i = 1}^n \sigma_i y^{(i)}\mathbf{w}^{\intercal}\mathbf{x}^{(i)} }} + \frac{1}{n} \E_{\sigma} \sqbr{ \sup_{\norm{\mathbf{w}}_{*} \leq C} \sum_{i = 1}^n -\epsilon \sigma_i \norm{\mathbf{w}}_* } \nonumber \\
    & \leq \hat{\mathfrak{R}}_S(\mathcal{F}) + \frac{1}{n} \E_{\sigma} \sqbr{  \left|\sum_{i = 1}^n -\epsilon\sigma_i \right| \sup_{\norm{\mathbf{w}}_{*} \leq C}  \norm{\mathbf{w}}_* } \nonumber \\
    & \leq \hat{\mathfrak{R}}_S(\mathcal{F}) + \frac{C\epsilon}{n} \E_{\sigma} \sqbr{  \left|\sum_{i = 1}^n \sigma_i \right|} \leq \hat{\mathfrak{R}}_S(\mathcal{F}) + \frac{C\epsilon}{\sqrt{n}} \nonumber
\end{align}
where we have used Khintchine’s inequality in the last step. This completes the proof for the upper bound.

For the lower bound, we can replace $\sigma_i$ with $-\sigma_i$ in Eq. \eqref{eq:linclass_RC1} as they have the same distribution to obtain: 
\begin{align}
    \hat{\mathfrak{R}}_S(\Tilde{\mathcal{F}}) &= \frac{1}{n} \E_{\sigma} \sqbr{\sup_{\norm{\mathbf{w}}_{*} \leq C}\nrbr{  \sum_{i = 1}^n -\sigma_i  y^{(i)}\mathbf{w}^{\intercal}\mathbf{x}^{(i)}  + \epsilon \sigma_i \norm{\mathbf{w}}_*} } \label{eq:linclass_RCminussigma}
\end{align}
Further, we can replace $\mathbf{w}$ with $-\mathbf{w}$ in the above equation to obtain: 
\begin{align}
    \hat{\mathfrak{R}}_S(\Tilde{\mathcal{F}}) &= \frac{1}{n} \E_{\sigma} \sqbr{\sup_{\norm{\mathbf{w}}_{*} \leq C}\nrbr{  \sum_{i = 1}^n \sigma_i  y^{(i)}\mathbf{w}^{\intercal}\mathbf{x}^{(i)}  + \epsilon \sigma_i \norm{\mathbf{w}}_*} } \label{eq:linclass_RCminusw}
\end{align}
Adding Eq. \eqref{eq:linclass_RCminussigma} and Eq. \eqref{eq:linclass_RCminusw}, we arrive at: 
\begin{align}
    \hat{\mathfrak{R}}_S(\Tilde{\mathcal{F}}) & = \frac{1}{2n} \nrbr{\E_{\sigma} \sqbr{\sup_{\norm{\mathbf{w}}_{*} \leq C}\nrbr{  \sum_{i = 1}^n -\sigma_i  y^{(i)}\mathbf{w}^{\intercal}\mathbf{x}^{(i)}  + \epsilon \sigma_i \norm{\mathbf{w}}_*} }  + \E_{\sigma} \sqbr{\sup_{\norm{\mathbf{w}}_{*} \leq C}\nrbr{  \sum_{i = 1}^n \sigma_i  y^{(i)}\mathbf{w}^{\intercal}\mathbf{x}^{(i)}  + \epsilon \sigma_i \norm{\mathbf{w}}_*} }} \nonumber \\
    &\geq \frac{1}{n} \E_{\sigma} \sqbr{ \sup_{\norm{\mathbf{w}}_{*} \leq C} \sum_{i = 1}^n \epsilon \sigma_i \norm{\mathbf{w}}_*}=  \frac{1}{2n} \E_{\sigma} \sqbr{ \sup_{\norm{\mathbf{w}}_{*} \leq C} \left|\sum_{i = 1}^n \epsilon \sigma_i \right|\norm{\mathbf{w}}_*}  = \frac{C\epsilon}{2n} \E_{\sigma} \sqbr{  \left|\sum_{i = 1}^n \sigma_i \right|}  \geq \frac{C\epsilon}{2\sqrt{2n}} \label{eq:linclass_lb_1}
\end{align}
where we have used Khintchine’s inequality in the last step. 

Similarly, adding Eq. \eqref{eq:linclass_RC1} and Eq. \eqref{eq:linclass_RCminusw}, we arrive at: 
\begin{align}
    \hat{\mathfrak{R}}_S(\Tilde{\mathcal{F}}) & = \frac{1}{2n} \nrbr{\E_{\sigma} \sqbr{\sup_{\norm{\mathbf{w}}_{*} \leq C}\nrbr{  \sum_{i = 1}^n \sigma_i  y^{(i)}\mathbf{w}^{\intercal}\mathbf{x}^{(i)}  - \epsilon \sigma_i \norm{\mathbf{w}}_*} } + \E_{\sigma} \sqbr{\sup_{\norm{\mathbf{w}}_{*} \leq C}\nrbr{  \sum_{i = 1}^n \sigma_i  y^{(i)}\mathbf{w}^{\intercal}\mathbf{x}^{(i)}  + \epsilon \sigma_i \norm{\mathbf{w}}_*} }} \nonumber \\
    &\geq \E_{\sigma} \sqbr{\sup_{\norm{\mathbf{w}}_{*} \leq C}\nrbr{ \frac{1}{n} \sum_{i = 1}^n \sigma_i y^{(i)}\mathbf{w}^{\intercal}\mathbf{x}^{(i)} }} =  \hat{\mathfrak{R}}_S(\mathcal{F}) \label{eq:linclass_lb_2}
\end{align}
Combining the results from Eq. \eqref{eq:linclass_lb_1}  and Eq. \eqref{eq:linclass_lb_2}, we arrive at the desired lower bound result.   
\end{proof}

\subsection{Proof of Theorem \ref{thm:RC_MC}}
\begin{proof}
    Using Definition \ref{def:RC_2d}, the empirical Rademacher complexity for the case without any adversarial perturbation can be defined as: 
    \begin{align*}
        \hat{\mathfrak{R}}_{\mathbf{X}}(\mathcal{F}) = \E_{\sigma} \sqbr{\sup_{\norm{\mathbf{Y}}_* \leq C} \frac{1}{|\mathcal{P}|} \sum_{(i,j) \in \mathcal{P}} \sigma_{ij} \nrbr{\mathbf{X}_{ij} - \mathbf{Y}_{ij} }^2} 
    \end{align*}
Similarly for the Rademacher complexity for the case with adversarial perturbation can be defined as: 
\begin{align}
     \hat{\mathfrak{R}}_{\mathbf{X}}(\Tilde{\mathcal{F}}) &= \E_{\sigma} \sqbr{\sup_{\norm{\mathbf{Y}}_* \leq C} \frac{1}{|\mathcal{P}|} \sum_{(i,j) \in \mathcal{P}} \sigma_{ij} \sup\limits_{\norm{\mathbf{\Delta}_{ij}} \leq \epsilon}  \nrbr{\mathbf{X}_{ij} + \mathbf{\Delta}_{ij} - \mathbf{Y}_{ij} }^2} \nonumber \\ 
    &= \E_{\sigma} \sqbr{\sup_{\norm{\mathbf{Y}}_* \leq C} \frac{1}{|\mathcal{P}|} \sum_{(i,j) \in \mathcal{P}} \sigma_{ij}  \nrbr{\mathbf{X}_{ij} + \epsilon\text{sign}\nrbr{\mathbf{X}_{ij} - \mathbf{Y}_{ij}} - \mathbf{Y}_{ij} }^2} \qquad \qquad \text{(using Corollary \ref{cor:matrixcompl_linf})} \nonumber \\ 
    & =\E_{\sigma} \sqbr{\sup_{\norm{\mathbf{Y}}_* \leq C} \frac{1}{|\mathcal{P}|} \sum_{(i,j) \in \mathcal{P}} \nrbr{\sigma_{ij} \nrbr{\mathbf{X}_{ij} - \mathbf{Y}_{ij} }^2 + 2\epsilon \sigma_{ij}\left|\mathbf{X}_{ij} - \mathbf{Y}_{ij} \right| + \sigma_{ij}\epsilon^2} } \nonumber \\ 
    & =\E_{\sigma} \sqbr{\sup_{\norm{\mathbf{Y}}_* \leq C} \frac{1}{|\mathcal{P}|} \sum_{(i,j) \in \mathcal{P}} \nrbr{\sigma_{ij} \nrbr{\mathbf{X}_{ij} - \mathbf{Y}_{ij} }^2 + 2\epsilon \sigma_{ij}\left|\mathbf{X}_{ij} - \mathbf{Y}_{ij} \right| } } + 0 \label{eq:RC_MC_eq1}  \\ 
    & \leq \E_{\sigma} \sqbr{\sup_{\norm{\mathbf{Y}}_* \leq C} \frac{1}{|\mathcal{P}|} \sum_{(i,j) \in \mathcal{P}} \nrbr{\sigma_{ij} \nrbr{\mathbf{X}_{ij} - \mathbf{Y}_{ij} }^2}} + \E_{\sigma} \sqbr{\sup_{\norm{\mathbf{Y}}_* \leq C} \frac{2\epsilon}{|\mathcal{P}|} \sum_{(i,j) \in \mathcal{P}} \sigma_{ij} \left|\mathbf{Y}_{ij} - \mathbf{X}_{ij} \right| }    
    \nonumber 
\end{align}
Using Ledoux-Talagrand contraction lemma as $|\cdot|$ is 1-Lipschitz:
\begin{align}
    \hat{\mathfrak{R}}_{\mathbf{X}}(\Tilde{\mathcal{F}}) &\leq \hat{\mathfrak{R}}_{\mathbf{X}}(\mathcal{F}) + \E_{\sigma} \sqbr{\sup_{\norm{\mathbf{Y}}_* \leq C} \frac{2 \epsilon}{|\mathcal{P}|}  \sum_{(i,j) \in \mathcal{P}} \sigma_{ij} \nrbr{\mathbf{Y}_{ij} - \mathbf{X}_{ij}} }    
    \nonumber \\
    & \leq \hat{\mathfrak{R}}_{\mathbf{X}}(\mathcal{F}) + \E_{\sigma} \sqbr{\sup_{\norm{\mathbf{Y}}_* \leq C} \frac{2 \epsilon}{|\mathcal{P}|} \sum_{(i,j) \in \mathcal{P}} \sigma_{ij} \mathbf{Y}_{ij}  }   + 0  \nonumber \\
    & \leq       \hat{\mathfrak{R}}_{\mathbf{X}}(\mathcal{F}) + \E_{\sigma}  \sqbr{\frac{2\epsilon}{|\mathcal{P}|} \sup_{\norm{\mathbf{Y}}_* \leq C} \norm{\bm{\sigma}}\norm{\mathbf{Y}}_*} \nonumber \\
        & = \hat{\mathfrak{R}}_{\mathbf{X}}(\mathcal{F}) + \frac{2\epsilon C}{|\mathcal{P}|} \E_{\sigma}  \sqbr{ \norm{\bm{\sigma}}} \nonumber
\end{align}
where $\bm{\sigma}$ is a matrix containing $\sigma_{ij}$ for entries $(i,j) \in \mathcal{P}$ and $\norm{\cdot}_{\infty}$ is the entry wise $\ell_{\infty}$ norm.  This completes the proof for the upper bound. 

For the lower bound, we use $\sup(A+B) \geq \sup(A) -\sup(-B)$ and linearity of expectation in Eq. \eqref{eq:RC_MC_eq1} to claim: 
\begin{align}
    \hat{\mathfrak{R}}_{\mathbf{X}}(\Tilde{\mathcal{F}}) & \geq E_{\sigma} \sqbr{\sup_{\norm{\mathbf{Y}}_* \leq C} \frac{1}{|\mathcal{P}|} \sum_{(i,j) \in \mathcal{P}} \nrbr{\sigma_{ij} \nrbr{\mathbf{X}_{ij} - \mathbf{Y}_{ij} }^2 } } - E_{\sigma} \sqbr{\sup_{\norm{\mathbf{Y}}_* \leq C} \frac{1}{|\mathcal{P}|} \sum_{(i,j) \in \mathcal{P}}  -2\epsilon \sigma_{ij}\left|\mathbf{X}_{ij} - \mathbf{Y}_{ij} \right| } \label{eq:RC_lb_MC1} \\
    & = \hat{\mathfrak{R}}_{\mathbf{X}}(\mathcal{F})  - E_{\sigma} \sqbr{\sup_{\norm{\mathbf{Y}}_* \leq C} \frac{1}{|\mathcal{P}|} \sum_{(i,j) \in \mathcal{P}}  2\epsilon \sigma_{ij}\left|\mathbf{X}_{ij} - \mathbf{Y}_{ij} \right| } \nonumber \\
    & \geq \hat{\mathfrak{R}}_{\mathbf{X}}(\mathcal{F}) - \E_{\sigma} \sqbr{\sup_{\norm{\mathbf{Y}}_* \leq C} \frac{2 \epsilon}{|\mathcal{P}|}  \sum_{(i,j) \in \mathcal{P}} \sigma_{ij} \nrbr{\mathbf{Y}_{ij} - \mathbf{X}_{ij}} }   \geq \hat{\mathfrak{R}}_{\mathbf{X}}(\mathcal{F}) - \frac{2 \epsilon C}{|\mathcal{P}|} \E_{\sigma}  \sqbr{ \norm{\bm{\sigma}}} \nonumber
\end{align}
where we have replaced $\sigma_{ij}$ with $-\sigma_{ij}$ as they have the same distribution in the first step and Ledoux-Talagrand contraction lemma in the second step. The subsequent steps are similar to the proof of the upper bound. 

\end{proof}
\subsection{Proof of Theorem \ref{thm:RC_mmmc}}
\begin{proof}
    Using Definition \ref{def:RC_2d}, the empirical Rademacher complexity for the case without any adversarial perturbation is:
    \begin{align*}
        \hat{\mathfrak{R}}_{\mathbf{X}}(\mathcal{F}) = \E_{\sigma} \sqbr{\sup_{\norm{\mathbf{Y}}_* \leq C}\nrbr{ \frac{1}{|\mathcal{P}|} \sum_{(i,j) \in \mathcal{P}} -\sigma_{ij} \mathbf{X}_{ij} \mathbf{Y}_{ij} }} = \E_{\sigma} \sqbr{\sup_{\norm{\mathbf{Y}}_* \leq C}\nrbr{ \frac{1}{|\mathcal{P}|} \sum_{(i,j) \in \mathcal{P}} \sigma_{ij} \mathbf{X}_{ij} \mathbf{Y}_{ij} }}
    \end{align*}
    where we have replaced $\sigma_{ij} $ with $-\sigma_{ij} $ as they have same distribution. 
    
    Similarly, for the case with adversarial perturbation, we can define Rademacher complexity as: 
    \begin{align}
        \hat{\mathfrak{R}}_{\mathbf{X}}(\Tilde{\mathcal{F}}) &= \E_{\sigma} \sqbr{\sup_{\norm{\mathbf{Y}}_* \leq C}\nrbr{ \frac{1}{|\mathcal{P}|} \sum_{(i,j) \in \mathcal{P}} \sigma_{ij} \sup\limits_{\norm{\mathbf{\Delta}_{ij}} \leq \epsilon}  -\nrbr{\mathbf{X}_{ij} + \mathbf{\Delta}_{ij}} \mathbf{Y}_{ij} }} \nonumber \\ 
        & = \E_{\sigma} \sqbr{\sup_{\norm{\mathbf{Y}}_* \leq C}\nrbr{ \frac{1}{|\mathcal{P}|} \sum_{(i,j) \in \mathcal{P}} \sigma_{ij} \min\limits_{\norm{\mathbf{\Delta}_{ij}} \leq \epsilon}  \nrbr{\mathbf{X}_{ij} + \mathbf{\Delta}_{ij}} \mathbf{Y}_{ij} }} \nonumber \\
        & = \E_{\sigma} \sqbr{\sup_{\norm{\mathbf{Y}}_* \leq C}\nrbr{ \frac{1}{|\mathcal{P}|} \sum_{(i,j) \in \mathcal{P}} \nrbr{\sigma_{ij} \mathbf{X}_{ij} \mathbf{Y}_{ij} + \sigma_{ij} \min\limits_{\norm{\mathbf{\Delta}_{ij}} \leq \epsilon} \mathbf{\Delta}_{ij} \mathbf{Y}_{ij} }}} \nonumber\\
        & = \E_{\sigma} \sqbr{\sup_{\norm{\mathbf{Y}}_* \leq C}\nrbr{ \frac{1}{|\mathcal{P}|} \sum_{(i,j) \in \mathcal{P}} \nrbr{\sigma_{ij} \mathbf{X}_{ij} \mathbf{Y}_{ij} - \epsilon  \sigma_{ij} \left| \mathbf{Y}_{ij} \right| }}}\qquad \text{(using Corollary \ref{cor:maxmarginmat_linf})}\label{eq:mmmc_RCsim}\\    
        & \leq \E_{\sigma} \sqbr{\sup_{\norm{\mathbf{Y}}_* \leq C} \frac{1}{|\mathcal{P}|} \sum_{(i,j) \in \mathcal{P}} \sigma_{ij} \mathbf{X}_{ij} \mathbf{Y}_{ij}  } + \E_{\sigma} \sqbr{\sup_{\norm{\mathbf{Y}}_* \leq C} \frac{\epsilon}{|\mathcal{P}|} \sum_{(i,j) \in \mathcal{P}} -\sigma_{ij} |\mathbf{Y}_{ij}|  } \nonumber \\    
        & \leq       \hat{\mathfrak{R}}_{\mathbf{X}}(\mathcal{F}) + \E_{\sigma}  \sqbr{\frac{\epsilon}{|\mathcal{P}|} \sup_{\norm{\mathbf{Y}}_* \leq C} \norm{\bm{\sigma}} \norm{\mathbf{Y}}_*} \nonumber \\
        & = \hat{\mathfrak{R}}_{\mathbf{X}}(\mathcal{F}) + \frac{\epsilon C}{|\mathcal{P}|} \E_{\sigma}  \sqbr{ \norm{\bm{\sigma}}} 
        \end{align}
        where $\bm{\sigma}$ is a matrix containing $\sigma_{ij}$ for entries $(i,j) \in \mathcal{P}$ and $\norm{\cdot}_{\infty}$ is the entry wise $\ell_{\infty}$ norm. This completes the proof for the upper bound. 

For the lower bound, we use $\sup(A+B) \geq \sup(A) -\sup(-B)$ and linearity of expectation in Eq. \eqref{eq:mmmc_RCsim} to arrive at: 
 \begin{align*}
        \hat{\mathfrak{R}}_{\mathbf{X}}(\Tilde{\mathcal{F}}) &\geq \E_{\sigma} \sqbr{\sup_{\norm{\mathbf{Y}}_* \leq C} \frac{1}{|\mathcal{P}|} \sum_{(i,j) \in \mathcal{P}} \sigma_{ij} \mathbf{X}_{ij} \mathbf{Y}_{ij}  } - \E_{\sigma} \sqbr{\sup_{\norm{\mathbf{Y}}_* \leq C} \frac{\epsilon}{|\mathcal{P}|} \sum_{(i,j) \in \mathcal{P}} \sigma_{ij} |\mathbf{Y}_{ij}|  } \nonumber \\
        &\geq \hat{\mathfrak{R}}_{\mathbf{X}}(\mathcal{F}) - \E_{\sigma}  \sqbr{\frac{\epsilon}{|\mathcal{P}|} \sup_{\norm{\mathbf{Y}}_* \leq C} \norm{\bm{\sigma}} \norm{\mathbf{Y}}_*} \\
        & = \hat{\mathfrak{R}}_{\mathbf{X}}(\mathcal{F}) - \frac{\epsilon C}{|\mathcal{P}|} \E_{\sigma}  \sqbr{ \norm{\bm{\sigma}}} 
\end{align*}
The last two steps are similar to the proof of upper bound. This completes the proof for lower bound. 

\end{proof}

\subsection{Proof of Corollary \ref{corr:RC_mmmc_lbtight}}
\begin{proof}
For the lower bound, we substitute $\sigma_{ij}$ as $-\sigma_{ij}$ in Eq. \eqref{eq:mmmc_RCsim} as both have same distribution to arrive at:
\begin{align}
    \hat{\mathfrak{R}}_{\mathbf{X}}(\Tilde{\mathcal{F}}) = \E_{\sigma} \sqbr{\sup_{\norm{\mathbf{Y}}_1 \leq C}\nrbr{ \frac{1}{|\mathcal{P}|} \sum_{(i,j) \in \mathcal{P}} \nrbr{-\sigma_{ij} \mathbf{X}_{ij} \mathbf{Y}_{ij} + \epsilon  \sigma_{ij} \left| \mathbf{Y}_{ij} \right| }}} \label{eq:mmmc_RCminussigma}
\end{align}
Similarly, we substitute $\mathbf{Y}$ as $-\mathbf{Y}$ in the above equation to arrive at: 
\begin{align}
   \hat{\mathfrak{R}}_{\mathbf{X}}(\Tilde{\mathcal{F}}) = \E_{\sigma} \sqbr{\sup_{\norm{\mathbf{Y}}_1 \leq C}\nrbr{ \frac{1}{|\mathcal{P}|} \sum_{(i,j) \in \mathcal{P}} \nrbr{\sigma_{ij} \mathbf{X}_{ij} \mathbf{Y}_{ij} + \epsilon  \sigma_{ij} \left| \mathbf{Y}_{ij} \right| }}} \label{eq:mmmc_RCminusw}
\end{align}
Adding Eq. \eqref{eq:mmmc_RCsim} and Eq. \eqref{eq:mmmc_RCminusw}, we arrive at: 
\begin{align}
    \hat{\mathfrak{R}}_{\mathbf{X}}(\Tilde{\mathcal{F}}) =& \frac{1}{2} \nrbr{\E_{\sigma} \sqbr{\sup_{\norm{\mathbf{Y}}_1 \leq C}\nrbr{ \frac{1}{|\mathcal{P}|} \sum_{(i,j) \in \mathcal{P}} \nrbr{\sigma_{ij} \mathbf{X}_{ij} \mathbf{Y}_{ij} - \epsilon  \sigma_{ij} \left| \mathbf{Y}_{ij} \right| }}}} \nonumber \\
    & +  \frac{1}{2} \nrbr{\E_{\sigma} \sqbr{\sup_{\norm{\mathbf{Y}}_1 \leq C}\nrbr{ \frac{1}{|\mathcal{P}|} \sum_{(i,j) \in \mathcal{P}} \nrbr{\sigma_{ij} \mathbf{X}_{ij} \mathbf{Y}_{ij} + \epsilon  \sigma_{ij} \left| \mathbf{Y}_{ij} \right| }}}}  \nonumber \\
    & \geq \E_{\sigma} \sqbr{\sup_{\norm{\mathbf{Y}}_1 \leq C}\nrbr{ \frac{1}{|\mathcal{P}|} \sum_{(i,j) \in \mathcal{P}} \sigma_{ij} \mathbf{X}_{ij} \mathbf{Y}_{ij} }} = \hat{\mathfrak{R}}_{\mathbf{X}}(\mathcal{F}) \label{eq:RC_mmmc_lb1}
\end{align}
Similarly, adding Eq. \eqref{eq:mmmc_RCminussigma} and Eq. \eqref{eq:mmmc_RCminusw}, we can claim: 
\begin{align}
     \hat{\mathfrak{R}}_{\mathbf{X}}(\Tilde{\mathcal{F}}) \geq \E_{\sigma} \sqbr{\sup_{\norm{\mathbf{Y}}_1 \leq C}\nrbr{ \frac{1}{|\mathcal{P}|} \sum_{(i,j) \in \mathcal{P}} \epsilon  \sigma_{ij} \left| \mathbf{Y}_{ij} \right| }} = \E_{\sigma}  \sqbr{\frac{\epsilon}{|\mathcal{P}|} \sup_{\norm{\mathbf{Y}}_1 \leq C} \norm{\bm{\sigma}}_{\infty}\norm{\mathbf{Y}}_1} = \frac{\epsilon C}{|\mathcal{P}|} \label{eq:RC_mmmc_lb2}
\end{align}
Combining the results from Eq. \eqref{eq:RC_mmmc_lb1} and Eq. \eqref{eq:RC_mmmc_lb2} leads to the desired lower bound. 

\end{proof}
\section{Activation function decomposed as difference of convex functions in Section \ref{sec:twolayernn}: Two-Layer Neural Networks}
\label{sec:diff_conv}
The first step to solve this optimization problem is constructing the functions $g(\mathbf{\Delta})$ and $h(\mathbf{\Delta})$, which requires decomposing the activation functions as the difference of convex functions.  In order to do this, we decompose various activation functions commonly used in the literature as a difference of two convex functions. The decomposition is done by constructing a linear approximation of the activation function around the point where it changes the curvature. These results are presented in Table \ref{tab:diff_conv}. Activation functions like hyperbolic tangent, inverse tangent, sigmoid, inverse square root, and ELU  change the curvature at $z = 0$, and hence it is defined in piece-wise manner. Other functions like GELU, SiLU, and clipped ReLU change the curvature at two points and hence have three ``pieces''. 

It should be noted that $\sigma_1(z)$ and $\sigma_2(z)$ in Table \ref{tab:diff_conv} are proper continuous convex functions which allows us to use the difference of convex algorithms (DCA) \citep{tao1997convex}. By ReLU and variants in Table \ref{tab:diff_conv}, we refer to ReLU, leaky ReLU, parametrized ReLU, randomized ReLU, and shifted ReLU. The decomposition for scaled exponential linear unit (SELU) \citep{klambauer2017self} can be done as shown for ELU in Table \ref{tab:diff_conv}. 

The first five rows of Table \ref{tab:diff_conv} correspond to $\sigma_2(\mathbf{\Delta}) = 0$. It should be noted that $h(\mathbf{\Delta}) \neq 0$ even if $\sigma_2(\mathbf{\Delta}) = 0$, which is evident from Eq. \eqref{eq:h_def}. Hence the original problem may not be convex, and we may have to use the difference of convex programming approach even for activation functions with $\sigma_2(\mathbf{\Delta}) = 0$.

\begin{table}
	\caption{Activation function decomposed as a difference of convex functions. }
	\label{tab:diff_conv}
	\centering
	{\scriptsize
	\begin{tabular}{@{}l@{\hspace{0.05in}}l@{\hspace{0.05in}}l@{\hspace{0.05in}}l@{\hspace{0.05in}} l@{\hspace{0.05in}}}
		\toprule
		Name&$\sigma(z)$ & $\sigma_1(z)$& $\sigma_2(z)$& Domain \\ 
		\midrule
		Linear& $z$& $z$& $0$ & $\mathbb{R}$\\ 
		Softplus& $\log(1+e^z)$& $\log(1+e^z)$& $0$  & $\mathbb{R}$\\ 
		ReLU \& variants & $\max(0,z)$& $\max(0,z)$ & $0$  & $\mathbb{R}$\\ 
		Bent Identity & $\frac{\sqrt{x^2 + 1} - 1}{2} + x$& $\frac{\sqrt{x^2 + 1} - 1}{2} + x$& $0$  & $\mathbb{R}$\\ 
		\begin{tabular}{@{}c@{}}Inverse square root \\ linear unit\end{tabular}   & $\begin{cases}\frac{z}{1+az^2} \\ z  \end{cases}$& $\begin{cases}\frac{z}{1+az^2} \\ z  \end{cases}$ & $0$ & $\begin{cases} z<0 \\ z \geq 0\end{cases}$ \\ 
		\begin{tabular}{@{}c@{}}Hyperbolic  \\ Tangent\end{tabular} & $\tanh(z) $& $\begin{cases}\tanh(z) - z \\ z \end{cases}$ & $\begin{cases} -z \\ z - \tanh(z) \end{cases}$ & $\begin{cases} z<0 \\ z \geq 0\end{cases}$\\ 
		\begin{tabular}{@{}c@{}}Inverse  \\ Tangent\end{tabular}   & $\arctan(z)$& $\begin{cases}\arctan(z) - z \\ z  \end{cases}$ & $\begin{cases}-z \\ z - \arctan(z)  \end{cases}$ & $\begin{cases} z<0 \\ z \geq 0\end{cases}$\\ 
		Sigmoid& $\frac{1}{1+\exp(-z)}$& $\frac{1}{2} \begin{cases}\tanh(z/2) + 1 - z/2\\ z/2 + 1 \end{cases}$& $ \frac{1}{2}  \begin{cases} -z/2 \\ z/2 - \tanh(z/2)  \end{cases}$& $\begin{cases} z<0 \\ z \geq 0\end{cases}$ \\
		\begin{tabular}{@{}c@{}}Gauss Error \\ Function\end{tabular} & $\frac{2}{\sqrt{\pi}}\int_{0}^ze^{-t^2}dt$& $\frac{2}{\sqrt{\pi}} \begin{cases}\int_{0}^ze^{-t^2}dt - z \\ z \end{cases} $ & $\frac{2}{\sqrt{\pi}} \begin{cases} -z  \\ z - \int_{0}^ze^{-t^2}dt \end{cases} $ & $\begin{cases} z<0 \\ z \geq 0\end{cases}$ \\ 
		\begin{tabular}{@{}c@{}c@{}}Gauss Error \\ Linear Unit \\  (GELU)\end{tabular} & $\frac{z}{2}\nrbr{1 + \frac{2}{\sqrt{\pi}}\int_{0}^ze^{-t^2}dt} $& $ \begin{cases}-0.13z - 0.29 \\ \sigma(z) \\0.13z - 0.29 \end{cases} $ & $ \begin{cases} -0.13z - 0.29 - \sigma(z) \\ 0 \\ 0.13z - 0.29 - \sigma(z) \end{cases} $  & $\begin{cases} z<-\sqrt{2} \\-\sqrt{2} \leq z \leq \sqrt{2} \\ z \geq \sqrt{2}\end{cases}$   \\ 
		\begin{tabular}{@{}c@{}}Inverse square \\ root Unit \end{tabular} & $\frac{z}{\sqrt{1+az^2}}$& $\begin{cases}\frac{z}{1+az^2} -z\\ z  \end{cases}$ & $\begin{cases}-z \\ z - \frac{z}{1+az^2}  \end{cases}$ & $\begin{cases} z<0 \\ z \geq 0\end{cases}$ \\
	\begin{tabular}{@{}c@{}}Sigmoid Linear \\ Unit  (SiLU)\end{tabular} & $\frac{z}{1+\exp(-z)}$& $ \begin{cases}-0.01z - 0.44 \\ \sigma(z) \\1.01z - 0.44 \end{cases} $ & $ \begin{cases} -0.01z - 0.44 - \sigma(z) \\ 0 \\ 1.01z - 0.44 - \sigma(z) \end{cases} $  & $\begin{cases} z<-2.4 \\-2.4 \leq z \leq 2.4 \\ z \geq 2.4\end{cases}$   \\ 
	\begin{tabular}{@{}c@{}}Exponential Linear \\ Unit (ELU) \end{tabular} & $\begin{cases} \alpha (e^z - 1) \\ z\end{cases}$& $\begin{cases}\alpha (e^z - 1) \\ \alpha z  \end{cases}$ & $\begin{cases}0 \\ (\alpha-1)z  \end{cases}$ & $\begin{cases} z<0 \\ z \geq 0\end{cases}$ \\
	Clipped RELU& $\begin{cases} 0 \\ z \\ a \end{cases}$& $\max(z,0)$& $\max(z-a, 0)$ & $\begin{cases}z \leq 0 \\ 0\leq z\leq a \\ z \geq a \end{cases}$\\ 
	\bottomrule
	\end{tabular}}
\end{table}

\section{Additional Details for Section \ref{sec:exp}: Real-World Experiments}
\label{sec:appn_exp}
In this section, we validate the proposed method for various ML problems. Our intention in this work is not motivated towards designing the most optimal algorithm which solves all the ML problems discussed previously. Rather, we demonstrate the practical utility of our novel approach that can integrate our plug-and-play solution with widely accepted ML models. As a generic optimization algorithm, we chose to work with projected gradient descent in all the experiments, which can be replaced with any other variant of the user's choice. 

We compare our proposed approach with two other training approaches.  The first comparison method is the classical approach of training without any perturbation, meaning $\mathbf{\Delta}^{\star} = \mathbf{0}$ in Algorithm \ref{alg:main}. The second approach is to directly use a random $\mathbf{\Delta}^{\star}$ without solving the optimization problem. These two comparison methods are referred as ``No error'' and ``Random'' in the columns of Table \ref{tab:all_results} and Table \ref{tab:all_exp_time_new}. We also compared the proposed approach against the fast gradient sign method (FGSM) \citep{goodfellow2014explaining}, projected gradient descent (PGD), and TRADES \citep{zhang2019theoretically}. While we could have used $\epsilon$-perturbed test data (coming from the same distribution of the training data) with some synthetic adversary, we preferred to use a more challenging scenario: test data coming from a different distribution than the training data. Our proposed approach is referred as ``Proposed''. As there are different ML problems, we use different performance metrics for comparison. 

\begin{table}
	\caption{Running time (in seconds) for experiments on real-world data sets for various supervised and unsupervised ML problems. Note that the running times of the proposed approach is comparable (most of the time) to the one of other methods (``No error'', ``Random'', ``FGSM'', ``PGD'',``TRADES'')}.
	\label{tab:all_exp_time_new}
	\centering
	{\scriptsize
	\begin{tabular}{@{}l@{\hspace{0.1in}}l@{\hspace{0.05in}}l@{\hspace{0.05in}}l@{\hspace{0.05in}}l@{\hspace{0.05in}}|l@{\hspace{0.05in}}l@{\hspace{0.05in}}l@{\hspace{0.05in}}l@{\hspace{0.05in}}l@{\hspace{0.05in}}|l@{\hspace{0.05in}}}
		\toprule
		&Problem & Loss function& Dataset & Norm & No error & Random & FGSM & PGD & TRADES &Proposed \\
		\midrule
		\multirow{6}{*}{\begin{turn}{90}  Warm up\end{turn}} &Regression& Squared loss & BlogFeedback & Euclidean & 5.66  & 19.41  & 10.5&33.9 &46.2  & 6.03 \\ 
		&Regression & Squared loss & BlogFeedback & $\ell_{\infty}$& 5.76  & 19.64  & 10.3&33.9 &46.2  & 6.15\\
		&Classification & Logistic loss & ImageNet & Euclidean & 22.69 &  64.12 &  48.3& 145.4&147.2 & 21.8 \\
		&Classification & Logistic loss & ImageNet & $\ell_{\infty}$&  22.64  & 64.39  & 49.7& 158.5&124.2 &21.75\\
		&Classification & Hinge loss & ImageNet & Euclidean & 20.03 &  59.82  & 47.1 & 139.4& 108.1 & 18.29\\
		&Classification & Hinge loss & ImageNet & $\ell_{\infty}$& 20.52 &  59.5  & 49.2& 146.3&127.3 &18.06 \\ [0.1 cm] \hline
           \multirow{8}{*}{\begin{turn}{90}  Main results\end{turn}}  &Classification & NN: ReLU & ImageNet & Euclidean &128.8&  159.4 &   191.2 & 892.7  & 801.1  & 951.7\\
            &Classification & NN: Sigmoid & ImageNet & Euclidean &132.6  &  140.1   & 172.8 & 831.8 & 687.2 &  1606\\
		&Graphical Model & Log-likelihood & TCGA & Euclidean & 5.1  &  5.43  &  8.9& 13.7 &10.1 &5.82\\
		&Graphical Model & Log-likelihood & TCGA & $\ell_{\infty}$ &  3.47  &  3.86 &  5.58  & 10.91  & 11.39 & 6.85\\
		&Matrix Completion & Squared loss & Netflix & Frobenius & 9.85  & 10.93 &  12.7 &15.1 & 15.8 & 10.94\\
		&Matrix Completion & Squared loss & Netflix & Entry-wise $\ell_{\infty}$& 9.82  & 10.42  & 13.7 &13.7 & 14.74 &10.54 \\
		&Max-Margin MC & Hinge loss  &HouseRep & Frobenius & 7.3  &  7.99  &  9.45 &  11.69  & 11.59 &   8.72\\
		&Max-Margin MC & Hinge loss  &HouseRep & Entry-wise $\ell_{\infty}$ &7.31  &  8.19 &   9.72 &  12.98  & 13.43  &  9.88 \\
		\bottomrule
	\end{tabular}}
\end{table}

\paragraph{Regression:} We consider the \href{https://archive.ics.uci.edu/ml/datasets/BlogFeedback}{BlogFeedback} dataset to predict the number of comments on a post. We chose the first 5000 samples for training and the last 5000 samples for testing, where the samples were arranged as per time. Hence, the test data corresponds to the future in correspondence to the training set. We perform training using the three approaches. In our method, we plug $\mathbf{\Delta}^{\star}$ using Theorem \ref{thm:reg_sqerr}. We use mean square error (MSE) to evaluate the performance on a test set, which is reported to be the lowest for our proposed approach for the Euclidean and $\ell_\infty$ norm constraints.

In the experiments for minimizing the loss function, we did $500$ GD iterations with step size at $j^{\text{th}}$ iteration as $\eta_j = \frac{0.1}{\sqrt{j}}$, perturbation budget of $\epsilon = 0.1$ for Euclidean and infinity norm constraint. In the inner maximization defined in Eq. \eqref{eq:l2loss_lpconst}, we perform $5$ iterations of PGD and TRADES. 

\paragraph{Classification:} For this task, we use the \href{https://image-net.org/}{ImageNet} dataset which is available publicly. The dataset contains $1000$ bag-of-words features. For training, we used ``Hungarian pointer'' having $2334$ samples versus ``Lion'' with $1795$ samples. For testing, we used ``Siamese cat'' with  1739 samples versus  ``Tiger''  having 2086 samples. We train the model with the logistic loss by supplying $\mathbf{\Delta}^{\star}$ from Theorem \ref{thm:logit} for our algorithm. We use the accuracy metric  to evaluate the performance on a test set, and it was observed to be the best for our proposed algorithm. The same procedure was applied to test the hinge loss function by supplying  $\mathbf{\Delta}^{\star}$ from Theorem \ref{thm:hinge} and we note that our proposed algorithm performs better than the other approaches for the Euclidean and $\ell_\infty$ norm constraints. The parameters used in training for various models is as follows:
\begin{enumerate}
\item Logistic regression and hinge Loss: In the experiments for minimizing the loss function, we did $500$ GD iterations with step size at $j^{\text{th}}$ iteration as $\eta_j = \frac{0.1}{\sqrt{j}}$, perturbation budget of $\epsilon = 3$ for Euclidean and infinity norm constraint. In the inner maximization defined in Eq. \eqref{eq:logit_lpconst} or Eq. \eqref{eq:hinge_lpconst}, we perform $5$ iterations of PGD and TRADES.
\item Two-layer neural networks: In the experiments for minimizing the loss function, we did $150$ GD iterations with step size at $j^{\text{th}}$ iteration as $\eta_j = \frac{0.1}{\sqrt{j}}$, perturbation budget of $\epsilon = 0.005$ for Euclidean norm constraint. In the inner maximization defined in Eq. \eqref{eq:2layer_gen1}, we perform $15$ iterations of PGD and TRADES. In our CCCP approach, we used $3$ outer GD iterations and $5$ inner GD iterations for fixed $h(\mathbf{\Delta})$.

\end{enumerate}

\paragraph{Gaussian Graphical models:}  For this task, we use the publicly available Cancer Genome Atlas \href{http://tcga-data.nci.nih.gov/tcga/}{(TCGA)} dataset. The dataset contains gene expression data for 171 genes. We chose breast cancer (590 samples) for training and ovarian cancer (590 samples) for testing. The adversarial perturbation for robust learning in the proposed method was supplied from Theorem \ref{thm:Gaussmdl_l2} and Theorem \ref{thm:Gaussmdl_linf}. We compare the training approaches based on the log-likelihood of a test set from the learned precision matrices. The log-likelihood is reported to be the largest for our proposed approach for the Euclidean and entry-wise $\ell_\infty$ norm constraints.

In the experiments for minimizing the loss function, we did $100$ GD iterations with step size at $j^{\text{th}}$ iteration as $\eta_j = \frac{0.1}{\sqrt{j}}$, perturbation budget of $\epsilon = 2$ and $\epsilon = 0.05$ for Euclidean and infinity norm constraint respectively. In the inner maximization defined in Eq. \eqref{eq:Graph_lpconst1}, we perform $5$ iterations of PGD and TRADES. 

\paragraph{Matrix Completion:} For this problem, we use the publicly available \href{https://www.kaggle.com/datasets/netflix-inc/netflix-prize-data}{Netflix Prize} dataset. We chose the 1500 users and 500 movies with most ratings. We randomly assigned the available user/movie ratings to the training and testing sets. As the users can be from any location, age, gender, nationality and movies can also have different genres, language, or actors. Training was performed using the six approaches. In our method, $\mathbf{\Delta}^{\star}$ was chosen using Theorem \ref{thm:mat_compl_fro} and Corollary \ref{cor:matrixcompl_linf}. We use the MSE metric on the test set which is reported to be the lowest for our proposed method. 

In the experiments for minimizing the loss function, we did $100$ GD iterations with step size at $j^{\text{th}}$ iteration as $\eta_j = \frac{0.1}{\sqrt{j}}$, perturbation budget of $\epsilon = 3000$ and $3$ for Frobenius norm constraint and entry-wise $\ell_{\infty}$ constraint respectively. In the inner maximization defined in Eq. \eqref{eq:mat_compl_1}, we perform $5$ iterations of PGD and TRADES. 

\paragraph{Max-Margin Matrix Completion:} We used the votes in the House of Representatives \href{https://www.govtrack.us/congress/votes}{(HouseRep)}  for the first session of the $110^{\text{th}}$ U.S. congress. The HouseRep dataset contains 1176 votes for   435 representatives. A ``Yea'' vote was considered +1, a ``Nay'' vote was considered -1. We randomly assigned the available votes to the training and testing sets. To test our adversarial training framework, the training set had a non-uniform distribution of ``Yea'' or ``Nay'' vote, whereas, for the test set, we chose it to be uniform random. Further, we trained the model using the six approaches. In our method, $\mathbf{\Delta}^{\star}$ was chosen using Theorem \ref{thm:maxmarginmat_fro} and Corollary \ref{cor:maxmarginmat_linf}. We use the percentage of correctly recovered votes on a test set as the metric to compare the three training approaches. 

In the experiments for minimizing the loss function defined in Eq. \eqref{eq:lossfn_maxmargin}, we did $100$ GD iterations with step size at $j^{\text{th}}$ iteration as $\eta_j = \frac{0.1}{\sqrt{j}}$, perturbation budget of $\epsilon = 4500$ and $75$ for Frobenius norm constraint and entry-wise $\ell_{\infty}$ constraint respectively. In the inner maximization defined in Eq. \eqref{eq:maxmarginmat_prbdef}, we perform $5$ iterations of PGD and TRADES. 

All the codes were ran on a machine with an 2.2 GHz Quad-Core Intel Core i7 processor with RAM of 16 GB. The running times for the three training approaches is compared in Table \ref{tab:all_exp_time_new}. It can be seen that the running time for the proposed approach is comparable to the other training approaches for most of the ML problems discussed above.

\end{document}